\definecolor{cvprblue}{rgb}{0.21,0.49,0.74}
\def\eqref#1{equation~\ref{#1}}
\def\1{\bm{1}}
\DeclareMathAlphabet{\mathsfit}{\encodingdefault}{\sfdefault}{m}{sl}
\SetMathAlphabet{\mathsfit}{bold}{\encodingdefault}{\sfdefault}{bx}{n}
\def\gG{{\mathcal{G}}}
\def\gH{{\mathcal{H}}}
\def\gL{{\mathcal{L}}}
\def\gO{{\mathcal{O}}}
\def\gP{{\mathcal{P}}}
\def\gR{{\mathcal{R}}}
\def\gS{{\mathcal{S}}}
\def\gX{{\mathcal{X}}}
\def\gZ{{\mathcal{Z}}}
\def\sD{{\mathbb{D}}}
\def\sR{{\mathbb{R}}}
\newcommand{\E}{\mathbb{E}}
\colorlet{shadecolor}{gray!20}
\definecolor{babyblueeyes}{rgb}{0.63, 0.79, 0.95}
 \definecolor{babyblue}{rgb}{0.54, 0.81, 0.94}
 \definecolor{bluegray}{rgb}{0.4, 0.6, 0.8}
 \definecolor{cadmiumgreen}{rgb}{0.0, 0.42, 0.24}
 \definecolor{camouflagegreen}{rgb}{0.47, 0.53, 0.42}
\definecolor{darkseagreen}{rgb}{0.56, 0.74, 0.56}
\newcommand*{\tikzmk}[1]{\tikz[remember picture,overlay,] \node (#1) {};\ignorespaces}
\newcommand{\boxit}[1]{\tikz[remember picture,overlay]{\node[xshift=-1pt,yshift=-8pt,fill=#1,opacity=.15,fit={(A)($(B)+(.52\linewidth,.8\baselineskip)$)}] {};}\ignorespaces}
\newcommand{\boxitclient}[1]{\tikz[remember picture,overlay]{\node[xshift=-178pt,yshift=-9pt,fill=#1,opacity=.15,fit={(A)($(B)+(1.77\linewidth,1.5\baselineskip)$)}] {};}\ignorespaces}          
\colorlet{mypink}{red!30}
\colorlet{myblue}{cyan!50}
\theoremstyle{plain}
\newtheorem{lemma}{Lemma}
\theoremstyle{definition}
\newtheorem{definition}{Definition}
\newtheorem{assumption}{Assumption}
\theoremstyle{remark}
\newtheorem{remark}{Remark}
\newcommand{\cifar}{CIFAR-10\xspace}
\newcommand{\cifarpointone}{CIFAR-10.1\xspace}
\newcommand{\cifarc}{CIFAR-10-C\xspace}
\newcommand{\localtest}{$\operatorname{Local-test}$\xspace}
\newcommand{\globaltest}{$\operatorname{Global-test}$\xspace}
\newcommand{\office}{Office-Home\xspace}
\newcommand{\chexpert}{CheXpert\xspace}
\newcommand{\name}{\textsc{PerAda}\xspace}
\newcommand{\namewokd}{\textsc{PerAda w/o KD}\xspace}
\newcommand{\fedavg}{\textsc{FedAvg}\xspace}
\newcommand{\fedbn}{\textsc{FedBN}\xspace}
\newcommand{\ditto}{\textsc{Ditto}\xspace}
\newcommand{\apfl}{\textsc{APFL}\xspace}
\newcommand{\pfedme}{\textsc{pFedMe}\xspace}
\newcommand{\mtl}{\textsc{MTL}\xspace}
\newcommand{\lgfedavg}{\textsc{LG-FedAvg}\xspace}
\newcommand{\fedrep}{\textsc{FedRep}\xspace}
\newcommand{\fedsim}{\textsc{FedSim}\xspace}
\newcommand{\fedalt}{\textsc{FedAlt}\xspace}
\newcommand{\fedprox}{\textsc{FedProx}\xspace}
\newcommand{\feddyn}{\textsc{FedDyn}\xspace}
\newcommand{\feddf}{\textsc{FedDF}\xspace}
\newcommand{\standalone}{\textsc{Standalone}\xspace}
\newcommand{\chulin}[1]{\textcolor{black}{#1}}
\definecolor{revcolor}{HTML}{0a46f4}  %
\newcommand{\add}[1]{\textcolor{black}{#1}}
\newcommand\blfootnote[1]{%
  \begingroup
  \renewcommand\thefootnote{}\footnote{#1}%
  \addtocounter{footnote}{-1}%
  \endgroup
}
\title{\name: Parameter-Efficient Federated Learning Personalization with Generalization Guarantees}
\author{Chulin Xie$^{\dagger,\ddagger}$, De-An Huang$^\spadesuit$, Wenda Chu$^\heartsuit$, Daguang Xu$^\spadesuit$,\\  Chaowei Xiao$^{\spadesuit,\P,*}$, Bo Li$^{\dagger,\S,*}$,  Anima Anandkumar$^{\heartsuit,*}$ \\
{$^\dagger$UIUC \quad $^\spadesuit$NVIDIA  \quad $^\heartsuit$Caltech  } 
{\quad $^\P$UW-Madison \quad $^\S$UChicago}
}
\begin{document}

\maketitle

\looseness=-1
\begin{abstract} 
Personalized Federated Learning (pFL) has emerged as a promising solution to tackle data heterogeneity across clients in FL. 
However, existing pFL methods either (1) introduce high computation and communication costs or (2) overfit to local data, which can be limited in scope and vulnerable to evolved test samples with natural distribution shifts.
In this paper, we propose  \textit{\name}, a parameter-efficient pFL framework that reduces communication and computational costs and exhibits superior generalization performance, especially under test-time distribution shifts. 
\name reduces the costs by leveraging the power of pretrained models and only updates and communicates a small number of additional parameters from adapters. \name achieves high generalization by regularizing each client's personalized adapter with a global adapter, while the global adapter uses knowledge distillation to aggregate generalized information from all clients.
Theoretically, we provide generalization bounds of \name, and we prove its convergence to stationary points under non-convex settings.
Empirically, \name demonstrates higher personalized performance (+4.85\% on \chexpert) and enables better out-of-distribution generalization (+5.23\% on CIFAR-10-C) on different datasets across natural and medical domains compared with baselines, while only updating 12.6\% of parameters per model.
Our code is available at \href{https://github.com/NVlabs/PerAda}{https://github.com/NVlabs/PerAda}.

\end{abstract}

\blfootnote{$\ddagger$ work done during an internship at NVIDIA; ${*}$~equal advising.}

\vspace{-2mm}
\section{Introduction}
\label{sec:intro}
 \vspace{-2mm}
 
Federated Learning (FL) allows clients to collaboratively train machine learning models without direct access to their data, especially for privacy-sensitive tasks~\cite{mcmahan2016communication}. 
FL was initially designed to train a single global model for all clients. However, such a one-model-fits-all paradigm is not effective when there is \emph{client heterogeneity}, i.e., the local data are non-IID across clients with heterogeneous features or label distributions~\cite{Li2020On}.
Personalized Federated Learning (pFL)~\cite{mansour2020three} has emerged as an effective solution to  tackle client heterogeneity. 
In pFL, each client trains a personalized model on its local data to ensure personalized performance, while leveraging the aggregated knowledge from other clients to improve its
generalization.

\begin{figure}
\centering
{
\begin{tabular}{ll}
\includegraphics[width=0.38\linewidth]{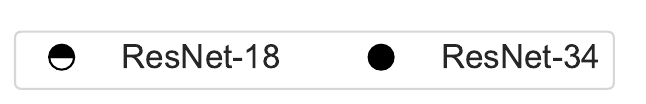}
\vspace{-16pt}\\
\end{tabular}
\vspace{-2mm}
\begin{subtable}{\linewidth}
\centering
\begin{tabular}{c@{}c@{}c@{}c@{}}
        \vspace{-3pt}\\
\includegraphics[height=0.35\linewidth]{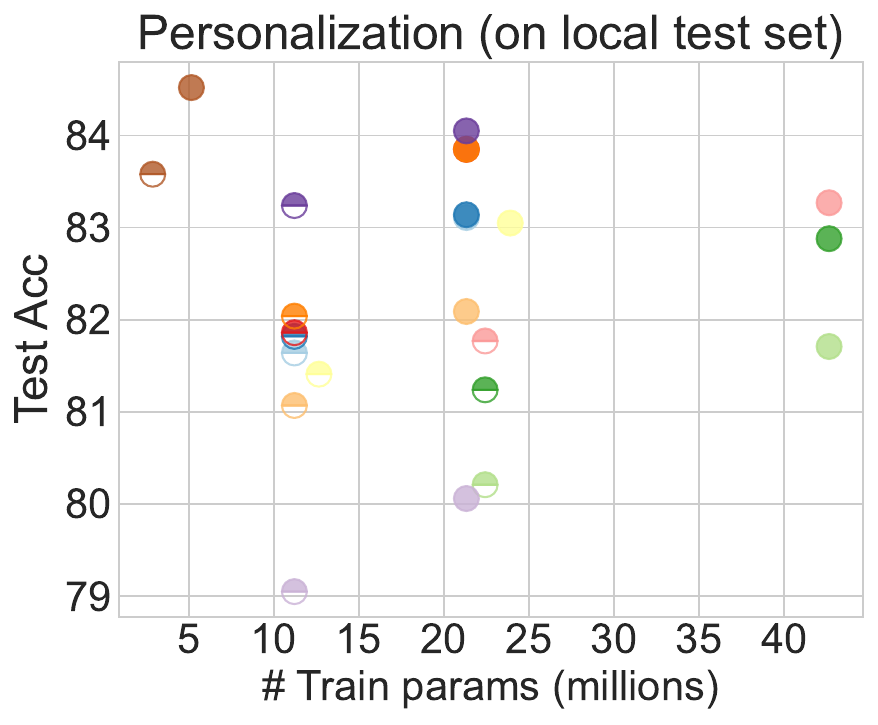}&
\includegraphics[height=0.35\linewidth]{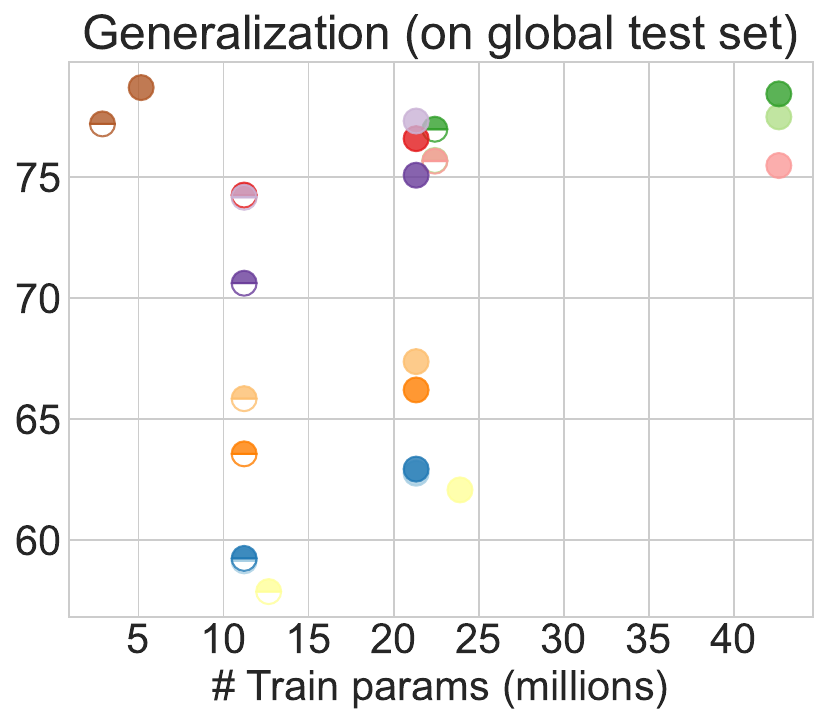}&
\includegraphics[height=0.35\linewidth]{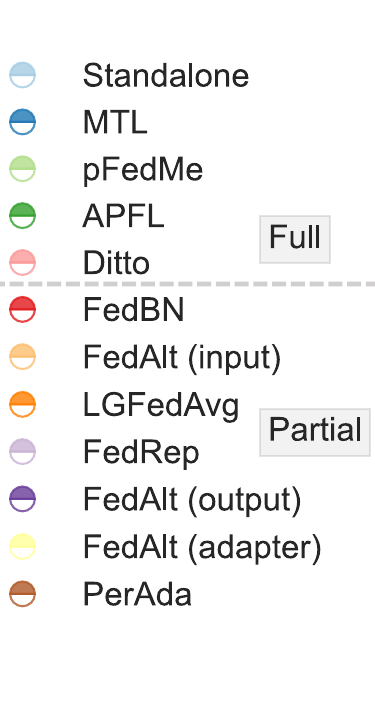}
\\[-1.2ex]
\end{tabular}
\end{subtable}
}
\vspace{-1mm}
\caption{\chulin{\small  Accuracy of personalized models on \office. ``Full''/``Partial'' denotes full/partial model personalization. 
 \name achieves the highest personalized performance and generalization by updating the smallest number of model parameters.  
}}%
\label{fig:modelarc}
\vspace{-8mm}
\end{figure}

Existing works in pFL commonly use \textit{full model personalization}, where each client trains a personalized model as well as a copy of the global model from the server for regularization~\cite{li2021ditto,t2020personalized}.
However, these methods are parameter-expensive, leading to high computational and communicational costs, which is impractical for clients with limited computation resources and network bandwidth~\cite{kairouz2021advances}. Later on, \textit{partial model personalization} alleviates this issue by splitting each client's \textit{one} model into personalized parameters and  shared parameters, where only the  set of shared parameters would be communicated with the server~\cite{pillutla2022federated}.
Nonetheless, these methods tend to overfit more to the local training samples since the set of shared parameters does not encode generalized knowledge well compared to a full global model. This hurts the performance of partially personalized models in real-world  FL deployment, where the incoming local test samples are evolving with natural shifts from the local training distribution~\cite{jiang2022test}, e.g., images taken under varying weather or lighting conditions.

\begin{figure}
    \centering
    \includegraphics[width=0.88\linewidth]{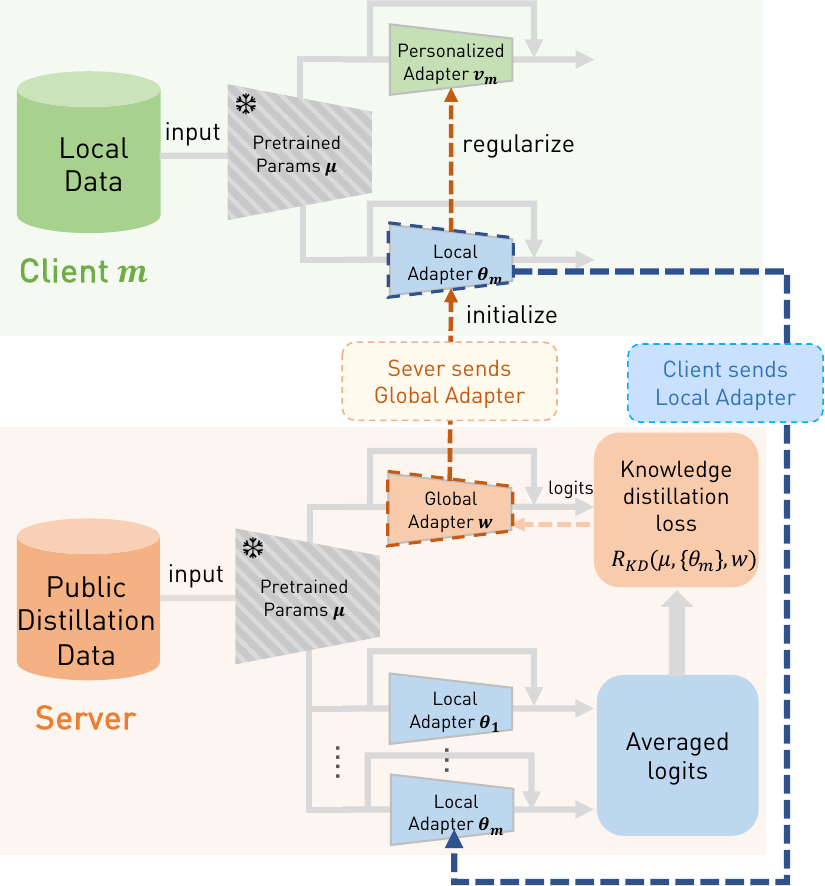} %
         \vspace{-3mm}
    \caption{\small \chulin{Illustration of \name}.   
    }
    \label{fig:overview}
     \vspace{-7mm}
\end{figure}

\noindent\textbf{Our Approach.} In this work, we \chulin{propose \name, a pFL framework}
that \textit{reduces communication and computation costs \chulin{for clients} while personalizing the model and 
 maintaining its generalization to test-time distribution shifts}\chulin{, as shown in \cref{fig:modelarc}}.
\name is a parameter-efficient \textbf{per}sonalized FL framework based on  \chulin{\textbf{Ada}pter~\cite{rebuffi2017learning} and Knowledge Distillation (KD)~\cite{hinton2015distilling}}.
The overview is shown in  \cref{fig:overview}. 

Each client has a pretrained model, a personalized adapter, and a local adapter, where each adapter consists of a small number of additional parameters planted in the pretrained model with skip connections.
At each training round, \textit{to reduce the computation and communication costs}, \name leverages the power of the pretrained model, and \textit{only} updates the personalized adapter and the local adapter using  local data, and sends the local adapter to the server. 
In this way, it limits the number of trainable parameters and only communicates %
the local adapter, instead of the full model.

Then, to \textit{improve the generalization}, 
the server aggregates clients' local adapters (i.e., teachers) via knowledge distillation and trains the global adapter (i.e., student). 
Specifically, it uses the averaged logits from teachers on an unlabeled public distillation dataset as the pseudo-labels to train the student. 
\add{This avoids directly averaging clients' models trained on heterogeneous local data, while enriching the global adapter with the ensemble knowledge from clients' models and mitigating the potential model aggregation drifts caused by heterogeneity.}
After that, the server sends the distilled global adapter back to the clients, which is used to initialize the local adapter and regularize the training of the personalized adapter to prevent overfitting and \textit{improve the generalization}. 
During the testing phase, each client uses the personalized adapter for inference. 

To explain why \name is effective in improving generalization, we theoretically derive its generalization bounds under FL  covariate (or feature) shift non-IID setting~\cite{marfoq2022personalized}. We are the \textit{first} to show that the generalization on a target distribution \chulin{(e.g., potentially with test-time distribution shift)} can be enhanced for both global model and personalized models by KD when the \textit{distillation optimization error is small}, and the distribution of the unlabeled distillation dataset is \textit{close} to the \add{target} distribution.
We also characterize the role of different components in \name on {generalization}, such as client heterogeneity, pretrained model, and the prediction distance between the global  and personalized models.  

In addition, we establish convergence guarantees for \name in general non-convex settings.
The analysis of \name is challenging due to the bi-level optimization between server distillation training and local client training. 
We establish the convergence rates for the global model and personalized models to stationary points and demonstrate the effects of KD and client heterogeneity on the convergence. \add{As far as we know,} these are the \textit{first}-known results for FL convergence under \textit{server distillation}.

Empirically, we conduct extensive evaluations on different datasets, including natural and medical images (\cifar{}, \office, and \chexpert) under both FL covariate-shift and label-shift non-IID settings.  We show that \name achieves competitive personalized accuracy over state-of-the-art pFL methods with only 12.6\% of trainable parameters  while obtaining higher generalization, especially when evaluated on out-of-distribution data. We further show that the benefits of \name extend to differentially private (DP) FL settings and improve the DP-utility trade-offs compared to full model personalization.
In summary,
\begin{itemize}[noitemsep,leftmargin=*]
\item We propose \name, a lightweight pFL framework with personalized adapters that provides personalization while reducing computation/communication costs. We improve the generalization of \name with server-side KD.
\item 
We theoretically analyze the effectiveness of \name, and prove the generalization bounds and the convergence rates for both the global model and personalized models under non-convex settings. 
\item Through extensive experiments, we show that \name achieves higher personalized performance and better generalization than state-of-the-art pFL methods with smaller computation and communication costs. Moreover, \name retains its benefits under differential privacy. 
\end{itemize}

\vspace{-2mm}
\section{Related Work}
 \vspace{-2mm}
\textbf{Full Model Personalization.}
Many pFL approaches require each client to train a personalized model and a global model,
where the global model is used to prevent the personalized model from overfitting. It includes methods based on meta learning~\cite{fallah2020personalized}, model mixture ~\cite{hanzely2020federated,deng2020adaptive,mansour2020three}, global reguarlization~\cite{li2021ditto}, mean regularization~\cite{t2020personalized,hanzely2020federated,hanzely2020lower} and clustering~\cite{sattler2020clustered,ghosh2020efficient}. 
However, these methods induce high costs by training two full models in each client and communicating the full model.  
Another approach is to locally finetune an FL global model (e.g., from \fedavg~\cite{mcmahan2016communication}). While local fine-tuning  yields promising personalized accuracy~\cite{yu2020salvaging,wang2019federated,chen2022on}, it could be prone to catastrophic forgetting and overfitting to its (limited) local data,  sacrificing the generalizability~\cite{jiang2022test,ramasesh2022effect}.

\textbf{Partial Model Personalization} trains one model for each client to reduce the costs, which is partitioned into shared parameters and personalized parameters,
such as personalized feature extractors~\cite{collins2021exploiting}, prediction head~\cite{arivazhagan2019federated,liang2020think,chen2021bridging}, batch normalization~\cite{li2021fedbn}, adapters~\cite{pillutla2022federated}, and adaptively selected parameters~\cite{sun2021partialfed}. 
Nevertheless, the shared parameters do not learn generalized information well compared to a full global model, so the partially personalized models can have inferior generalization ability. To further reduce the costs, Shysheya et al.~\cite{shysheya2023fit} apply parameter-efficient transfer learning techniques to train \fedavg and perform local finetuning. However, it does not specifically address the generalization issues of personalization, which is the focus of our work.

\textbf{Knowledge Distillation (KD) in FL.} KD is a technique that transfers the knowledge from one or multiple teacher models to a student model~\cite{hinton2015distilling}. 
\textit{Ensemble distillation} has been used to tackle data heterogeneity in generic FL,  by refining the \textit{server} model with ensemble knowledge from clients, rather than directly aggregating their model parameters. 
Specifically, the ensemble predictions from clients' models on an unlabeled dataset are used to guide the training of the server model, where the unlabeled dataset can be public data~\cite{lin2020ensemble,chen2020fedbe,li2019fedmd} or generated data~\cite{zhang2022fine}. 
Another line of work leverages \textit{client}-side \textit{local distillation} to transfer global knowledge to local models in generic FL ~\cite{zhu2021data,lee2021preservation} or personalized models in pFL~\cite{zhang2021parameterized,ozkara2021quped}. 
To reduce the load for clients, we focus on parameter-efficient ensemble distillation in the server with public data to train a better global model, and study its effects on  personalized models with novel convergence guarantees and generalization bounds. 

\chulin{\textbf{Parameter-efficient fine-tuning} techniques applied to pretrained large models~\cite{bommasani2021opportunities} have become the prominent practice in transfer learning to save computation costs~\cite{gao2021clip,liu2021pre,lester2021power}. Motivated by the success of Adapter, a low-cost plug-in mounted on pre-trained vision models~\cite{rebuffi2017learning} or large language models~\cite{houlsby2019parameter,li2021large,liu2023communication}, we investigate Adapter in the context of parameter-efficient personalization. Instead of training both the backbone and adapter for pFL as in~\cite{pillutla2022federated}, we treat the adapter parameters as personal and the rest of the model parameters as {frozen}, and further leverage sever-side ensemble distillation to improve pFL performance.}
\vspace{-3mm}
\vspace{-1mm}
\section{Preliminaries and Challenges}\label{sec:prelim}
\vspace{-2mm}
We consider a typical setting of FL with  $M$ clients where each client $m$ has a training dataset $\sD_m=\{\left(x_{m,j}, y_{m,j} \right), j \in [n_m] \}$ with  $n_m$ data samples dawn from its local distribution $\mu_m$. 
Let 
 $f (W, x)$  represents a model that outputs the logit vector given input $x$,  where $W \in \sR^{d}$, denotes its model parameters.
Let the loss function be 
$\ell (f(W, x),y)$,
and  the empirical loss on local data $\sD_m$ associated with client $m$ be 
$ \gL_m(W) :=  \frac{1}{n_m} \sum_{j=1}^{n_m} \ell\left(f\left(W,x_{m,j}\right),y_{m,j}\right)$.

\textbf{Generic FL}  aims to optimize a single global model with all clients' local data with the FL objective:
$ \min_{W} \gL(W) $ where $  \gL(W):=\frac{1}{M}\sum_{m=1}^M \gL_m(W)$.
A standard way to solve it is \fedavg, which iterates between local model training and global model aggregation for multiple communication rounds. 
However, due to the heterogeneous local data distributions among clients, local model would drift away from each other, making the aggregated global model deviate from the optimal solution.

\textbf{Personalized FL} learns a personalized model for each client to perform well on its local data while preventing overfitting by leveraging the knowledge from other clients.
However, achieving the goal is non-trivial due to the following challenges:
(1)  \textbf{High costs}: existing full model personalization studies~\cite{li2021ditto,t2020personalized,fallah2020personalized, hanzely2020federated}, which optimize $ {\min_{W,\{V_m\}}   \frac{1}{M} \sum_{m=1}^M  ( \gL_m (V_m ) + \frac{\lambda}{2}\|V_m - W\|^2  )}$,  require \textit{twice} the memory footprint of the full model at each client by locally updating personalized model $V_m \in \sR^{d}$ and global model $W \in \sR^{d}$ where $\lambda$ is the $\ell_2$ regularization weight controlling the extent of personalization.
(2) \textbf{Limited generalization}: partial model personalization~\cite{collins2021exploiting,liang2020think,chen2021bridging,pillutla2022federated} is more efficient by training a full model  $V_m=(u,v_m)$ at each client and communicating a subset of parameters, where $u\in \sR^{d_u}$ are shared parameters and  $v_m\in \sR^{d_v}$ are personal parameters:
$ {\min_{u,\{v_m\}}   \frac{1}{M} \sum_{m=1}^M   \gL_m ( u, v_m)}.$
However, such a partially personalized model can be \textit{dominated by personal knowledge} with $v_m$ and \textit{poor at encoding generalized knowledge} with the remaining $u$ from global distribution, 
leading to inferior performance under test-time distribution shifts. \chulin{\cref{fig:compare_pfl} depicts such challenges
in existing studies.
}

\vspace{-3mm}
\section{Method} \label{sec:method}
\vspace{-2mm}
Here we introduce the objectives and algorithm for \name. 

\textbf{Personalized and Global Objectives of \name.}
We address the challenges discussed in \cref{sec:prelim} by proposing \name, 
which improves the efficiency of learning personalized adapters and enhances their generalization with regularization and KD.  Specifically, we (1) train the personalized adapter $\{v_m\}$  regularized towards a global adapter $w$ to optimize a personalized objective (\ref{eq:per-obj}), and (2) train a well-generalized $w$ via KD to optimize a global objective (\ref{eq:global-obj}) under non-IID data, where we use the {\textit{alternative} optimization} between client local training of local adapter $\{\theta_m\}$ and server KD training of $w$.

Concretely, we improve the efficiency of partial model personalization with a pretrained model and personalized adapters. Here the personalized adapter consists of a small number of additional parameters with skip connections (in \cref{fig:overview}), which can reduce to the identity function when its parameters are zero~\cite{rebuffi2017learning,zhang2021parameterized}.
Our personalized adapter is trained with regularization to prevent overfitting, yielding the personal objective of each client $m$:
\vspace{-3mm}
\begin{align}
 {\min_{v_m}  P_m(v_m, w):= \gL_m ( u , v_m ) + \frac{\lambda}{2}\|v_m - w\|^2}  \tag{Personal Obj},  \label{eq:per-obj}  
\end{align}
where $u \in \sR^{d_u}$ denotes the fixed pretrained parameters, and $v_m, w \in \sR^{d_a}$ are \textbf{personalized adapter} and \textbf{global adapter}, respectively, with $d_a \ll d_u$.

Since the global adapter $w$ is trained with all client data, regularizing  $v_m$ with $w$ could potentially boost $v_m$'s generalization power. Thus, enhancing $w$'s generalization capacity is crucial for training a personalized model that demonstrates robust generalization as well.
Instead of using \fedavg~\cite{mcmahan2016communication} to learn $w$
as in
regularization-based pFL method~\cite{li2021ditto}, we leverage server-side ensemble distillation~\cite{lin2020ensemble} to enrich the global adapter with ensemble knowledge from clients' models and alleviate model aggregation drifts induced by client heterogeneity, yielding the global objective: 
\vspace{-2mm}
\begin{align} 
 &\min_{w}   \mathcal{R}_{\mathtt{KD}}(u, \{\theta_m\}_{m=1}^M, w ) \tag{Global Obj} \label{eq:global-obj}  \\ 
 & \text{\quad where\quad}  \theta_m= \arg \min_{\theta}  \gL_m (u,\theta) , \text{initialized with $w$}. \nonumber
 \end{align}
Here $\theta_m \in \sR^{d_a}$ is client $m$'s \textbf{locally updated global adapter}, and we call it as \textbf{local adapter} for distinguishment. 
The KD loss is defined as:
$
  \mathcal{R}_{\mathtt{KD}}( u,  \{\theta_m\}_{m=1}^M, w ):=\sum_{j=1}^{n_{\mathtt{aux}}} \ell_{\mathtt{KD}} (\sum_{m=1}^M \frac{f ((u, {\theta}_m), x_j) } {M} ,   f ((u, w) , x_j)),
$
which is the average distillation loss (between the
 averaged logits  of local models and logits of the global model) on an auxiliary (unlabeled) dataset  $\sD_{\mathtt{aux}}=\{x_j\}_{j=1}^{n_{\mathtt{aux}}}$ drawn from the distribution $\mu_{\mathtt{aux}}$. 
Here $\ell_{\mathtt{KD}}(a,b)= \mathtt{KL}(\sigma(a), \sigma(b))$ is  Kullback-Leibler divergence loss where $\sigma$ is softmax function~\cite{hinton2015distilling}.  Compared to server-side KD in generic FL~\cite{lin2020ensemble,chen2020fedbe,zhang2022fine},  we only update adapters instead of full models, which is more efficient for training  and communication. 
  
\vspace{-3mm}
\begin{algorithm}[H]\small
 \caption{\name with \colorbox{red!10}{client} and \colorbox{cyan!10}{server} training}
 \label{algo:fl}
\begin{algorithmic}[1]
\footnotesize
 \STATE {\bfseries Input:}{$M$ clients,  pretrained model parameters $u$, initialized adapters $w^0$, $\{v_m^0\}$, local datasets $\{\sD_m\}$, an unlabeled dataset $\sD_{\mathtt{aux}}$}
 \STATE {\bfseries Output:}{ Personalized adapters 
$v_1^T,\ldots,v_M^{T}$ 
\; }
\FOR {communication round $ t\in [T]$} 
    \STATE $\gS_t \gets $ Server samples $C$ clients from $M$ clients
    \STATE Server {sends} \textbf{global adapter} ${w}^{t}$ to the selected clients
     \tikzmk{A}\FOR {client $ m \in \gS_t$}
        \STATE Client initializes \textbf{personalized adapter}  ${v}_m^{t,0}$ as ${v}_m^{t}$ 
        \FOR{step $ s \in [S]$}
             \STATE  \textcolor{blue}{{\footnotesize\ttfamily // update personalized adapter}}
             \STATE  {\tiny ${v}_m^{t,s+1}\gets$ ${v}_m^{t,s} - \eta_p \left( \widetilde{\nabla} \gL_m\left( u, {v}_m^{t,s}\right) + \lambda \left({v}_m^{t,s} - {w}^{t} \right) \right)$\label{algoline:permodel-update}}
        \ENDFOR
        \STATE Client sets ${v}_m^{t+1}\gets {v}_m^{t,S} $
        \STATE Client initializes \textbf{local adapter}  ${\theta}_m^{t,0}$ as ${w}^{t}$ 
        \FOR{step $ e \in [E]$}
            \STATE  \textcolor{blue}{\footnotesize\ttfamily // update  local adapter}
            \STATE  {\scriptsize${\theta}_m^{t,e+1}\gets{\theta}_m^{t,e} - \eta_l \widetilde{\nabla} \gL_m(u, {\theta}_m^{t,e})$ \label{algoline:localmodel-update}}
        \ENDFOR 
        \STATE Client {sends} \textbf{local adapter} ${\theta}_m^{t+1}\gets {\theta}_m^{t,E}$ to  server 
    \ENDFOR \tikzmk{B}\boxitclient{mypink}
    \tikzmk{A}\STATE Server {initializes}  the \textbf{global adapter} $w^{t,0}$ by averaging
    \STATE {\scriptsize $w^{t,0} \gets \sum_{m \in \gS_t} \frac{1} {|\gS_t|}  {\theta}_m^{t+1}$ }
    \FOR{step $ r \in [R]$}
     \STATE   \textcolor{blue}{{\footnotesize\ttfamily // update global adapter}}
    \STATE {\scriptsize $w^{t,r+1}\gets$$ w^{t,r} - \eta_g \widetilde{\nabla}_w \gR_{\mathtt{KD}}(u,\{\theta^{t+1}_{m}\}_{m \in \gS^t} , w^{t,r})$ \label{algoline:server-update}} 

    \ENDFOR
    \STATE Server sets ${w}^{t+1}\gets {w}^{t,R} $\tikzmk{B}\boxit{myblue}
\ENDFOR
 \end{algorithmic}

\end{algorithm}
\vspace{-5mm}

\begin{figure}
    \centering
    \includegraphics[width=1.02\linewidth]{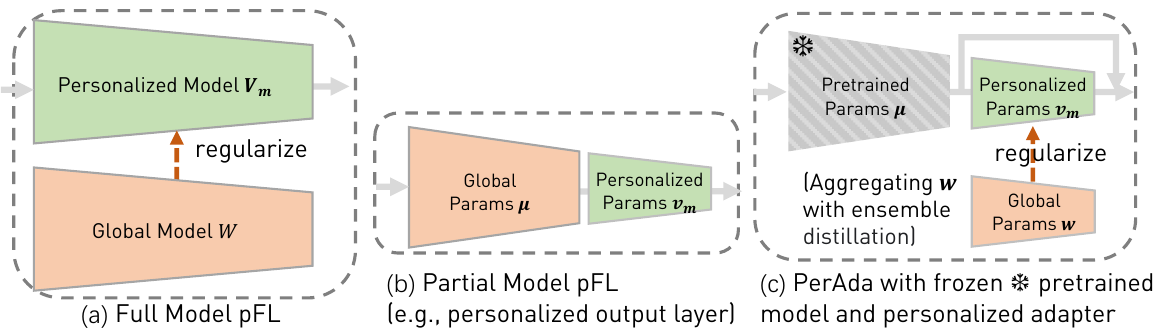} %
         \vspace{-7mm}
    \caption{\small \chulin{Current full model personalization incurs high computation costs by  training two models, whereas existing partial model personalization often falls short in terms of generalizability. By updating adapter {only}, \name achieves a favorable balance between training/communication costs of clients and their pFL performance.}
    }
    \label{fig:compare_pfl}
     \vspace{-7mm}
\end{figure}

\textbf{\name Algorithm.}
Now we introduce the details of iteratively optimizing the personalized objective and the global objective.
\cref{algo:fl} presents our workflow. 
At each communication round $t \in [T]$, the server selects $C$ clients  $\gS_t$ and broadcasts the current global adapter $w^t$. 
\textbf{To optimize personalized objective}, 
each selected client   $m\in \gS_t$  initializes personalized adapter as ${v}_m^{t,0} \gets v_m^t$, and updates it for $S$ steps with learning rate $\eta_p$ and mini-batches $\{\xi_m^{t,s}\}_{s=0}^{S-1}$ sampled from $\sD_m$  (Line \ref{algoline:permodel-update}).
The client sets personalized adapter ${v}_m^{t+1} \gets {v}_m^{t,S}$ after training. 
\textbf{To optimize global objective}, each selected client $m$ initializes local adapter as the received global adapter ${\theta}_m^{t,0} \gets w^t$, and makes local updates for $E$ steps with learning rate $\eta_l$ and mini-batches $\{\xi_m^{t,e}\}_{e=0}^{E-1}$ sampled from $\sD_m$  (Line \ref{algoline:localmodel-update}).
Then client $m$ sends the updated local adapter $ {\theta}_m^{t+1} \gets {\theta}_m^{t,E} $ to server.
After receiving local adapters, the server first initializes the global adapter by parameter-averaging $w^{t,0} \gets {\bar\theta}_m^{t+1}$ where ${\bar\theta}_m^{t+1}:=\sum_{m \in \gS_t} \frac{1} {|\gS_t|}  {\theta}_m^{t+1}$. Then, the server updates global adapter for $R$ steps via knowledge distillation from local adapters (Line ~\ref{algoline:server-update}) with learning rate $\eta_g$ 
and batches $\{\xi^{t,r}\}_{r=1}^{R} $ sampled from $\sD_{\mathtt{aux}}$. The server will send the updated global adapter as $w^{t+1}\gets w^{t,R}$  to clients at the next communication round.

 \vspace{-2mm}
\section{Generalization Bounds of \name}\label{sec:generalizationbounds}
 \vspace{-2mm}
In this section, we analyze the generalization bounds for \name \add{by answering the questions: 
\textit{how do the distillation data distribution and KD optimization impact the generalization of  the global model? How does the global model impact the generalization of personalized models?  }
}

For notation simplicity, we define 
$p_1, \cdots, p_M$ as the personalized hypothesis, where each hypothesis $p_m \in \gP_m: \gX \rightarrow [0, 1]^k$  maps the input  $x \in \gX$ to a \textit{probability vector} over the $k$ classes (i.e.,  softmax outputs).
Similarly, we define global hypothesis $g \in \gG$  and local hypothesis $h_m(x) \in \mathcal{H}_m, \forall m \in [M]$. We call ``hypothesis'' as ``model'' in this section. 
The local dataset  $\sD_m$  of each client $m$ is drawn from the local distribution $\mu_m$, and the distillation dataset $\sD_{\mathtt{aux}}$ of the server is drawn $\mu_{\mathtt{aux}}$. 
We study generalization of the global model and personalized models on  a  \chulin{\textbf{target distribution $\mu$ of interest} (e.g., with distribution shifts)}, by analyzing the effect of  local distributions $\{\mu_m\}$ and distillation distribution $\mu_{\mathtt{aux}}$ \chulin{used in FL training}.
We focus on the generalization bounds under FL covariate shifts following ~\cite{marfoq2022personalized} and defer all proofs to \cref{app:generalization-proof}.

\noindent\textbf{Global Model.} 
Previous KD-based FL generalization bounds  ~\cite{lin2020ensemble,zhu2021data} simply assume a perfect distillation 
(i.e., the global model is the ensemble of local models) 
\textit{which neglects the actual distillation errors and the choice of distillation distribution.}
To take them into account, we define the \textit{ensemble distillation distance} on $n_{\mathtt{aux}}$ points  $\{x_i\}_{i=1}^{n_{\mathtt{aux}}}$ drawn
from ${\mu_{\mathtt{aux}}}$ as:
$\Phi_{\mu_{\mathtt{aux}},n_{\mathtt{aux}}} (h_1,\ldots, h_M; g ):= \frac{1}{n_{\mathtt{aux}}} \sum_{i=1}^{n_{\mathtt{aux}}} \|  g(x_i) -  \frac{1}{M}\sum_{m=1}^M h_m(x_i) \|_1 $ 
which measures the output difference between the global model and the ensemble of local models.
To show  $g$ can have good generalization bounds on $\mu$ with KD,  our main idea is to bound error probabilities of $g$ with the expected distillation distances and errors of local models,
and then bound the errors on $\mu$ by $\mu_m$ based on 
\chulin{prior arts from domain adaptation~\cite{ben2010theory}.
We defer the preliminaries about learning theory to \cref{app:proof-g-model-generalization}.
}
{
\abovedisplayskip=1pt
\abovedisplayshortskip=1pt
\belowdisplayskip=1pt
\belowdisplayshortskip=1pt
\vspace{-2mm}
  \begin{restatable}[Generalization bound of \name global model]{theorem}{globalgen} 
 \label{thm:main_distill}
Consider empirical datasets $\sD \sim \mu, \sD_{\mathtt{aux}} \sim \mu_{\mathtt{aux}}, \sD_{m} \sim \mu_{m}$ 
with $|\sD|=|\sD_m|=n, |\sD_{\mathtt{aux}}|=n_{\mathtt{aux}}$. 
Let $d_m$ be the VC dimension of $\gH_m$,  $\operatorname{Rad}_{n_{\mathtt{aux}}}$ be the empirical Rademacher complexity measured on $n_{\mathtt{aux}}$ samples. 
With probability at least $ 1- \delta$,
for every $h_m \in \gH_m , \forall m \in [M] $ and  $g \in\gG $, we have 
$\underset{(x,y)\sim \mu}{\operatorname{Pr}}\left[\underset{y^{\prime}}{\arg \max } g(x)_{y^{\prime}} \neq y\right]  
\leq  \underset{(x,y)\sim \mu}{2\E} [1- g(x)_y] \leq  
  {\mathcal{O}}( {k^{3/2}} [ \max_{j} (\frac{1}{M} \sum_{m=1}^M  \operatorname{Rad}_{n_{\mathtt{aux}}}(\mathcal{H}_m|_j)	) + \max_{j} \operatorname{Rad}_{n_{\mathtt{aux}}}(\mathcal{G}|_j) ]) 
+\frac{6}{M} \sum\limits_{m=1}^M  (\frac{4}{3}\sqrt{\frac{2d_m \log (2n)+ \log(6M/\delta)}{n}} +  \sqrt{\frac{\log(6M/\delta)}{2n}} + \sqrt{\frac{\log(6/\delta)}{2n_{\mathtt{aux}}}} + {\mathcal{O}}(\operatorname{Rad}_{n}(\mathcal{H}_m))  ) 
+ \frac{ 1}{M} \sum\limits_{m=1}^M   (  \underbrace{2 \mathtt{ERR}(\sD_m, h_m)}_{\text{local empirical risk}}  +  \underbrace{ {\hat d}_{\gH \Delta \gH} (\sD_m, \sD)}_{\text{client heterogeneity}} +  \lambda_m 
  )  +  2 \underbrace{ \Phi_{\mu_{\mathtt{aux}},n_{\mathtt{aux}}} (h_1,\ldots, h_M; g )}_{\text{ensemble distillation distance}} + 4\underbrace{\mathbb{TV} (\mu,\mu_{\mathtt{aux}})}_{\text{TV divergence}}   
  $,
where $\mathtt{ERR}(\sD_m, h_m) =  \frac{1}{n} \sum_{j=1}^{n} \left[1- h_m(x_{m,j})_{y_{m,j}}\right] , \lambda_m = \varepsilon_{\mu_m}(h^*) +  \varepsilon_{\mu}(h^*), h^* := \arg\min_{h \in\gH } \varepsilon_{\mu_m} (h) +  \varepsilon_{\mu} (h)$.
\end{restatable}
}
\vspace{-2mm}
\begin{remark}\label{remark:generalization-global}
\looseness=-1
We discuss key implications of \cref{thm:main_distill}:
(1) \textbf{Ensemble distillation.} $ \Phi_{\mu_{\mathtt{aux}},n_{\mathtt{aux}}}$ captures the distillation error measured on the distillation dataset $\sD_{\mathtt{aux}}$ as minimized in Line \ref{algoline:server-update}.
When $\mu_{\mathtt{aux}}=\mu$, e.g., \add{using data from the target distribution as the distillation dataset,} KD improves the generalization of  $g$ during training by directly minimizing $ \Phi_{\mu_{\mathtt{aux}},n_{\mathtt{aux}}}$. The smaller the distillation distance, the better the generalization.  When $\mu_{\mathtt{aux}}\neq \mu$, KD on $\mu_{\mathtt{aux}}$ decreases $ \Phi_{\mu_{\mathtt{aux}},n_{\mathtt{aux}}}$ while causing additional generalization gap measured by TV divergence $\mathbb{TV}(\mu_{\mathtt{aux}},\mu)$. Compared to without KD, using a distillation dataset from a domain close to $\mu$ with small $\mathbb{TV}(\mu_{\mathtt{aux}},\mu)$ and reducing $ \Phi_{\mu_{\mathtt{aux}},n_{\mathtt{aux}}}$ during KD  can also improve the generalization 
(e.g., \add{when $ \Phi_{\mu_{\mathtt{aux}},n_{\mathtt{aux}}} + 2 \mathbb{TV}(\mu_{\mathtt{aux}},\mu) \leq \Phi_{\mu,n_{\mathtt{aux}}} $}).
We empirically verify \add{the effect of different distillation datasets} in \cref{sec:evaluation-results}.
(2) \textbf{Quality of local models.} The $\mathtt{ERR}(\sD_m, h_m)$ term shows that reducing the empirical risk of local models w.r.t local distributions $\mu_m$ improves the generalization of the global model. We verify in \cref{sec:evaluation-results} that a more powerful pretrained model, which results in higher quality local models, leads to better generalization. 
(3) \textbf{Sample complexity.} More empirical samples during training improve the generalization. 
We further discuss the effect of \textit{client heterogeneity} ${\hat d}_{\gH \Delta \gH} (\sD_m, \sD)$ (i.e., the empirical $\gH$ -divergence between two datasets) and \textit{number of classes} $k$ in \cref{app:generalization-add-results}.
\end{remark}

\vspace{-1mm}
\noindent\textbf{Personalized Models.} We show that personalized model $p_m$  can generalize well on $\mu$ if global model $g$ generalizes well on $\mu$ and $p_m$ has small prediction distance with $g$.
\vspace{-1mm}
{
\abovedisplayskip=1pt
\abovedisplayshortskip=1pt
\belowdisplayskip=1pt
\belowdisplayshortskip=1pt
 \begin{restatable}[Generalization bound of \name personalized model]{theorem}{pergen} 
\label{thm:main_distill_personal}
With probability at least $ 1- \delta$,
for every $p_m \in \gP_m , \forall m \in [M] $, and for every  $g \in\gG $,  we have $\operatorname{Pr}_{(x,y)\sim \mu}\left[\underset{y^{\prime}}{\arg \max }  p_m(x)_{y^{\prime}} \neq y\right]  \leq {2\E_{(x,y)\sim \mu }( 1- g(x)_y) }  +     2\frac{1}{n} \sum_{i=1}^n\min \left\{1,\|p_m(x)-g(x)\|_1\right\}  + 6\sqrt{\frac{\log(2/\delta)}{2n}} + {\mathcal{O}} \left({k^{3/2}}\left [ \max_{j} \operatorname{Rad}_n(\mathcal{P}|_j)  + \max_{j} \operatorname{Rad}_n(\mathcal{G}|_j) \right] \right)$.
\end{restatable}
}
\vspace{-1mm}
\begin{remark} The first term is the population risk of $g$ on $\mu$, which has been upper bounded by \cref{thm:main_distill}.  The second term is the prediction difference between $g$  and personalized models. \chulin{Therefore, the generalization  of personalized model is intrinsically related to the performance of global model.} In \cref{sec:evaluation-results},  we empirically show that moderately increasing the regularization strength $\lambda$ in (\ref{eq:per-obj}) could improve the generalization of $p_m$, by reducing such prediction distance. 
\end{remark}

 \vspace{-4mm}
\section{Convergence Guarantees of \name}\label{sec:convergenceguarantees}
 \vspace{-2mm}
In this section, we aim to provide the convergence analysis.
We outline the analysis challenges for \name, arising from the bi-level optimization between server distillation and local training, as well as the personalization regularized by the global model. Then, we present the convergence analysis for \name global model and personalized model.
For notation simplicity,  we will omit the frozen parameters $u$ and use $w/\theta_m/v_m$ to represent corresponding models.

 To convey the salient ideas, we consider full client participation (i.e., $|\gS_t|=M$) for convergence analysis following~\cite{reddi2021adaptive,ozkara2021quped}; thus, the stochasticity comes from mini-batch samplings during client and server training. Below, we first give several necessary assumptions.
 \vspace{-1mm}
\begin{assumption}(Smoothness).\label{asp:smooth}
 $\gL_m(\theta)$ is $L$-Lipschitz smooth $\forall m\in[M]$
and $\gR(\{\theta_m\}, w)$ is $L_R$-Lipschitz smooth.
\end{assumption}
\vspace{-4mm}
\begin{assumption}(Bounded Variance).\label{asp:stograd}
The stochastic gradients are unbiased 
and variance is bounded $\forall m\in[M]$:  
$\mathbb{E}\|\widetilde{\nabla} \gL_m \left(\theta \right) -  \nabla \gL_m \left(\theta \right) \|^2 \leq \sigma^2$,
$\mathbb{E}\|\widetilde{\nabla}_w \gR( \{\theta_m\} , w) -\nabla_w \gR( \{\theta_m\} , w )  \|^2 \leq \sigma_R^2$.
\end{assumption}
\vspace{-4mm}
\begin{assumption}(Bounded Diversity).\label{asp:boundeddiv}	
The variance of local gradients to global gradient is bounded
\add{$\frac{1}{M}\sum_{m=1}^M  \|   \nabla\gL_m( {w})  - \frac{1}{M}\sum_{i=1}^M \nabla \gL_i( {w})     \|^2 \leq  \bar \gamma $}.
\end{assumption}
\vspace{-4mm}
\begin{assumption}(Bounded Gradients).\label{asp:boundedgrad}
\add{The functions $ \gL_m, \gR, P_m, \forall m\in[M]$ have bounded gradients: $  \left\|  \nabla \gL_m(\theta)\right\|  \leq G$,
$  \left\|  \nabla_w \gR(\{\theta_{m}\},  w)\right\|  \leq G_R$,
$ \left\|  \nabla_{w} P_m(v_m, w)\right\|  \leq G_P$.
}
\end{assumption}
\vspace{-2mm}
We defer more discussions on the assumptions to \cref{app:convergence-add-results}. 
Next, we discuss the challenges and present the main results. All proofs are relegated to \cref{app:convergence-proof}.

\noindent\textbf{Global Model Convergence with Ensemble Distillation.}  
Despite the wide applications of knowledge distillation in FL~\cite{zhu2021data,lee2021preservation,zhang2021parameterized}, its convergence analysis is less explored.
To the best of our knowledge,  there is no convergence guarantee under server-side ensemble distillation~\cite{lin2020ensemble,chen2020fedbe,li2019fedmd,zhang2022fine}.
This lack of research 
is likely because (1) the complexity of bi-level optimization between server distillation for $w^{t}$ and client training for $\{\theta_m^{t}\}$, which incorporates two objectives (i.e., minimizing distillation loss and local loss respectively); (2) at each round, the global model is initialized by averaged local models before distillation, and local models are initialized by the global model before local training. Such mutual initializations intervene in the model updating trajectories of $w^{t}$ and $\{\theta_m^{t}\}$  w.r.t their training objectives, making the convergence even harder to analyze. 
\chulin{On the other hand, it has been empirically shown that ensemble distillation can improve the global model performance by incorporating diverse knowledge from clients (e.g., low $\gL(w^{t})$ measured on all clients' data)~\cite{lin2020ensemble,chen2020fedbe,li2019fedmd,zhang2022fine}. }
Therefore, we aim to \textit{understand the global model convergence w.r.t $\gL(w^{t})$ as a function of ensemble distillation}. 
To overcome the aforementioned challenges, we regard $\{\theta_m^{t}\}$ as the intermediate models to update $w^{t+1}$, and quantify the effects of local client training and server distillation on optimizing FL global objective:

{
\abovedisplayskip=1pt
\abovedisplayshortskip=1pt
\belowdisplayskip=1pt
\belowdisplayshortskip=1pt
 \vspace{-2mm}
 \begin{restatable}[Convergence of \name global model]{theorem}{globalconv}
\label{thm:convergence1}
 \chulin{
Let Assumptions  \ref{asp:smooth} to \ref{asp:boundedgrad} hold, and $ \eta_l = \frac{1}{EL\sqrt{T}} $, $\eta_g = \frac{1}{L_R RT}$, denote  $\bar w^{t,e}=\frac{1}{M}\sum_{m=1}^M{\theta_m^{t,e}}$, 
then the algorithm satisfies}
\chulin{
\begin{small}
\begin{equation}
     \sum_{t=0}^{T-1}\sum_{e=0}^{E-1} \frac{ \mathbb E \|\nabla \gL(\bar w^{t,e})\|^2}{ET} \leq \gO \Big(\frac{L\Delta_{\gL}+\psi_1}{\sqrt T} + \frac{\bar \gamma^2}{T} +  \frac{L^2 \psi_2}{T\sqrt T L_R^2E}\Big), \nonumber
\end{equation}
\end{small}
\vspace{-1mm}
where 
$\Delta_\gL =  \gL(w^{0} ) -  \gL(w^{T})$  ,  $\psi_1 = \frac{\sigma^2}{EM} + \frac{L(G^2+\psi_2)}{EL_R}$, and $\psi_2 =  4 \sigma_R^2 + 32 (3 G_R^2 + \frac{2\sigma_R^2}{R})/T^2 + 2G_R^2$. In particular, $\bar{w}^{t+1,0}=w^{t}$ and  $\bar{w}^{t+1,E-1}=\bar\theta^{t+1}$.}
\end{restatable}
 \vspace{-4mm}
 }
 \begin{remark} 
(1) \textbf{Convergence rate} is $\gO(1/\sqrt{T})$ as it is the dominant term,  matching the rate of the general FL non-convex settings of our interest~\cite{t2020personalized,ozkara2021quped}.
(2) \textbf{Local steps \& distillation steps.} With more local updating steps $E$ and distillation steps $R$, the terms $\psi_1$ and $\psi_2$ decrease. It means that a larger $E$ and $R$ can reduce the required communication rounds $T$ to converge, thus lowering communication costs.
(3) \textbf{Client heterogeneity} is reflected in $\bar\gamma$, whose effect can be mitigated by  larger $T$.
(4)  \chulin{\textbf{Ensemble distillation} is mainly reflected in  $\psi_2$ where $\sigma_R^2$ are inherent data sampling noise when using stochastic gradients~\cite{t2020personalized,fallah2020personalized}, and  $G_R$ is from the bounded gradient assumption for distillation.
The distillation gradient  can be small when the averaged logits of local models (teacher) and the logits of the global model (student) are close (See \cref{eq:nabla_r_closed-form} and more discussion in \cref{app:convergence-add-results}). 
Notably, the convergence bound remains valid for any distillation data, even if it is \textit{out-of-domain}.
}

\end{remark}

\noindent\textbf{Personalized Model Convergence.}  
Regarding personalization,  unlike \cite{t2020personalized}, to preserve generalization, the global model $w^{t}$ of  \name is not updated based on the personalized objective $P(v_m^{t}, w^{t})$.
Thus, it remains unclear \textit{how the global model $w^{t}$ learned from the ensemble distillation impacts the convergence of personalized models w.r.t $P(v_m^{t}, w^{t})$.} 
In \cref{thm:convergence2} (\cref{app:convergence-add-results}), we analyze such impacts and show the convergence rate of personalized models.

\vspace{-3mm}
\section{Experiments}
\vspace{-2mm}
We empirically compare \name to existing pFL methods. We defer the details of experiments and hyperparameter as well as the additional experimental results to \cref{app:ep-details}.

{
\begin{table*}
\vspace{-1mm}
\centering
\caption{\small Parameter-efficiency and averaged test accuracy across all clients' personalized models. \name achieves higher personalized  performance and generalization with a smallest $\#$ of trainable parameters. \textbf{bold}/\underline{Underline} fonts highlight the best/runner-up approach.}
\vspace{-4mm}
\label{tab:performance}
 \resizebox{0.98\textwidth}{!}{%
 \begin{tabular}{c c c c 
 *{12}{S[table-format=2.2]@{\tiny$\pm$}>{\tiny}S[table-format=2.2]<{\endcollectcell}
  }
}
\toprule
\multirow{2}{*}{\textbf{Algorithm}} &
\multirow{2}{*}{\makecell{\textbf{Personalized}\\\textbf{Params}}} &
\multirow{2}{*}{\makecell{\textbf{$\#$ Trained}\\\textbf{Params}}} &
\multirow{2}{*}{\makecell{\textbf{$\#$ Comm.}\\\textbf{Params}}} &
\multicolumn{7}{c}{\textbf{\cifar}} &  
\multicolumn{4}{c}{\textbf{\office}} &  
\multicolumn{4}{c}{\textbf{\chexpert}}   \\  
\cmidrule(lr){5-12} \cmidrule(lr){13-16} \cmidrule(lr){17-20}
&& &&  \multicolumn{2}{l}{\textbf{\localtest}} & \multicolumn{2}{l}{\textbf{\globaltest}}  & \multicolumn{2}{l}{{\cifarpointone}} & \multicolumn{2}{l}{{\cifarc}}  & \multicolumn{2}{l}{\textbf{\localtest}}  & \multicolumn{2}{l}{\textbf{\globaltest}}   & \multicolumn{2}{l}{\textbf{\localtest}}  & \multicolumn{2}{l}{\textbf{\globaltest}}  \\ \midrule
\standalone & Full model & 11.18$M$ & \textbf{0$M$} & 85.94 & 8.82 & 29.77 & 8.09 & 25.82 & 6.27 & 26.67 & 7.07 & 81.64 & 6.08 & 59.15 & 3.32  & 65.06      & 1.88 &  65.45 & 2.3    \\\midrule
\mtl~\cite{smith2017federated} & Full model & 11.18$M$ & 11.18$M$ & 86.24 & 8.45 & 29.46 & 8.33 & 25.64 & 6.42 & 26.4 & 7.29 & 81.82 & 5.53 & 59.25 & 2.84  & 65.15      & 1.95 & 65.48 & 2.3    \\
\textsc{FedAvg+FT}~\cite{yu2020salvaging} & Full model & 11.18$M$ & 11.18$M$* & 88.91	&5.71&	43.99&	9.57&	35.49&	8.02	&36.51&	8.36 & 79.42	& 5.62	&\underline{77.19}& 0.56 & 70.16	& 0.78 &	70.6 &	0.31   \\
\pfedme~\cite{t2020personalized} & Full model & 22.36$M$ & 11.18$M$ & 90.73 & 4.67 & 45.06 & 8.65 & 36.51 & 7.2 & 37.65 & 7.6 & 80.21 & 5.32 & 75.69 & 0.69  & 65.07      & 1.2  &  64.86 & 1.22 \\
\apfl~\cite{deng2020adaptive} & Full model & 22.36$M$ & 11.18$M$ & 90.74 & 4.75 & 43.92 & 9.18 & 35.83 & 7.5 & 36.51 & 7.94 & 81.24 & 4.51 & {76.98} & 1.39  & 68.98      & 1.04 &  68.96 & 1.1   \\
\ditto~\cite{li2021ditto} & Full model & 22.36$M$ & 11.18$M$ & 90.21 & 4.61 & \underline{53.82} & 6.35 & \underline{42.72} & 5.68 & {44.32} & 5.73 & 81.77 & 4.31 & 75.66 & 1.01   & 68.79      & 1.4  &  68.86 & 1.22 \\ \midrule 
\fedbn~\cite{li2021fedbn} & Batch norm. & 11.18$M$ & 11.17$M$ & 90.37 & 5.19 & 43.18 & 8.67 & 35.01 & 7.24 & 36.29 & 7.43 & 81.86 & 5.13 & 74.26 & 0.52& 68.74      & 1.17 & 68.83 & 1.08  \\
\fedalt~\cite{pillutla2022federated} & Input layer & 11.18$M$ & 6.45$M$ & 87.07 & 6.54 & 32.23 & 8.23 & 27.49 & 6.41 & 28.51 & 7.11 & 81.07 & 5.59 & 65.85 & 0.9  & 67.63      & 1.18 & 67.74 & 1.1  \\
\fedsim~\cite{pillutla2022federated} & Input layer & 11.18$M$ & 6.45$M$ & 87.93 & 6.25 & 33.07 & 8.16 & 28.21 & 6.41 & 29.15 & 7.16 & 82.45 & 5.03 & 67.66 & 0.82 & 67.49      & 1.32 &  67.54 & 1.24 \\
\lgfedavg~\cite{liang2020think} & Feat. extractor & 11.18$M$ & 0.005$M$ & 86.7 & 8.01 & 29.96 & 8 & 25.97 & 6.21 & 26.83 & 6.95 & 82.04 & 5.96 & 63.57 & 2.32  & 65.78      & 1.62 & 66.23 & 1.75\\
\fedrep~\cite{collins2021exploiting} & Output layer & 11.18$M$ & 11.17$M$ & 87.76 & 6.46 & 35.19 & 6.97 & 30.15 & 5.89 & 30.68 & 6.31 & 79.05 & 5.88 & 74.17 & 2.02 & 66.66      & 1.82  & 66.52 & 1.47 \\
\fedalt~\cite{pillutla2022federated} & Output layer & 11.18$M$ & 11.17$M$ & 89.68 & 5.4 & 40.68 & 7.3 & 33.61 & 6.12 & 34.3 & 6.5 & {83.24} & 3.96 & 70.62 & 1.46  & 68.27      & 1.3  &  68.36 & 1.31  \\
\fedsim~\cite{pillutla2022federated} & Output layer & 11.18$M$ & 11.17$M$ & 89.75 & 5.51 & 41.98 & 7.66 & 34.21 & 6.22 & 35.31 & 6.79 & 82.91 & 4.46 & 72.34 & 0.51  & 68.22      & 1.34 &  68.12 & 1.24\\
\fedalt~\cite{pillutla2022federated} & Adapter & 12.59$M$ & 11.18$M$ & 87.26 & 7.78 & 31.51 & 8.55 & 27.38 & 6.65 & 27.77 & 7.19 & 81.41 & 6.5 & 57.88 & 3.57  & {72.13}      & 1.34  & {74.67} & 1.57 \\
\fedsim~\cite{pillutla2022federated} & Adapter & 12.59$M$ & 11.18$M$ & 87.76 & 7.57 & 31.97 & 7.44 & 27.76 & 5.78 & 28.1 & 6.46 & 82.14 & 5.46 & 58.62 & 3.24 & 71.75      & 1.4  &  74.09 & 1.55 \\ 
{\namewokd} & Adapter & 2.82$M$ & 1.41$M$ & \underline{91.27} & 5.15 & 53.81 & 6.27 & 42.5 & 5.06 & \underline{44.45} & 5.48 & \underline{83.31} & 5.54 & 76.55 & 2.47  & \underline{76.77}      & 2.24 &  \underline{77.59} & 2.18 \\
{\name} & Adapter & \textbf{2.82$M$} & 1.41$M$ & \textbf{91.82} & 4.43 & \textbf{59.05} & 5.24 & \textbf{47.25} & 4.48 &\textbf{ 48.53} & 4.74 & \textbf{83.58} & 4.74 & \textbf{77.2 }& 1.63  & \textbf{76.98}         & 3.87 & \textbf{77.88} & 1.55\\\bottomrule
\multicolumn{18}{l}{\small *{\textsc{FedAvg+FT} requires full model communciation during \fedavg training and there is no communciation during local finetuning.}} \\
\end{tabular}%
}
\vspace{-4mm}
\end{table*}

}

\noindent\textbf{Data and Model.}
We use \cifar~\cite{krizhevsky2009learning}, \office~\cite{venkateswara2017deep}, and medical image data \chexpert~\cite{irvin2019chexpert}. 
We simulate pFL setting for (1) \textit{label Non-IID} using Dirichlet distribution $\operatorname{Dir}(\alpha)$~\cite{hsu2019measuring} with $\alpha=0.1/0.3$ on \cifar/\chexpert , creating different local data size and label distributions for $M$ clients; and 
(2) \textit{feature Non-IID} on \office by distributing the data from 4 domains (Art, Clipart, Product, and Real Word) to 4 clients respectively~\cite{sun2021partialfed}.   
We use $M=20$ for \cifar/\chexpert, and sample 40\% clients at every round following~\cite{chen2021bridging,lin2020ensemble}, and use full client participation for \office following~\cite{sun2021partialfed}. 
We use ResNet-18 pretrained on ImageNet-1K~\cite{russakovsky2015imagenet} for all datasets.
For \name\footnote{We follow \cite{pillutla2022federated} to implement Adapter, which includes prediction head.}, we use out-of-domain distillation dataset CIFAR-100 for \cifar, and use \cifar for \office/\chexpert.  

\noindent\textbf{Baselines.}
We evaluate full model pFL methods  \textsc{FedAvg+FT}~\cite{yu2020salvaging},
\ditto~\cite{li2021ditto}, \apfl~\cite{deng2020adaptive}, \mtl~\cite{smith2017federated}, \pfedme~\cite{t2020personalized}, 
and partial model pFL methods \chulin{with decoupled personalized/global parameters}, including \fedbn~\cite{li2021fedbn},
\lgfedavg~\cite{liang2020think}, \fedrep~\cite{collins2021exploiting}, \fedsim~\cite{pillutla2022federated}, \fedalt~\cite{pillutla2022federated}. We also include \namewokd, which is \name without Line~\ref{algoline:server-update} server-side knowledge distillation (i.e., using \fedavg to aggregate global adapter).
\chulin{Note that we use the \textit{same pretrained ResNet as initialization} for all methods for fair comparisons. }

\noindent\textbf{Evaluation Metrics.}
We report the averaged test accuracy (\textbf{pFL accuracy}) and standard deviation over all clients' \textit{personalized models}. For \chexpert, we report the AUC score since it is a multi-label classification task. 
We evaluate pFL accuracy mainly under two metrics: \localtest (i.e., clients' corresponding local test data) and \globaltest (i.e., the union of clients'  local test data), to study the \textit{personalized performance} and  \textit{generalization} (against label or covariate shifts), respectively.
In addition, for \cifar, we evaluate pFL generalization against distribution shifts on \cifarpointone~\cite{recht2018cifar}  and common
image corruptions (e.g. Blur, Gaussian Noise) on \cifarc~\cite{hendrycks2019robustness}.

\vspace{-1mm}
\subsection{Evaluation Results}\label{sec:evaluation-results}
\vspace{-2mm}
\noindent\textbf{\name is parameter-efficient.} 
ResNet-18 
model 
consists of 11.18 million (M) parameters, and the adapter has 1.41M (12.6\%) parameters.  
\cref{tab:performance} reports each client's \# trainable parameters and \#  communicated parameters to the server. We see that \name is most parameter-efficient by locally training two adapters and communicating one adapter. Most full model pFL requires training two full models (\pfedme, \apfl, \ditto), and sends one full model to the server. Partial model pFL requires training one full model and communicating its shared parameter. Note that adapter-based partial model pFL in \fedalt and \fedsim are more expensive than \name because they still need to train both a personalized adapter plus a shared full model (12.59M), and communicate the full model. Additional comparison under ResNet-34 shows similar conclusions in \cref{fig:modelarc}.  

\noindent\textbf{\name achieves competitive personalized performance and better generalization than baselines.} 
\cref{tab:performance} shows that even with the smallest number of trainable parameters, \name achieves the comparable personalized performance (+1.08\%, 0.34\%, 4.85\%  on \cifar, \office, \chexpert) and better generalization (+5.23\%, 4.53\%, 4.21\%, 0.22\%, 3.21\% on \cifar, \cifarpointone, \cifarc, \office, \chexpert). 
Specifically, \textbf{(a)} \namewokd already achieves favorable performance compared to the best baseline, which shows that the plug-in module adapter can adapt the pretrained model to FL data distributions, and  
personalized adapter can successfully encode both local knowledges (with local empirical risk) and generalized knowledge (with regularization). 
\textbf{(b)} \name outperforms \namewokd,  which shows  that KD improves the generalization of personalized models (\cref{thm:main_distill_personal}). \chulin{We present the convergence curves in   \cref{fig:convergence} (\cref{app:ep-more-results}) to show the learning performance from the convergence
perspective, where \name achieves the best convergence speed.}

To verify that such improvement of pFL is due to an improved global model (\cref{thm:main_distill}), we compare the performance of the \textit{global model} of \name to the global model of state-of-the-art methods in pFL (\pfedme, \apfl, \ditto) and generic FL  (\fedavg, \fedprox~\cite{li2020federated}, \feddyn~\cite{acar2020federated}, \feddf~\cite{lin2020ensemble}). \chulin{Note that \feddf~\cite{lin2020ensemble} also uses ensemble knowledge distillation for global model aggregation, but updates the full model.} \cref{tab:performance-globalmodel} shows that the generalization of \name \textit{global} model with adapter also outperforms baselines, and KD indeed improves our global model.  
\begin{table}[]
\vspace{-1mm}
\caption{Generalization comparison of the \textit{global} model from different  generic FL and pFL methods on \cifar.}
\vspace{-2mm}
\label{tab:performance-globalmodel}
\centering
 \resizebox{0.95\columnwidth}{!}{%
\begin{tabular}{lccccc}
\toprule
\textbf{Algorithm} &\textbf{Algorithm Type} & \textbf{Trained Params}  & \globaltest &  \cifarpointone & \cifarc\\ \midrule
\fedavg~\cite{mcmahan2016communication} &  generic FL  & Full  & 69.34& 	54.95&	57.07   \\
\fedprox~\cite{li2020federated} & generic FL &  Full &69.64& 	54.75& 	56.84   \\
\feddyn~\cite{acar2020federated} & generic FL & Full & 70.36 &	56.3	&55.91   \\
\feddf~\cite{lin2020ensemble} (w/ KD) & generic FL  & Full & 74.83	& 60.95	& 61.23   \\
\midrule
\pfedme~\cite{t2020personalized}& pFL   & Full & 68.25 & 52.55 & 56.33 \\
\apfl~\cite{deng2020adaptive} & pFL  & Full & 69.79 & 53.6 & 57.06 \\
\ditto~\cite{li2021ditto} & pFL  & Full  & 69.95 & 55.25 & 57.33 \\ 
\namewokd & pFL  & Adapter & 74.22 & 57.6 & 61.40 \\
\name & pFL  & Adapter & \textbf{76.77} & \textbf{62.5} & \textbf{64.47}\\
\bottomrule
\end{tabular}
}
\vspace{-8mm}
\end{table}

\noindent\textbf{Existing partial model pFL can have poor generalization to out-of-distribution shifts.}
As shown in \cref{tab:performance}, these methods, while showing promising personalized accuracy on \cifar and sometimes outperform full model pFL on \office and \chexpert by personalizing the right model component, \chulin{they significantly lag in generalizing to test-time distribution shifts. \textbf{(a)} Compared to full model pFL, the root causes of this inferior generalization in existing partial model pFL methods are twofold:} \textbf{(i)} a smaller number of shared parameters prevents them from effectively learning global information; \textbf{(ii)} personalized parameters can predominately encode local information for the partially personalized model.
\name circumvents such issues by regularization, which enforces personalized adapters to learn \textit{both} local and global information.
\chulin{\textbf{(b)} Moreover, the fact that \name even w/o KD has better generalization than existing partial pFL methods suggests that  updating the shared parameters globally via FL on heterogeneous data can compromise the pretrained feature exactor. Our findings indicate that maintaining frozen parameters, as done in \name without KD, is more effective in preserving the capabilities of the pre-trained model.}

\noindent\textbf{Adapter-based personalization methods are generally effective on \chexpert.}
\cref{tab:performance} shows that adapter-based personalization, including \fedalt, \fedsim, \name, are especially effective on the X-ray data \chexpert. This conclusion holds under different degrees of data heterogeneity $\operatorname{Dir}(0.3)$ and $\operatorname{Dir}(1)$ in \cref{tab:chexpert-dir1}.
It indicates that when adapting to FL domains that have a large domain gap for ImageNet pre-trained models, e.g., medical domains,  adapter personalization may be preferable to input/output/batch-norm pFL.

{
\begin{table}[]
\centering
\caption{\small Averaged test accuracy across personalized models with data heterogeneity degrees $\operatorname{Dir}(1)$ and $\operatorname{Dir}(0.3)$  on \chexpert. \name achieves best personalized performance and generalization.}
\vspace{-3mm}
\label{tab:chexpert-dir1}
 \resizebox{0.9\columnwidth}{!}{%
 \begin{tabular}{cc
 *{4}{S[table-format=2.2]@{\tiny${}\pm{}$}
  >{\tiny}S[table-format=2.2]<{\endcollectcell}
  }
}
\toprule
\multirow{2}{*}{\textbf{Algorithm}} &\multirow{2}{*}{\textbf{Personalization}} &  \multicolumn{4}{c}{\textbf{\localtest}} & \multicolumn{4}{c}{\textbf{\globaltest}} \\ \cmidrule(lr){3-6}  \cmidrule(lr){7-10} 
&&   \multicolumn{2}{l}{\textbf{$\operatorname{Dir}(1)$ }}   & \multicolumn{2}{l}{\textbf{$\operatorname{Dir}(0.3)$ }}  &
\multicolumn{2}{l}{\textbf{$\operatorname{Dir}(1)$}}  &
\multicolumn{2}{l}{\textbf{$\operatorname{Dir}(0.3)$ }}  \\ \midrule
\standalone & Full & 64.69 & 1.63 &65.06      & 1.88 & 65.32 & 1.7 & 65.45 & 2.3 \\\midrule
\mtl & Full & 65.18 & 1.95 & 65.15 & 1.95 & 65.67 & 1.72 & 65.48 & 2.3  \\
\pfedme & Full & 64.8 & 1.4 &65.07      & 1.2 & 64.85 & 1.25 & 64.86 & 1.22  \\
\apfl & Full & 69.21 & 1.23 & 68.98 & 1.04  & 69.21 & 1.05 & 68.96 & 1.1   \\
\ditto & Full & 68.65 & 0.82 & 68.79 & 1.4  & 68.72 & 0.58 & 75.55 & 0.34 \\\midrule
\fedbn & BN & 69.09 & 0.79 &  68.74      & 1.17 & 69.03 & 0.57 & 68.83 & 1.08  \\
\fedalt & Input & 67.74 & 0.85 & 67.63      & 1.18  & 67.88 & 0.6 &  67.74 & 1.1 \\
\fedsim & Input & 67.65 & 0.88 & 67.49      & 1.32  & 67.82 & 0.61 &  67.54 & 1.24 \\
\lgfedavg & Feat. extractor & 65.77 & 1.48 &  65.78 & 1.62 & 66.33 & 1.38 & 66.23 & 1.75  \\
\fedrep & Output & 66.42 & 1.62 &66.66   & 1.82 & 66.49 & 1.53 & 66.52 & 1.47 \\
\fedalt & Output & 68.31 & 0.79 & 68.27      & 1.3 & 68.41 & 0.47 &68.36 & 1.31\\
\fedsim & Output & 68.51 & 0.82 & 68.22  & 1.34  & 68.63 & 0.57 &  68.12 & 1.24  \\
\fedalt & Adapter & 72.52 & 0.99 & 72.13  & 1.34  & 74.79 & 1.21 & 74.67 & 1.57  \\
\fedsim & Adapter & 72 & 1.26 & 71.75      & 1.4 & 74.3 & 1.51 &  74.09 & 1.55  \\
\namewokd & Adapter & {77.45} & 1.21 &  76.77      & 2.24  & \textbf{78.02} & 1.36 & 77.59 & 2.18  \\
\name & Adapter & \textbf{77.47} & 1.54 &  \textbf{76.98}      & 1.81  & \textbf{78.02} &1.55 & \textbf{77.88} & 1.55   \\
\bottomrule
\end{tabular}%
}
\vspace{-5mm}
\end{table}
}

\noindent\textbf{Effects of KD.}
We use CIFAR-100 as the distillation dataset on \cifar, and \cref{fig:distillation} shows that more distillation steps and distillation data samples are better for pFL generalization. These results echo our theoretical analysis in \cref{thm:main_distill} that smaller KD optimization error $\Phi_{\mu_{\mathtt{aux}}, n_{\mathtt{aux}}}$ and a larger number of samples can tighten the generalization bounds. We also evaluate different distillation datasets, and  \cref{fig:distillation} shows that out-of-domain datasets (STL-10, CIFAR100) can improve generalization compared to the one without KD (None) by a margin, and achieve comparable performance compared to in-domain CIFAR10 validation data.
\textit{The flexibility of choosing distillation} datasets makes it practical for the server to leverage public data for KD. 

Another potential way to improve generalization is by moderately increasing regularization strength $\lambda$ for less personalization. However, \cref{fig:lambda} (\cref{app:ep-more-results})  show that an overly large $\lambda$ degrades the personalized performance, which matches the observation for $\ell_2$ regularization-based pFL methods in \cite{pillutla2022federated}.  Notably, KD does not have such a negative impact on personalized performance (in \cref{fig:distillation}).

\begin{figure}[t]
\setlength{\tabcolsep}{0pt}
\begin{subtable}{\linewidth}
\centering
\begin{tabular}{c@{}c@{}c@{}c@{}}
        \vspace{-3pt}\\
\includegraphics[height=0.24\linewidth]{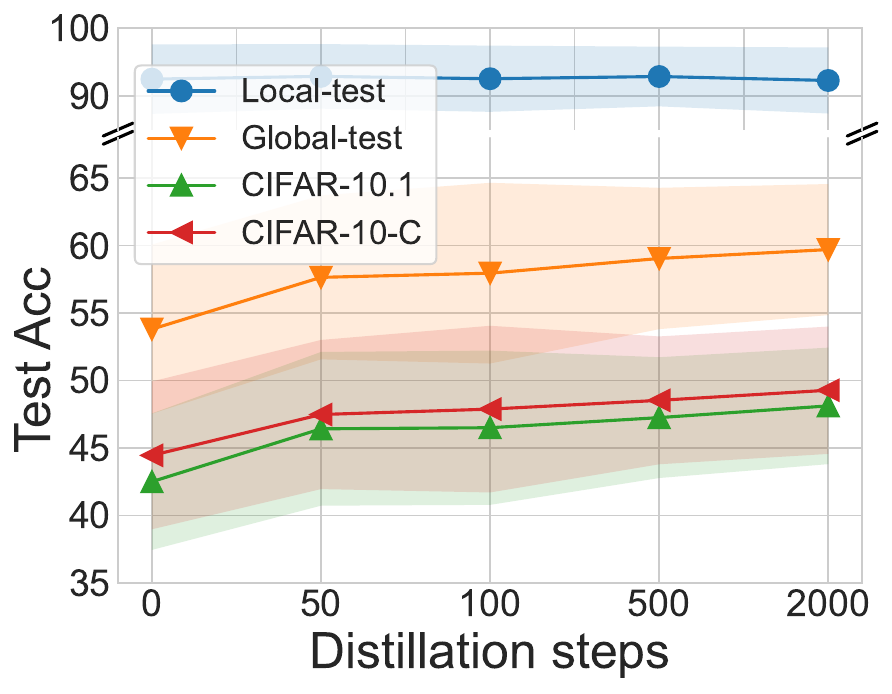}&
\includegraphics[height=0.24\linewidth]{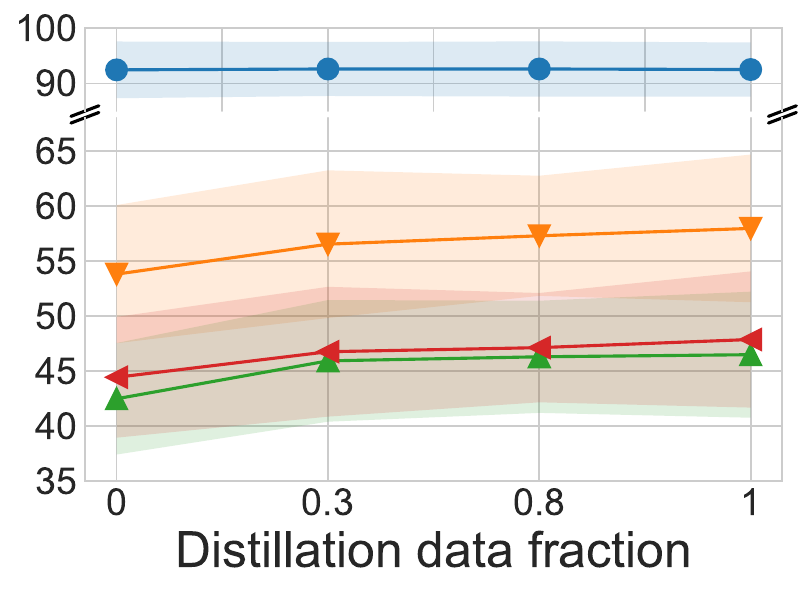}&
\includegraphics[height=0.24\linewidth]{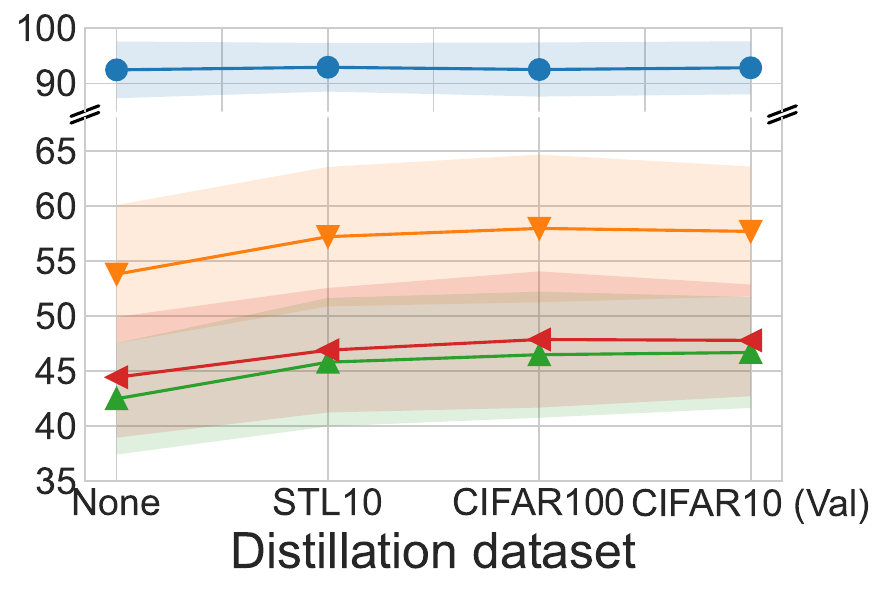}&
\\[-1.2ex]
\end{tabular}
\end{subtable}
\vspace{-2mm}
\caption{
\small Effect of KD on \name evaluated on \cifar.  More distillation steps and data samples lead to better generalization and out-of-domain distillation data  (STL-10, CIFAR-100)  achieve similar performance as in-domain (validation) data.   
}
\label{fig:distillation}
\vspace{-3mm}
\end{figure}

{
\begin{figure}[t]
\centering
\vspace{-6mm}
{
\begin{subtable}{\linewidth}
\centering
\resizebox{\linewidth}{!}{%
\begin{tabular}{c@{}c@{}c@{}c@{}}
        \vspace{-3pt}\\
\includegraphics[height=0.22\linewidth]{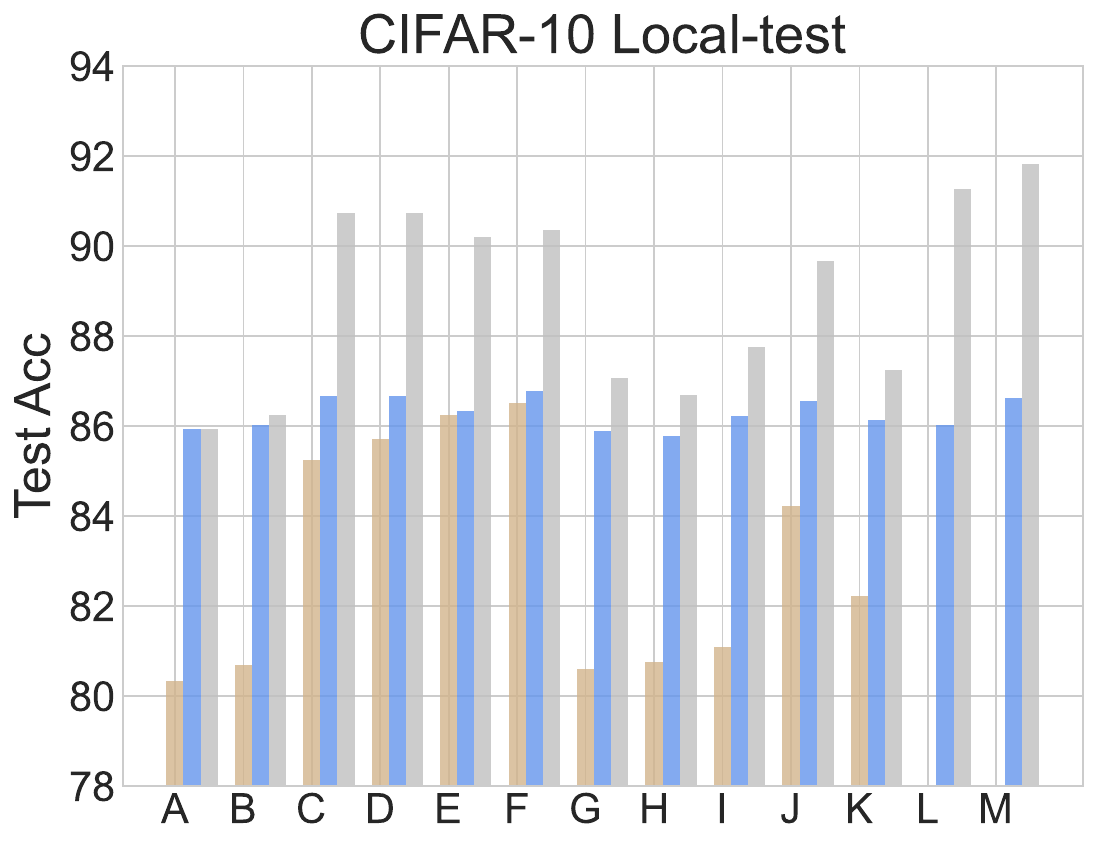} &
\includegraphics[height=0.22\linewidth]{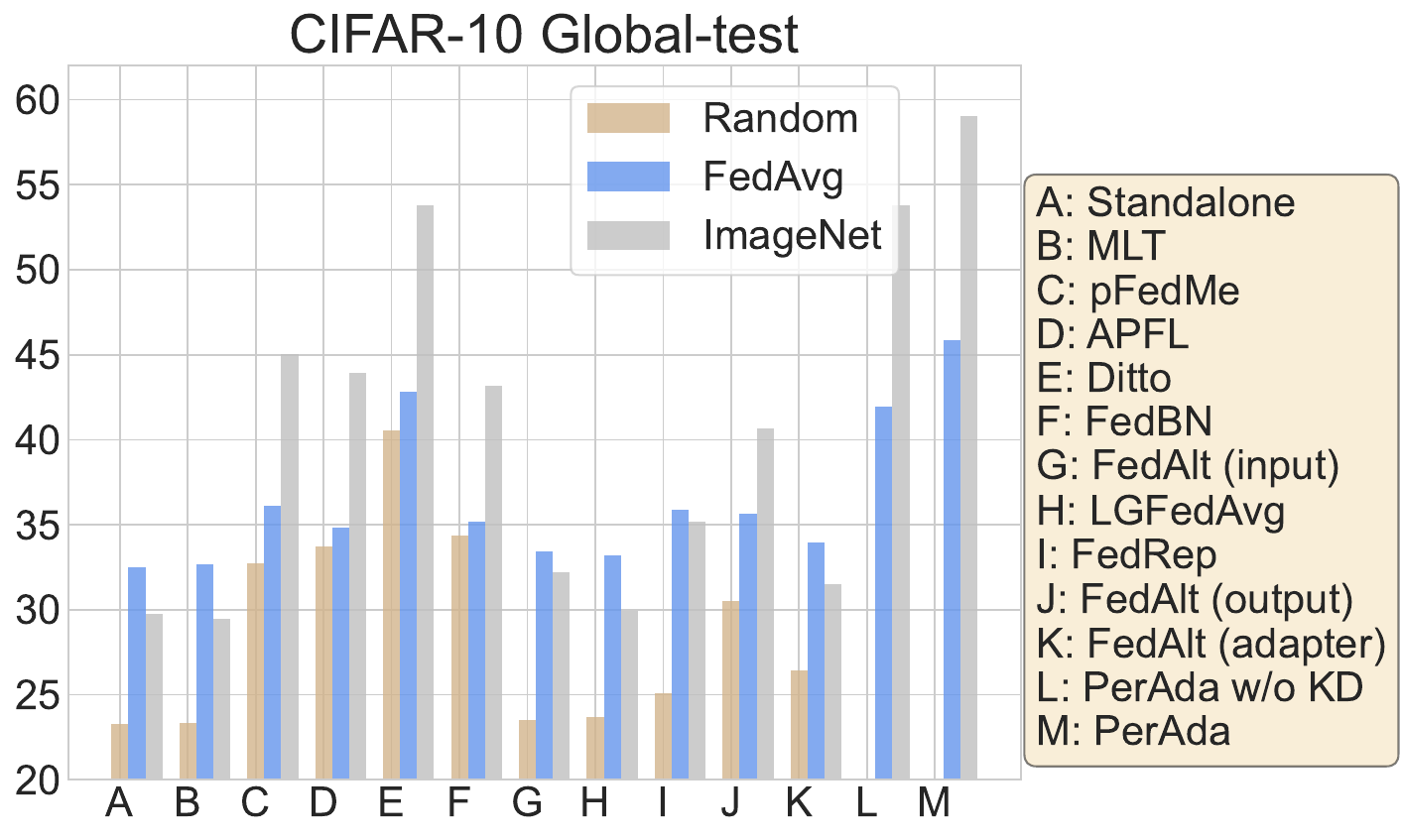}&
\\[-1.2ex]
\end{tabular}
}
\end{subtable}
}
\caption{\small Effect of  different initializations (Random, \fedavg model, and ImageNet pretrained model).}%
\label{fig:initialization}
\vspace{-6mm}
\end{figure}

}

\noindent\textbf{Effects of pretrained models.}
Starting personalization from a pretrained model, such as \fedavg global model, is commonly considered in pFL~\cite{pillutla2022federated,marfoq2022personalized}. Therefore, we first train a ResNet-18 global model on FL data from scratch using \fedavg and utilize it as initialization for pFL.
Results in \cref{fig:initialization} show that \name also achieves comparable personalized performance and higher generalization than baselines with \fedavg pretrained model. 
Moreover, ImageNet-pretraining leads to better generalization than \fedavg-pretraining for \name,  which echos \cref{thm:main_distill} that high-quality local models (enabled by good pretrained model) can further improve generalization.

\noindent\textbf{Utility under differential privacy guarantees.}
To further protect local data privacy, we train our method under \textit{sample-level} $(\epsilon,\delta)$ -differential privacy (DP)~\cite{dwork2014algorithmic} on \cifar with a ViT-S/16-224 model
\footnote{As batch normalization layer in ResNet creates dependencies between samples and violates DP, we use ViT model~\cite{wu2020visual} for DP experiments.}. Following~\cite{liu2022privacy}, we consider full client participation and  perform local training with DP-SGD~\cite{abadi2016deep} for \textit{both} personalized models and the global model (see experimental details in \cref{app:ep-details});
We set $\delta=10^{-5}$ and  report averaged $\epsilon$ across all clients and averaged pFL accuracy under \localtest. \cref{tab:dp} shows that  (1) \namewokd retains higher utility than full model personalization \ditto under reasonable privacy guarantees due to a smaller number of trainable parameters and the whole model is less impacted by DP noise. (2) KD with unlabeled \textit{public} data in \name can further improve the utility without consuming additional privacy budgets.

\begin{table}[H]
\vspace{-3mm}
\caption{\small \name retains high personalized utility under DP guarantee on CIFAR-10 with ViT-S/16-224 model.
}
\vspace{-3mm}
    \label{tab:dp}
    \centering
    \resizebox{1.02\linewidth}{!}{%
\begin{tabular}{cccccccc}
    \toprule
Algorithm & Personalization &  \multirow{1}{*}{\makecell{$\epsilon=\infty$ }} &
\multirow{1}{*}{\makecell{$\epsilon=5.99\pm	3.03  $  }} &
\multirow{1}{*}{\makecell{$\epsilon= 3.7 \pm 2.12   $ }} &
\multirow{1}{*}{\makecell{$\epsilon= 1.81 \pm  1.12 $ }} 
\\ \midrule
Ditto & Full  & \textbf{98.59} $\pm$ 1.63   & 76.76	 $\pm$  24.14   &  76.75 	$\pm$	24.13 &76.67		$\pm$ 24.12  \\
\namewokd & Adapter  &  97.69  $\pm$ 1.79 &  77.49		$\pm$ 21.21   & 77.32	$\pm$	21.16 &76.68		$\pm$ 21 \\
\name & Adapter &  98.08	$\pm$ 1.28 & \textbf{ 80.33}$\pm$	20.76  &\textbf{79.79}	$\pm$20.45   &  \textbf{77.83}$\pm$	19.58   \\
\bottomrule
\end{tabular}
}
\vspace{-2mm}
\end{table}

\section{Conclusion}
\vspace{-1mm}
We propose a pFL framework \name based on global/personalized adapter and knowledge distillation with convergence and generalization guarantees, and show that it reduces computation and communication costs and achieves higher personalized performance and generalization.

\newpage
\textbf{Acknowledgement.} 
The authors thank Zinan Lin, Wenxuan Bao, Liliang Ren, and anonymous reviewers for their valuable feedback and suggestions.
This work is partially supported by the National Science Foundation under grant No. 1910100, No. 2046726, No. 2229876, the Alfred P. Sloan Fellowship, the Amazon research award, and the eBay research award.

{
    \small
    \bibliographystyle{ieeenat_fullname}
    \bibliography{egbib}
}

\appendix
\onecolumn

{\large \textbf{Appendix}}

The Appendix is organized as follows:
\begin{itemize}[noitemsep,leftmargin=*]
  \item Appendix~\ref{app:ep-details} provides detailed setup and hyperparameters for experiments. 
    \item Appendix~\ref{app:ep-more-results} provides additional experiment results.
     \item Appendix~\ref{app:generalization-proof} provides the generalization analysis of \name and the full proofs for \cref{thm:main_distill} and \cref{thm:main_distill_personal}.
      \item Appendix~\ref{app:convergence-proof} provides the convergence analysis of \name and the full proofs for \cref{thm:convergence1} and \cref{thm:convergence2}.
\end{itemize}

\section{Experimental Details}\label{app:ep-details}

\subsection{Datasets and Model}

\begin{table}[ht]
\centering
\caption{Summary of datasets.}
\label{tab:data}
\resizebox{\linewidth}{!}{%
    \begin{tabular}{cccccccccccccccc}
    \toprule
     Dataset  & Task & $\#$ Training Samples & $\#$ Test Samples & $\#$ Validation Samples   & $\#$ Clients & Data Partition & $\#$ Classes  \\\midrule
    \cifar & image classification &  45000 & 10000 & 5000 &  20 & label-shift non-IID (synthetic) & 10 \\
     \office & image classification  &  12541 & 1656  &1391  &  4 & covariate-shift non-IID (nature) & 65 \\
     \chexpert & multi-label image classification & 180973 & 20099  & 22342 & 20& label-shift non-IID (synthetic) &  5  \\
    \bottomrule
\end{tabular}%
}
\end{table}
\paragraph{FL datasets}
We summarize our FL datasets in \cref{tab:data}. 
\begin{itemize}
    \item \textbf{\cifar}~\cite{krizhevsky2009learning} contains nature images for 10 classes, such as cat, bird, dog.  We simulate {label non-IID} on \cifar using Dirichlet distribution $\operatorname{Dir}(\alpha)$~\cite{hsu2019measuring} with $\alpha=0.1$, creating different local data size and label distributions for $M=20$ clients. 
    \item \textbf{\office}~\cite{venkateswara2017deep} contains images from four domains, i.e., Art, Clipart, Product, and Real Word.   All domains share the same 65 typical classes in office and home. We simulate the  feature non-IID by distributing the data from 4 domains to 4 clients, respectively~\cite{sun2021partialfed}.
    \item \textbf{\chexpert}~\cite{irvin2019chexpert} is a dataset of chest X-rays that contains 224k chest radiographs of 65,240 patients, and each radiograph is labeled for the presence of 14 diseases  as positive, negative, and uncertain. We map all uncertainty labels to positive ($\operatorname{U-Ones}$~\cite{irvin2019chexpert}).  We follow the original \chexpert paper to report the AUC score as a utility metric on five selected diseases, i.e., Cardiomegaly, Edema, Consolidation, Atelectasis, Pleural Effusion.  To create the label-shift non-IID on \chexpert, we view each possible multi-class combination as a ``meta-category'' and group all combinations that have less than 2000 training samples into a new meta-category, which results in a total of 19 meta-categories.  Then we use Dirichlet distribution $\operatorname{Dir}(\alpha)$ with $\alpha=0.3$ to create label-shift non-IID based on the 19 meta-categories for $M=20$ clients. Such FL data partition simulates a scenario where different hospitals (clients) have different majority diseases among their patients. Note that such meta-categories are only used to create FL non-IID data partition, and our utility metric AUC score is always calculated based on the five diseases, i.e., a 5-label image classification task.   
\end{itemize}
The number of samples for each dataset is shown in \cref{tab:data}, where 
we use a ratio of 9:1 to split the original training data into training data and validation data for each dataset. 

\paragraph{Distillation datasets}

We summarize our out-of-domain distillation dataset as below:
\begin{itemize}
    \item {\cifar}: we use 50k (unlabeled) samples from the CIFAR-100 training dataset.
    \item {\office} and \chexpert: we use 50k (unlabeled) samples from the  CIFAR-10 training dataset.
\end{itemize}

In \cref{fig:distillation}, we conduct the ablation study of distillation on \cifar. 
\begin{itemize}
    \item Distillation steps:  we fix the distillation data fraction as 1 and increase steps.
    \item Distillation data fraction: we fix the distillation steps as 100  and increase the data fraction.
    \item Distillation datasets: we fix the distillation steps as 100, data fraction as 1, and use different distillation datasets. Specifically, we use 100.5k  samples from the STL-10 unlabeled+training dataset, 50k samples from the CIFAR-100 training dataset, and 5k samples from the CIFAR-10 validation dataset. 
\end{itemize}

\paragraph{Evaluation datasets} 
As mentioned in \cref{sec:evaluation-results}, we evaluate pFL accuracy mainly under two metrics: \localtest (i.e., clients' corresponding local test data) and \globaltest (i.e., the union of clients'  local test data), to study the \textit{personalized performance} and  \textit{generalization} (against label or covariate shifts), respectively.
In addition, for \cifar, we evaluate pFL generalization against distribution shifts on \cifarpointone~\cite{recht2018cifar}  and \cifarc~\cite{hendrycks2019robustness}. 
\cifarpointone contains roughly 2,000 new test images that share the same categories as \cifar, and the samples in \cifarpointone are a subset of the TinyImages dataset~\cite{torralba200880}.
\cifarc~\cite{hendrycks2019robustness} is natural corruption benchmark for test-time distribution shits, containing common image corruptions such as Blur, Gaussian Noise, and Pixelate. It is generated by adding 15 common corruptions plus 4 extra corruptions to the test images in the \cifar dataset.

\paragraph{Model} 
We use a ResNet-18~\cite{he2016deep} pretrained on ImageNet-1K~\cite{russakovsky2015imagenet} for all tasks. We additionally evaluate \office on ResNet-34~\cite{he2016deep} pretrained on ImageNet-1K.  The pretrained models are downloaded from PyTorch~\cite{paszke2019pytorch}.

\cref{tab:model-resnet18-oh} and \cref{tab:model-resnet34-oh} show the detailed model architectures of ResNet-18 and ResNet-34 model used for personalization on \office, respectively.
We use the number of parameters in the corresponding layers and  the  number of parameters in the full model  to calculate the total number of $\#$ trainable parameters for different full model pFL and partial model pFL in \cref{fig:modelarc}. 

Since we use ResNet-18 for all datasets, the number of parameters  of different kinds of layers for \cifar and \chexpert are the same, except for the output layer. This is because different datasets have different numbers of classes, which decide the size of the output layer.
In \cref{tab:performance}, we report the parameters of the ResNet-18 model on \cifar, where the output layer consists of 0.0051M parameters.

\begin{table}[ht]
\centering
\caption{Summary of model architectures of ResNet-18 model used for personalization on \office.}
\label{tab:model-resnet18-oh}
\resizebox{0.6\linewidth}{!}{%
    \begin{tabular}{cccccccccccccccc}
    \toprule
    Type  &  Detailed layers & $\#$ Params. in the layers  \\\midrule
    Full model &  full model &11.21 M   \\
    Input layer &  1st Conv. layer & 4.73 M \\
     Feature extractor &  the model except last fully connected layer  &  11.16 M \\
    Batch norm  & batch normalization layers &  0.0078M \\ 
    Output layer &  last fully connected layer&  0.033 M \\
    Adapter &residual adapter modules & 1.44 M \\
    \bottomrule
\end{tabular}%
}
\end{table}

\begin{table}[ht]
\centering
\caption{Summary of model architectures of ResNet-34 model used for personalization on \office.}
\label{tab:model-resnet34-oh}
\resizebox{0.6\linewidth}{!}{%
    \begin{tabular}{cccccccccccccccc}
    \toprule
    Type  &  Detailed layers & $\#$ Params. in the layers  \\\midrule
    Full model &  full model & 21.32 M   \\
    Input layer &  1st Conv. layer & 9.78 M \\
     Feature extractor & the model except last fully connected layer  &  11.16 M \\
    Batch norm  & batch normalization layers &  0.015 M \\ 
    Output layer &  last fully connected layer&  0.033 M \\
    Adapter &residual adapter modules & 2.57 M \\
    \bottomrule
\end{tabular}%
}
\end{table}

\subsection{Training Details}
We tuned the hyperparameters according to the {personalized performance} evaluated on the local validation data. 
We use SGD as the client optimizer.
For each baseline method as well as our method, we tuned the (client) learning rate via grid search on the values \{5e-4,1e-3, 5e-3, 1e-2\} for \cifar and \chexpert, and \{5e-4, 1e-3, 5e-3, 1e-2, 5e-2\} for \office.
For \name, we use Adam as the server optimizer. We tuned the server learning rate via grid search on the values \{1e-5,1e-4, 1e-3, 1e-2\} for all datasets.  The strength of regularization $\lambda$ is selected from  \{0.1, 1\} following \cite{li2021ditto} and we use the same $\lambda$ for \name, \ditto, \pfedme.
For \pfedme, we use the inner step of $K=3$ as suggested in \cite{t2020personalized}.  
For \apfl, the mixing  parameter $\alpha$ is selected from \{0.1, 0.3, 0.5, 0.7\}. 
The final hyperparameters we used for \name are given in \cref{tab:our-hyperparam}.

\begin{table}[ht]
\centering
\caption{Hyperparameters of \name for each dataset.}
\label{tab:our-hyperparam}
\resizebox{0.5\linewidth}{!}{%
    \begin{tabular}{cccccccccccccccc}
    \toprule
    Hyperparameter  &  \cifar  &  \office & \chexpert  \\\midrule
    Batch size &   64  & 128 & 256 \\
    Clients per round  & 8  & 4 & 8\\
    Local epochs &  10 & 1 &  1 \\
    $\#$ training rounds  &   200&  100 & 30\\
    Regularization strength $\lambda$ & 1 & 0.1 & 1  \\
     Client learning rate  &  0.01 & 0.05 &  0.01\\
    Server learning rate  &   1e-3   &  1e-4 &  1e-5 \\
     Distillation step  &  500 &  100 & 50 \\
     Distillation batch size &   2048 & 256& 128 \\
    \bottomrule
\end{tabular}%
}
\end{table}

\subsection{Experimental Setups for DP Experiments}

Since the batch normalization layer in ResNet-18 requires computing the mean and variance of inputs in each mini-batch, creating dependencies between training samples and violating the DP guarantees, it is not supported in differentially private models.
Thus, we turn to conduct DP experiments with a ViT-S/16-224 model~\cite{wu2020visual}, which is pretrained on ImageNet-21k~\cite{russakovsky2015imagenet}. We download the pretrained model from Hugging Face~\cite{wolf-etal-2020-transformers}.  

Following~\cite{liu2022privacy}, we consider full client participation and  perform local training with DP-SGD~\cite{abadi2016deep} for personalized models and the global model. 
On \cifar, the local epoch is 1, and we run all methods for 10 communication rounds.
We tuned the (client) learning rate via grid search on the values \{0.01, 0.05,0.1, 0.2, 0.3 \} for \ditto, \namewokd, and \name. The optimal learning rate for \ditto, \namewokd, and \name are 0.05, 0.1, and 0.2, respectively. 
For \name, we set the distillation  batch size as 32. We select the sever learning rate from \{0.005, 0.003, 0.001\}, and the optimal server learning rate is 0.005. 

We set the DP parameter $\delta=10^{-5}$ and evaluate the averaged pFL accuracy under \localtest. 
We set the noise level $\sigma$ as $0.8,  1, 1.5$ for DP-SGD training  to obtain the privacy budgets $\epsilon=5.99\pm	3.03,  3.7 \pm 2.12, 1.81 \pm  1.12 $  used in \cref{tab:dp}, respectively. 
Under each privacy budget,  we tuned the clipping threshold  via grid search from \{1, 2, \ldots, 10 \} for each method.

\section{Additional Experimental Results and Analysis}\label{app:ep-more-results}
In this section, we provide additional  experimental results and analysis, including 
(1) Convergence analysis; 
(2) analysis of pFL performance under different model architectures \office;
(3) pFL performance under different data heterogeneity degrees on  \chexpert;
(4) generalization comparison of the global model of different pFL methods;
(5) effect of the pretrained model;
(6) effect of regularization strength $\lambda$.

\paragraph{Convergence}

{
\begin{figure*}[t]
\centering
\begin{tabular}{l}
\includegraphics[width=0.9\linewidth]{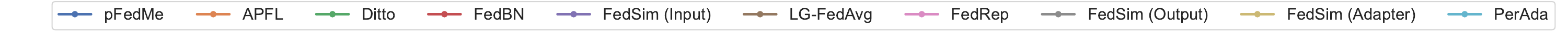}
\vspace{-8pt}\\
\end{tabular}
{
\begin{subtable}{\linewidth}
\centering
\resizebox{\linewidth}{!}{%
\begin{tabular}{c@{}c@{}c@{}c@{}c@{}c@{}c@{}c@{}}
        \vspace{-3pt}\\
\includegraphics[height=0.4\linewidth]{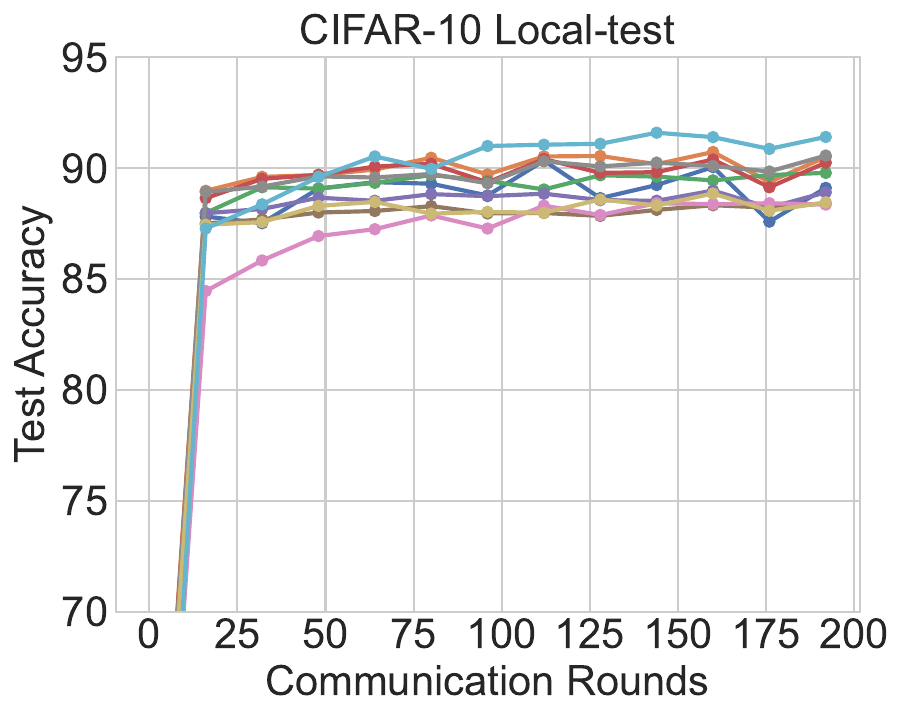} &
\includegraphics[height=0.4\linewidth]{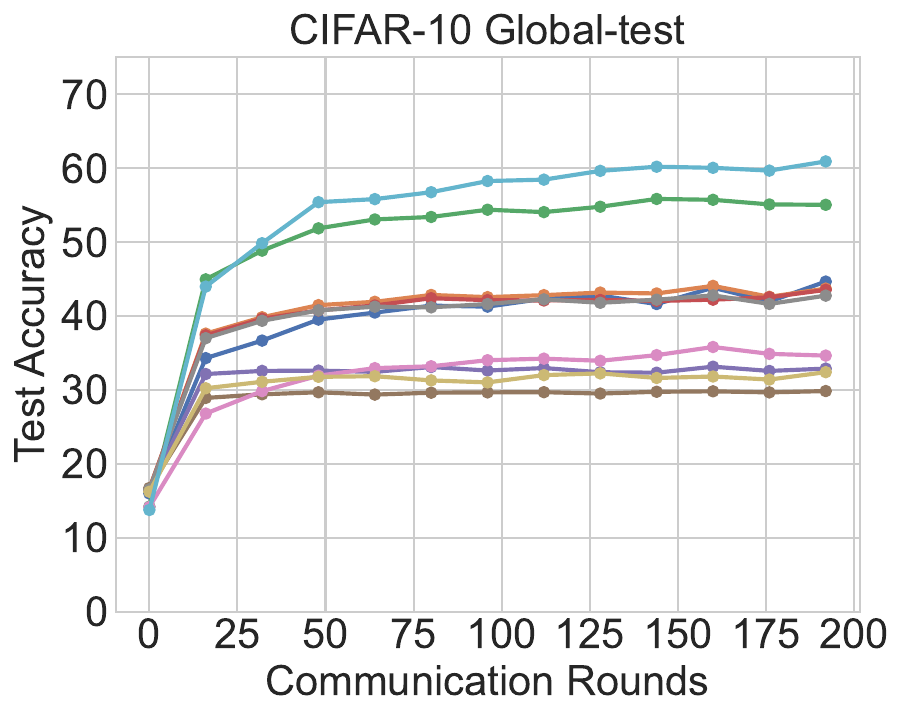}&
\includegraphics[height=0.4\linewidth]{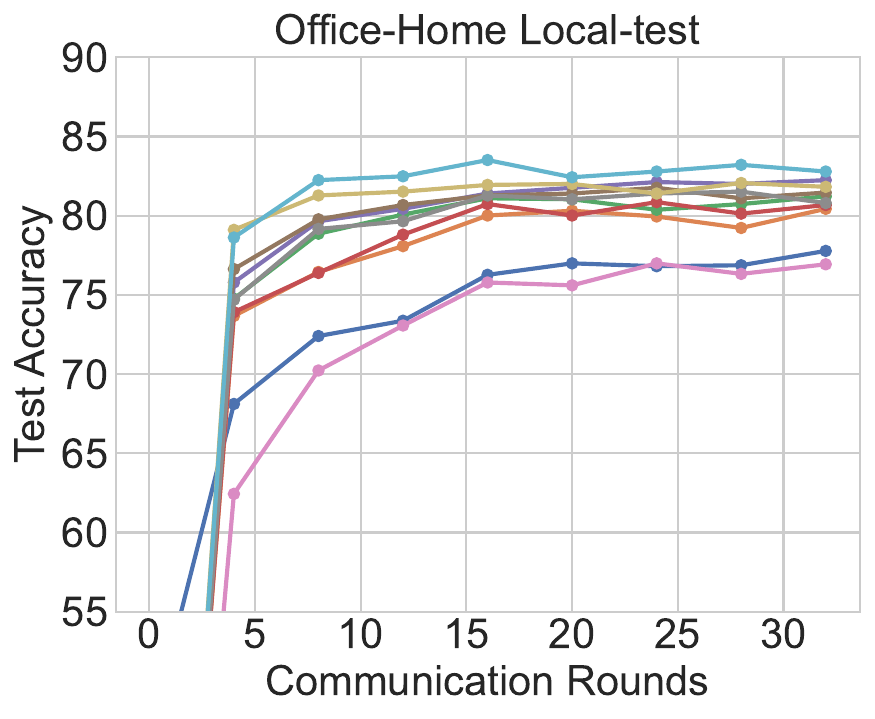} &
\includegraphics[height=0.4\linewidth]{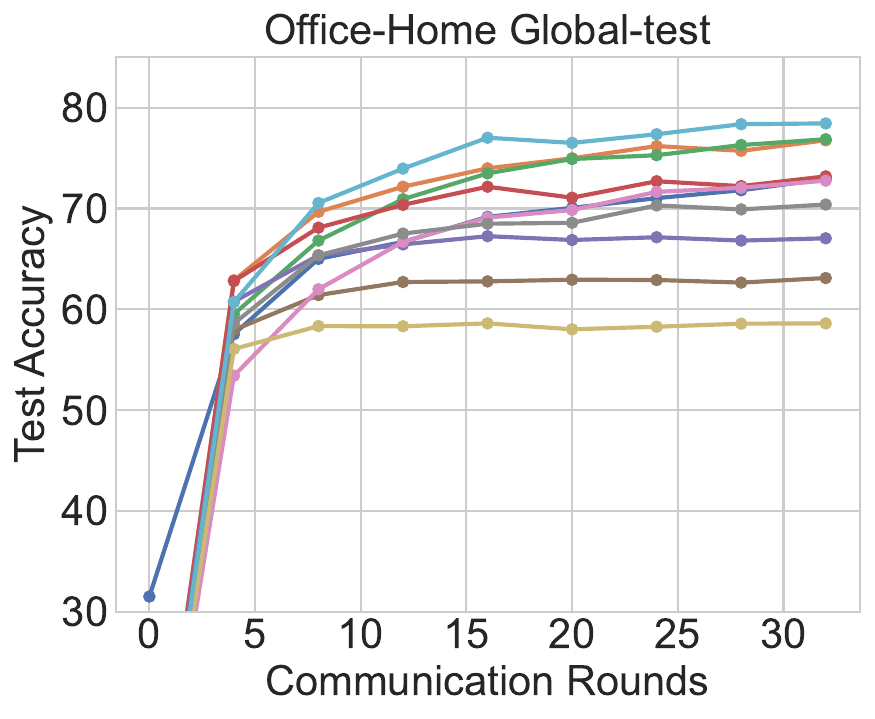}&
\includegraphics[height=0.4\linewidth]{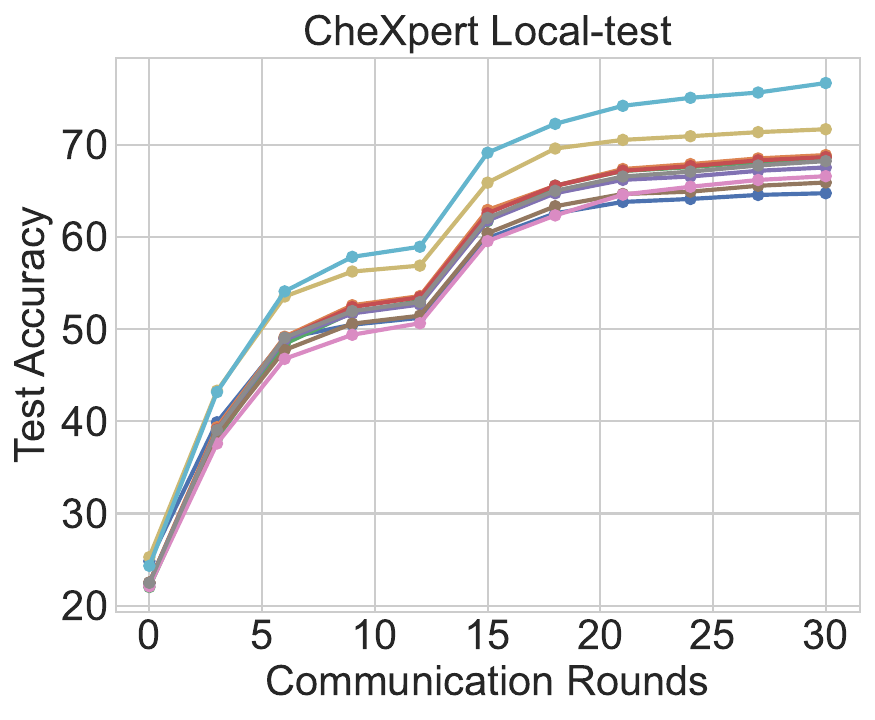} &
\includegraphics[height=0.4\linewidth]{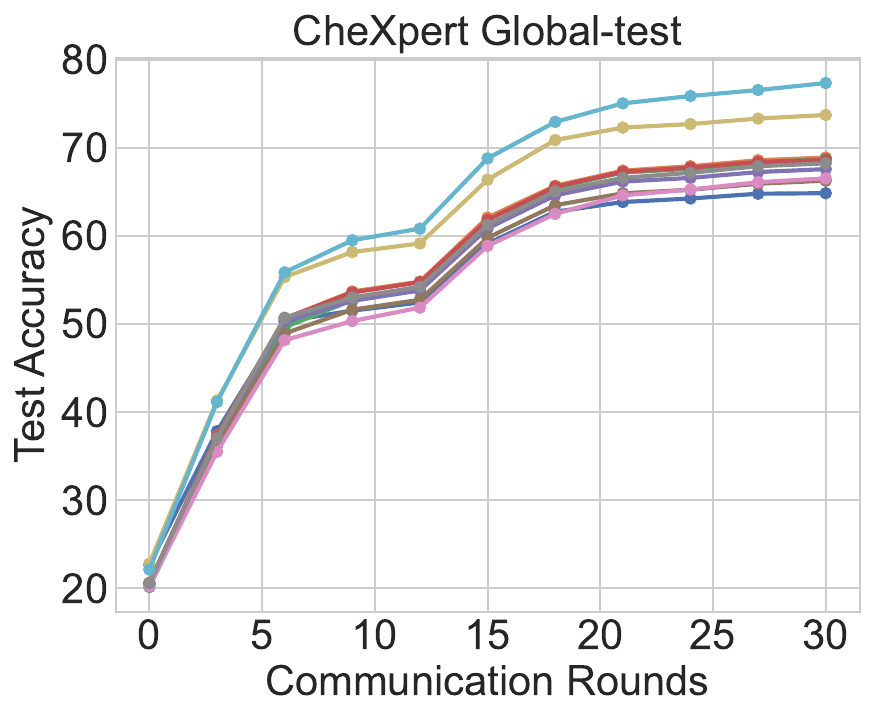}&
\\[-1.2ex]
\end{tabular}
}
\end{subtable}
}
\vspace{-4mm}
\caption{\small \chulin{Averaged test accuracy of personalized models from participating clients at each communication round.}}%
\label{fig:convergence}
\vspace{-6mm}
\end{figure*}

}
We present the learning performance from the convergence perspective in   \cref{fig:convergence}, where we report the averaged test accuracy of personalized models from the participated clients at each communication round. It shows that \name achieves the best convergence speed and converges to a higher personalized performance (local-test) and generalization performance (global-test).

\paragraph{Performance under different model architectures (ResNet-18 and ResNet-34) on \office.}

\cref{fig:modelarc} shows the performance of different pFL under ResNet-18 and ResNet-34. Cross different network architecture, \name is able to achieve the best personalized performance and generalization with the fewest number of trainable parameters.  For larger model, the number of updated parameters difference between full model personalization and our adapter personalization will be larger, reflecting our efficiency.

\paragraph{Performance under different data heterogeneity degrees on  \chexpert.}
\cref{tab:chexpert-dir1} shows under different data heterogeneity degrees $\operatorname{Dir}(1)$ and $\operatorname{Dir}(0.3)$  on \chexpert, \name achieves the best personalized performance and generalization. It also verifies that adapter-based personalization methods, including \fedalt, \fedsim, \name are especially effective on the X-ray data \chexpert.

\paragraph{Generalization comparison of the global model of different pFL methods.}
\cref{tab:performance-globalmodel} compare the generalization performance of the global model in  our method to the global model in other full model  pFL methods (\pfedme, \apfl, \ditto)  and generic FL methods (\fedavg, \fedprox~\cite{li2020federated}, \feddyn~\cite{acar2020federated}, \feddf~\cite{lin2020ensemble}) on \cifar.  
\mtl and partial model pFL methods are excluded from the compression because they do not train a complete global model. 
We use the same distillation dataset and distillation steps and data size for \feddf and \name to ensure a fair comparison. 

The results show that the global model of  \name outperforms these baselines, which verifies that KD improves our global model, and the improved performance of personalized models is due to a well-generalized global model.

\paragraph{Effect of pretrained models.}
Starting personalization from a pretrained model, such as \fedavg model~\cite{pillutla2022federated,marfoq2022personalized}, is common in pFL, so we report the results with \fedavg pretrained model (on FL data from scratch) for all methods\footnote{\fedsim is omitted here because its results are similar to \fedalt~\cite{pillutla2022federated}} on \cifar. The results in \cref{fig:initialization} show that \name also achieves comparable personalized performance and higher generalization than baselines with \fedavg pretrained model. 
Moreover, \cref{thm:main_distill} shows that high-quality local models (enabled by good pretrained model) can further improve generalization. Here, we use ImageNet as an example of high-quality pretrained models, which leads to even higher personalized performance and generalization for \name.
Additionally,  pretrained models lead to significantly higher pFL accuracy than random initialization for all existing methods; therefore, leveraging a pretrained model,  which is often available for modern deep neural networks~\cite{bommasani2021opportunities}, is practical and beneficial not only for \name but also for existing pFL methods.

\paragraph{Effect of $\lambda$.}

{
\begin{figure}[t]
\vspace{-5mm}
\centering
{
\setlength{\tabcolsep}{0pt}
\begin{subtable}{0.8\linewidth}
\centering
\resizebox{\linewidth}{!}{%
\begin{tabular}{c@{}c@{}c@{}c@{}}
        \vspace{-3pt}\\
\includegraphics[width=\linewidth]{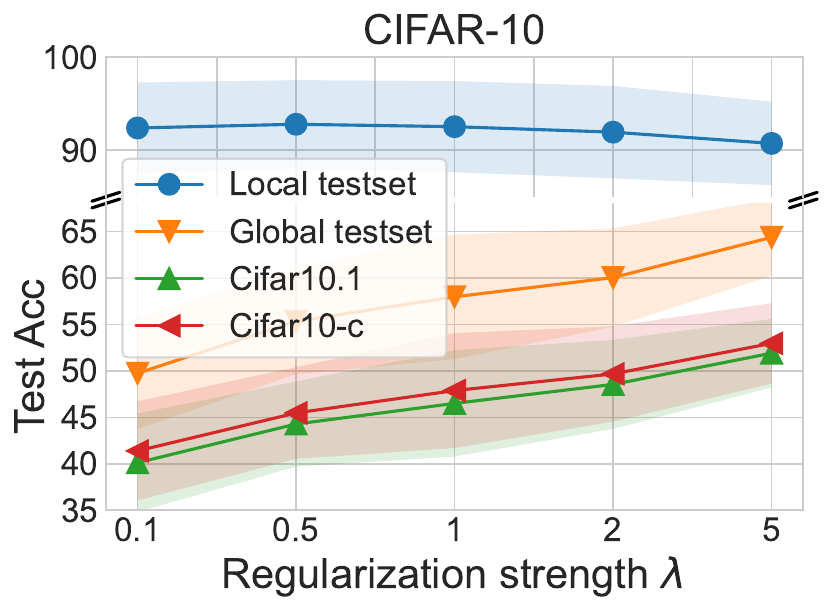}&
\includegraphics[width=\linewidth]{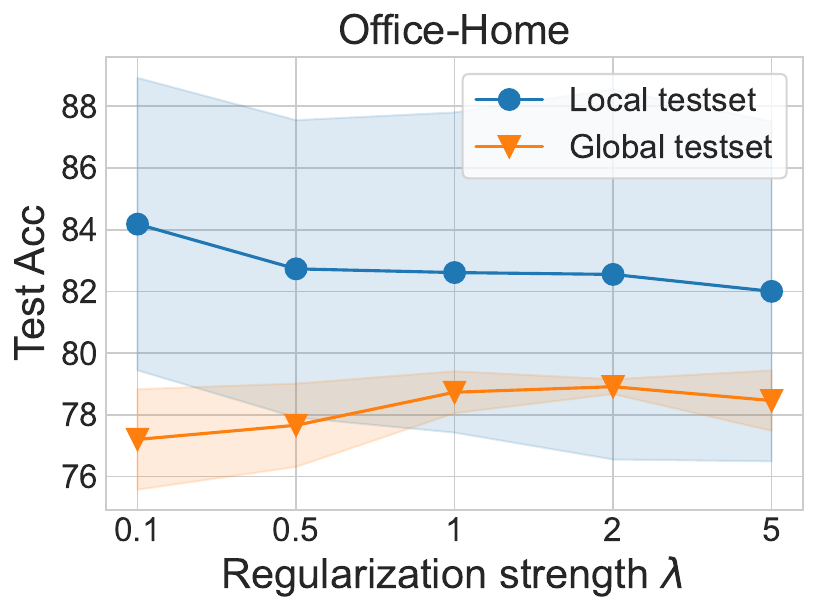}
\\[-1.2ex]
\end{tabular}
}
\end{subtable}
}
\caption{\small {Effect of $\lambda$ on \name} on \cifar and \office.}%
\label{fig:lambda}
\end{figure}

}

Results on \cifar and \office in \cref{fig:lambda} shows that moderately increasing regularization strength $\lambda$ can improve generalization, but it also degrades the personalized performance, which matches the observation for $\ell_2$ regularization-based pFL methods in \cite{pillutla2022federated}.

\newpage
\section{Generalization Analysis}\label{app:generalization-proof}
We give the discussions and analysis for our generalization bounds. The outline of this section is as follows:
\begin{itemize}[noitemsep,leftmargin=*]
    \item \cref{app:generalization-add-results} provides more discussions on  \cref{thm:main_distill}.
    \item \cref{app:generalization-preliminary} provides the peliminaries for generalization bounds and introduces several useful lemmas.  
     \item \cref{app:proof-g-model-generalization} provides the proofs for generalization bounds of global model in \cref{thm:main_distill}.
        \item \cref{app:proof-per-model-generalization} provides the proofs for generalization bounds of personalized model in \cref{thm:main_distill_personal}.
\end{itemize}

\subsection{Additional Discussion}\label{app:generalization-add-results}
\paragraph{Additional Discussion on \cref{thm:main_distill}.} From \cref{thm:main_distill}, we can have the additional observations:
(i) \textbf{Client heterogeneity.} Larger heterogeneity, i.e., higher distribution divergence  ${\hat d}_{\gH \Delta \gH} (\sD_m, \sD)$ between local and global datasets, could undermine the generalization  of  $g$, echoing the implications in~\cite{lin2020ensemble,zhu2021data} 
(ii) \textbf{Number of classes}. The smaller number of classes $k$ is favorable to generalization, as the classification task with fewer classes is easier to learn. We note that previous FL generalization bounds~\cite{lin2020ensemble,zhu2021data,marfoq2022personalized} are limited to binary classification cases.

\subsection{Peliminaries for Generalization Bounds}\label{app:generalization-preliminary}
Here we introduce several existing definitions and lemmas from learning theory. 
\begin{lemma}[Empirical Rademacher complexity~\cite{rademacher}]\label{lm:rademacher}
$\gG $ be a set of functions $\gZ \rightarrow [a,b] $,	$\forall \delta>0$.  Let $Z_1, \ldots, Z_n$ be i.i.d. random variables on $\gZ$ following some distribution $P$. The \textit{empirical Rademacher complexity} of $\gG$ with respect to the sample $(Z_1, \ldots, Z_n)$ is
\begin{equation}
	\widehat{\Re}_S(\gG):= \E_\sigma \left[\sup_{g \in \gG} \frac{1}{n} \sum_{i=1}^n \sigma_i g(x_i)\right]
\end{equation}
where $\sigma = (\sigma_1, \ldots, \sigma_n)^\top $ with $\sigma_i \sim \operatorname{unif}\{-1,1\}$, which is are known as Rademacher random variables.

Moreover, with probability at least $1-\delta$, we have w.r.t the draw of $S$ that 
	\begin{equation}
		\forall g \in \gG, \E[g(\gZ )] \leq \frac{1}{n} \sum_{i=1}^n g(x_i) + 2 \widehat{\Re}_S(\gG) + 3(b-a) \sqrt{\frac{\log(2/\delta)}{2n}}
	\end{equation}
\end{lemma}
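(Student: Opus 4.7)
The proof follows the classical three-step Rademacher-complexity recipe: concentration via McDiarmid, symmetrization, then concentration of the empirical Rademacher complexity. First, I would introduce the one-sided supremum deviation $\Phi(S) := \sup_{g \in \gG}\bigl(\E[g(Z)] - \tfrac{1}{n}\sum_{i=1}^n g(Z_i)\bigr)$ and invoke McDiarmid's bounded-differences inequality. Since $g$ takes values in $[a,b]$, replacing any single $Z_i$ changes $\Phi(S)$ by at most $(b-a)/n$, so with probability at least $1-\delta/2$ over the draw of $S$ we have $\Phi(S) \leq \E[\Phi(S)] + (b-a)\sqrt{\log(2/\delta)/(2n)}$.

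Second, I would bound $\E[\Phi(S)]$ by symmetrization. Introducing an independent ghost sample $S' = (Z_1',\ldots,Z_n')$ drawn from $P^n$, I would use $\E[g(Z)] = \E_{S'}\bigl[\tfrac{1}{n}\sum_i g(Z_i')\bigr]$, push the sup inside $\E_{S'}$ via Jensen, and then insert Rademacher signs $\sigma_i$, which is valid because the joint distribution of $(Z_i,Z_i')$ is invariant under swapping. A triangle inequality on the two halves $\sigma_i g(Z_i')$ and $-\sigma_i g(Z_i)$ then yields $\E[\Phi(S)] \leq 2\,\E_S[\widehat{\Re}_S(\gG)]$, i.e., twice the population Rademacher complexity.

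Third, I would apply McDiarmid's inequality a second time, now to the empirical Rademacher complexity $S \mapsto \widehat{\Re}_S(\gG)$. Because $|\sigma_i(g(Z_i)-g(Z_i'))|/n \leq (b-a)/n$, the bounded-differences constant is again $(b-a)/n$, so with probability at least $1-\delta/2$ we obtain $\E_S[\widehat{\Re}_S(\gG)] \leq \widehat{\Re}_S(\gG) + (b-a)\sqrt{\log(2/\delta)/(2n)}$. A union bound over the two $\delta/2$ events, combined with the symmetrization step, gives for every $g \in \gG$ the inequality $\E[g(Z)] - \tfrac{1}{n}\sum_i g(Z_i) \leq 2\widehat{\Re}_S(\gG) + 2(b-a)\sqrt{\log(2/\delta)/(2n)} + (b-a)\sqrt{\log(2/\delta)/(2n)}$, which collapses to the stated bound.

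The only real obstacle is bookkeeping the constant $3$: one must (i) split the failure budget as $\delta/2 + \delta/2$, (ii) keep the factor $2$ from symmetrization attached to $\widehat{\Re}_S(\gG)$ and not to the deviation term, and (iii) add the two separate $(b-a)\sqrt{\log(2/\delta)/(2n)}$ slacks. Since the lemma is cited as a standard result, the paper can simply reference \cite{rademacher} and sketch these three steps; no new ideas beyond the textbook proof are needed.
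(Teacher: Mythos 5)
Your proposal is the standard textbook proof (McDiarmid on the supremum deviation, symmetrization with a ghost sample, then McDiarmid again on the empirical Rademacher complexity, with the $\delta/2+\delta/2$ union bound producing the constant $3(b-a)$), and the constant bookkeeping is handled correctly. The paper gives no proof of its own --- it cites this lemma as a known result --- and your argument is precisely the one the cited reference supplies, so there is nothing to reconcile.
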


\begin{definition}
[Risk~\cite{ben2010theory}] 
\label{def:vareps}
We define a domain as a pair consisting of a distribution $\mu_S$ on inputs $\gX$  and a labeling function $h_S^*: \gX \to \Delta^k$. The probability according to the distribution $\mu_S$ that a hypothesis h disagrees with a labeling function $h_S^*$ (which can also be a hypothesis) is defined as
	\begin{equation}
\varepsilon_{\mu_S} (h)		=	\varepsilon_{\mu_S} (h,h_S^*)= \E_{(x,y)\sim \mu_S } |h(x)_y - h_S^*(x)_y|
	\end{equation}
\end{definition}

\begin{definition}
[$\gH$ -divergence~\cite{ben2010theory}]  
Given a domain  $\gX$ with $\mu$ and $\mu'$ probability distributions over $\gX$, 
let $\gH$ be a hypothesis class on  $\gX$  and denote by $I(h)$ the set for which $h\in \gH$ is the characteristic function; that is,
   where $(x,y) \in I(h) \Leftrightarrow  h(x)_y =1$. The $\gH$ -divergence between $\mu$ and $\mu'$ is
\begin{equation}
 d_{\gH \Delta \gH} (\mu, \mu') = 2 \sup_{h \in \gH } |\Pr_{\mu}(I(h)) - \Pr_{\mu'}(I(h)) | 
\end{equation}
\end{definition}

\begin{lemma}
[Domain adaptation~\cite{ben2010theory}]  
\label{lm:domainadapt}
Let $\gH$ be a hypothesis space on $\gX$ with VC dimension $d$. Considering the distributions $\mu_S$ and $\mu_{T}$. 
If $\mathcal{D}_S'$ and $\mathcal{D}_T'$  are samples of size $n$ from   $\mu_S$ and $\mu_{T}$  respectively and ${\hat d}_{\gH \Delta \gH} (\mathcal{D}_S', \mathcal{D}_T', n)$ is the empirical $\gH$ -divergence between samples, then
for every $h \in  \gH$ and any $\delta \in (0,1)$, with probability at least $1-\delta$ (over the choice of samples) , there exists,
$$
\varepsilon_{\mu_T}(h) \leq  \varepsilon_{\mu_S}(h) +
 \frac{1}{2} {\hat d}_{\gH \Delta \gH} (\mathcal{D}_S', \mathcal{D}_T') + 4\sqrt{\frac{2d \log (2n)+ \log(2/\delta)}{n}} + \lambda
$$
where $\lambda = \varepsilon_{\mu_T}(h^*) +  \varepsilon_{\mu_S}(h^*)  $ and  $h^* := \arg\min_{h \in\gH } \varepsilon_{\mu_T} (h) +  \varepsilon_{\mu_S} (h)$ corresponds to ideal joint hypothesis that minimizes the combined error. 
\end{lemma}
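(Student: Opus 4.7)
The plan is to follow the classical argument of Ben-David et al.\ (2010): we first control the population risk gap via an ideal joint hypothesis and the $\mathcal{H}\Delta\mathcal{H}$-divergence, and then replace the population divergence by its empirical counterpart via a VC-uniform-convergence step. Throughout, we treat $\varepsilon_{\mu}(h,h') := \mathbb{E}_{x\sim\mu}|h(x)-h'(x)|$ as a pseudometric on hypotheses (triangle inequality holds), and write $\varepsilon_\mu(h)=\varepsilon_\mu(h,h_\mu^*)$ for the labelling function $h_\mu^*$ of $\mu$.

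\textbf{Step 1 (triangle inequality via the ideal joint hypothesis).} Let $h^* = \arg\min_{h\in\mathcal{H}}\{\varepsilon_{\mu_S}(h)+\varepsilon_{\mu_T}(h)\}$, so that $\lambda = \varepsilon_{\mu_S}(h^*)+\varepsilon_{\mu_T}(h^*)$. Applying the triangle inequality for $\varepsilon_{\mu_T}(\cdot,\cdot)$ and then adding and subtracting $\varepsilon_{\mu_S}(h,h^*)$, I would obtain
\begin{align*}
\varepsilon_{\mu_T}(h) &\le \varepsilon_{\mu_T}(h^*) + \varepsilon_{\mu_T}(h,h^*) \\
&\le \varepsilon_{\mu_T}(h^*) + \varepsilon_{\mu_S}(h,h^*) + |\varepsilon_{\mu_T}(h,h^*)-\varepsilon_{\mu_S}(h,h^*)| \\
&\le \varepsilon_{\mu_T}(h^*) + \varepsilon_{\mu_S}(h) + \varepsilon_{\mu_S}(h^*) + |\varepsilon_{\mu_T}(h,h^*)-\varepsilon_{\mu_S}(h,h^*)|,
\end{align*}
so that the first three terms combine to $\varepsilon_{\mu_S}(h)+\lambda$, and what remains is to bound the last absolute difference by $\tfrac{1}{2}\hat d_{\mathcal{H}\Delta\mathcal{H}}(\mathcal{D}'_S,\mathcal{D}'_T)$ plus a VC-type deviation.

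\textbf{Step 2 (disagreement vs.\ population $\mathcal{H}\Delta\mathcal{H}$-divergence).} I would show the standard identity $\sup_{h,h'\in\mathcal{H}} |\varepsilon_{\mu_T}(h,h')-\varepsilon_{\mu_S}(h,h')| \le \tfrac{1}{2}\, d_{\mathcal{H}\Delta\mathcal{H}}(\mu_S,\mu_T)$. The argument is: for any pair $h,h'$, the indicator of their disagreement equals the characteristic function of some classifier $g_{h,h'}\in\mathcal{H}\Delta\mathcal{H}$, so the difference of the two expected disagreements is exactly $\Pr_{\mu_S}[I(g_{h,h'})]-\Pr_{\mu_T}[I(g_{h,h'})]$, and taking the supremum yields half of $d_{\mathcal{H}\Delta\mathcal{H}}(\mu_S,\mu_T)$ by definition. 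Specialising to the pair $(h,h^*)$ used in Step~1 gives the population-level inequality.

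\textbf{Step 3 (empirical-to-population divergence via VC).} The remaining task is to replace $d_{\mathcal{H}\Delta\mathcal{H}}(\mu_S,\mu_T)$ by the empirical $\hat d_{\mathcal{H}\Delta\mathcal{H}}(\mathcal{D}'_S,\mathcal{D}'_T)$ up to a concentration term. The class $\mathcal{H}\Delta\mathcal{H}$ has VC-dimension at most $2d$; hence for any single classifier $g\in\mathcal{H}\Delta\mathcal{H}$, $|\Pr_{\mu_S}[I(g)]-\widehat{\Pr}_{\mathcal{D}'_S}[I(g)]|$ and the analogous quantity on $\mu_T$ can be controlled uniformly by Vapnik–Chervonenkis uniform convergence (growth-function bound), giving with probability $\ge 1-\delta$ a deviation of order $\sqrt{(2d\log(2n)+\log(2/\delta))/n}$ for each sample, and by a union bound on the two samples and the factor $\tfrac{1}{2}$ carried from Step~2 one arrives at the additive term $4\sqrt{(2d\log(2n)+\log(2/\delta))/n}$. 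Combining the three steps gives the stated bound.

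\textbf{Main obstacle.} The conceptually cleanest step is Step~1, and Step~3 is a standard VC argument; the main technical subtlety lies in Step~2, namely verifying carefully that for \emph{arbitrary} pairs $(h,h')$ the pointwise disagreement indicator $\mathbf{1}[h(x)\ne h'(x)]$ really is realised as $I(g)$ for some $g\in\mathcal{H}\Delta\mathcal{H}$, and then correctly absorbing the factor $\tfrac{1}{2}$ that appears in the definition of $d_{\mathcal{H}\Delta\mathcal{H}}$ so that the final coefficient in front of $\hat d_{\mathcal{H}\Delta\mathcal{H}}$ is exactly $\tfrac{1}{2}$ rather than $1$. Once this bookkeeping is handled, the rest is a transparent assembly of Steps~1–3, matching the Ben-David et al.\ bound verbatim.
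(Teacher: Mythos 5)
The paper does not actually prove this lemma: it is imported verbatim from Ben-David et al.\ (2010) as a preliminary (\cref{lm:domainadapt} carries only the citation and is used as a black box in the proof of \cref{lm:part_domain_adapt}), so the relevant comparison is with the original source's proof, which your proposal reconstructs exactly --- the triangle-inequality chain through the ideal joint hypothesis $h^*$, the identity $\sup_{h,h'\in\gH}\left|\varepsilon_{\mu_T}(h,h')-\varepsilon_{\mu_S}(h,h')\right| \le \tfrac{1}{2}\, d_{\gH\Delta\gH}(\mu_S,\mu_T)$ via disagreement indicators being characteristic functions of sets in $\gH\Delta\gH$, and the two-sample VC concentration applied to $\gH\Delta\gH$ with VC dimension at most $2d$. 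Your argument is correct, and your constant bookkeeping in Step~3 in fact yields $\tfrac{1}{2}\cdot 4 = 2$ in front of the square-root deviation term, slightly sharper than the stated coefficient $4$, which is simply the looser constant Ben-David et al.\ report.
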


\subsubsection{Useful Lemmas} \label{app:deferred-generalization-lemmas}
Then, we introduce several useful lemmas.
\begin{lemma}\cite{hsu2021generalization}\label{lm:hsuphi}
For any $v \in \mathbb{R}^k$ and $y \in [k]$,
$$2\left(1- v \right)_y \geq \mathds{1} \left[y \neq \underset{i}{\arg \max}  v_i\right].$$
\end{lemma}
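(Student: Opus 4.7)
The plan is to proceed by direct case analysis on whether $y$ achieves the argmax of $v$, but first I would point out that the statement is most naturally read with the implicit hypothesis that $v$ is a probability vector (i.e., $v \in [0,1]^k$ with $\sum_i v_i = 1$). This matches the way the lemma is invoked in \cref{eq:psi-g-and-main}, where $v = g(x)$ is a softmax output, and without such an assumption the inequality can easily fail (e.g., $v = (-10, -10)$, $y=1$).

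First, in the easy case $y = \arg\max_i v_i$, the indicator on the RHS vanishes, so the inequality reduces to $2(1 - v_y) \geq 0$, which holds since $v_y \leq 1$ for any probability vector.

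Second, in the case $y \neq \arg\max_i v_i$, let $y^\star := \arg\max_i v_i$, so $v_{y^\star} \geq v_y$. The crux is to show $v_y \leq 1/2$, which then gives $2(1-v_y) \geq 1$, matching the indicator. This bound follows immediately from
\[
2 v_y \;\leq\; v_y + v_{y^\star} \;\leq\; \sum_{i=1}^k v_i \;=\; 1,
\]
where the last equality uses that $v$ lies on the probability simplex and the middle inequality uses nonnegativity of the remaining components.

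I do not anticipate a genuine obstacle: the argument is a two-line case split. The only subtlety to flag explicitly in the write-up is the (implicit) simplex assumption on $v$, which should be stated cleanly so the reader sees why both the $v_y \leq 1$ step in Case 1 and the summation step in Case 2 are justified.
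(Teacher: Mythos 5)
Your proof takes essentially the same two-case argument as the paper's, and it is correct. You additionally supply the justification for $v_y \leq 1/2$ in the second case (via $2v_y \leq v_y + v_{y^\star} \leq \sum_i v_i = 1$) and correctly flag that the probability-simplex hypothesis is needed even though the statement says $v \in \mathbb{R}^k$ --- the paper's own proof silently assumes $v \in [0,1]^k$ and asserts $v_y \leq 1/2$ without proof, so your write-up is the more careful of the two.
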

\begin{proof}
Let $v \in \mathbb{R}^k$ be given, and consider two cases. For the first case, if $y=\underset{i}{\arg \max}  v_i$, then $v \in [0,1]^k$  implies
$2(1-v) \geq 0 =\mathds{1} \left[y \neq \underset{i}{\arg \max}  v_i\right]$. 
For the second case, if $y\neq \underset{i}{\arg \max}  v_i$, then $v_y \leq 1/2 $  and 
$2(1-v) \geq 1 =\mathds{1} \left[y \neq \underset{i}{\arg \max}  v_i\right]$.
Combining the two cases together, we prove the lemma. 
\end{proof}

\begin{lemma}  \label{lm:phi_rad}
For any functions $\mathcal{H}$ with $\mathcal{H} \ni h: \mathcal{X} \rightarrow \mathbb{R}^k$, since $\gH $ takes values in $ \mathbb{R}^k$, let $H|_j$ denote the Rademacher complexity of each class $j$, 
$$
\operatorname{Rad}_n\left(\left\{(x, y) \mapsto 1- h(x)_y: h \in \mathcal{H}\right\}\right)=\mathcal{O}\left(\sqrt{k} \max_{j} \operatorname{Rad}_n(\mathcal{H}|_j)\right)
$$
where $ \max_{k} \operatorname{Rad}_n(\mathcal{H}|_k)$ is the worst-case per class Rademacher complexity. 
\end{lemma}
\begin{proof}
The proof follows from a multivariate Lipschitz composition lemma for Rademacher complexity due to  \citep[Theorem 1]{foster2019vector}; it remains to prove that $v \mapsto \psi (v)_y$ is $1$-Lipschitz with respect to the $\ell_{\infty}$ norm for any $v \in \mathbb{R}^k$ and $y \in [k]$. 
$$ 
\frac{\mathrm{d}}{\mathrm{d} v_y} \psi (v)_y  = \frac{\mathrm{d}}{\mathrm{d} v_y} (1-v_y)= -1 , \quad 
\frac{\mathrm{d}}{\mathrm{d}  v_{i \neq y}} \psi (v)_y = \frac{\mathrm{d}}{\mathrm{d}  v_{i \neq y}} (1- v_y )=0
$$
and therefore $\|\nabla \psi (v)_y \|_1 = 1 $ and thus, by the  mean value theorem, for any $u \in \mathbb{R}^k$ and $v \in \mathbb{R}^k$, there exists $z \in [u, v]$ such that 
$$| \psi (v)_y - \psi (u)_y| =| \left\langle  \nabla \psi (z)_y, v- u  \right \rangle  \leq \|v-u\|_{\infty} \|  \nabla \psi (v)_y  \|_1 \leq  \|v-u\|_{\infty}.$$
In particular, $v \mapsto \psi (v)_y $ is 1-Lipschitz with respect to the $\ell_\infty$ norm. Applying the aforementioned Lipschitz composition rule \citep[Theorem 1]{foster2019vector},
$$
\operatorname{Rad}_n\left(\left\{(x, y) \mapsto 1- h(x)_y: h \in \mathcal{H}\right\}\right)=\operatorname{Rad}_n\left(\left\{(x, y) \mapsto \psi(h(x))_y: h \in \mathcal{H}\right\}\right)=\mathcal{O}\left(\sqrt{k} \max_{j} \operatorname{Rad}_n(\mathcal{H}|_j)\right)
$$
\end{proof}

\begin{lemma}\label{lm:fm_rad}
For any functions $\mathcal{H}_m$ with $\mathcal{H}_m \ni h_m: \mathcal{X} \rightarrow \mathbb{R}^k$ with any $m \in [M]$, and  $h \in \gH $ where $h(x):=\frac{1}{M}\sum_{m=1}^M h_m (x)$ for any $x \in \mathcal{X}$
\begin{equation}
\operatorname{Rad}_n(\mathcal{H}|_j) 	= \frac{1}{M} \sum_{m=1}^M \operatorname{Rad}_n(\mathcal{H}_m|_j)	
\end{equation}
\end{lemma}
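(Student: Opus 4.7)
The plan is a direct computation from the definition of empirical Rademacher complexity, exploiting the fact that the ensemble hypothesis class $\mathcal{H}$ is a Minkowski average of the independent classes $\mathcal{H}_1,\dots,\mathcal{H}_M$. The only non-trivial step is interchanging a supremum with a finite sum, which is valid precisely because each component $h_m$ ranges over its own class $\mathcal{H}_m$ independently of the others.

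First I would unroll the definition. By Lemma~\ref{lm:rademacher} and the fact that $h(x)_j=\frac{1}{M}\sum_{m=1}^M h_m(x)_j$,
\begin{align*}
\operatorname{Rad}_n(\mathcal{H}|_j)
&= \E_\sigma\Bigl[\sup_{h\in\mathcal{H}}\frac{1}{n}\sum_{i=1}^n \sigma_i\, h(x_i)_j\Bigr] \\
&= \E_\sigma\Bigl[\sup_{h_1\in\mathcal{H}_1,\dots,h_M\in\mathcal{H}_M}\frac{1}{n}\sum_{i=1}^n \sigma_i\,\frac{1}{M}\sum_{m=1}^M h_m(x_i)_j\Bigr] .
\end{align*}

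Next I would pull the $1/M$ outside and swap the supremum with the sum over $m$. Since the functional $h_m\mapsto \frac{1}{n}\sum_i \sigma_i h_m(x_i)_j$ only depends on $h_m$, and the $M$ classes $\mathcal{H}_1,\dots,\mathcal{H}_M$ are varied independently,
\begin{align*}
\sup_{h_1,\dots,h_M}\sum_{m=1}^M\Bigl(\frac{1}{n}\sum_{i=1}^n \sigma_i h_m(x_i)_j\Bigr)
=\sum_{m=1}^M\sup_{h_m\in\mathcal{H}_m}\frac{1}{n}\sum_{i=1}^n \sigma_i h_m(x_i)_j .
\end{align*}
Plugging this in and using linearity of expectation gives
\begin{align*}
\operatorname{Rad}_n(\mathcal{H}|_j)
&=\frac{1}{M}\sum_{m=1}^M \E_\sigma\Bigl[\sup_{h_m\in\mathcal{H}_m}\frac{1}{n}\sum_{i=1}^n \sigma_i h_m(x_i)_j\Bigr]
=\frac{1}{M}\sum_{m=1}^M \operatorname{Rad}_n(\mathcal{H}_m|_j),
\end{align*}
as claimed.

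The main (and essentially only) subtlety is the ``$\sup=\sum\sup$'' step: it relies on $\mathcal{H}$ being the full Minkowski average $\{\tfrac{1}{M}\sum_m h_m : h_m\in\mathcal{H}_m\}$ rather than, say, the restriction where all $h_m$ are tied or come from a common parameter. Under the paper's setup the local hypotheses are independently trained on different clients, so this is the intended interpretation and no further assumption is needed.
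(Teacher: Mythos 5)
Your proposal is correct and follows essentially the same route as the paper: unroll the definition, use that the ensemble class is a Minkowski average over independently varying $h_m\in\mathcal{H}_m$ to interchange the supremum with the sum over $m$, then apply linearity of expectation. You spell out the $\sup=\sum\sup$ interchange more explicitly than the paper does, which is a welcome clarification but not a different argument.
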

\begin{proof}
\begin{align*}
\operatorname{Rad}_n(\mathcal{H}|_j) &=\frac{1}{n} \mathbb{E}_\epsilon \sup _{h \in \mathcal{H}} \sum_{i=1}^n \epsilon_i h\left(x_i\right)_{j} \\
&= \frac{1}{n} \mathbb{E}_\epsilon \sup _{h_1, \ldots, h_M \in \mathcal{H}} \sum_{i=1}^n \epsilon_i \left(\frac{1}{M}\sum_{m=1}^M h_m (x_i)\right)_{j}  \\ 
& =\frac{1}{M} \sum_{m=1}^M  \frac{1}{n} \mathbb{E}_\epsilon \sup _{h_m \in \mathcal{H}} \sum_{i=1}^n \epsilon_i h_m (x_i)_{j} \\
& =\frac{1}{M} \sum_{m=1}^M \operatorname{Rad}_n(\mathcal{H}_m|_j)	
\end{align*}
\end{proof}

\subsection{Proofs for Generalization Bounds of Global Model \cref{thm:main_distill}} \label{app:proof-g-model-generalization}

\paragraph{Overview}
Recall the definition of distillation distance:
 \begin{align}\label{eq:def_phi}
\Phi_{\mu, n} (h_1,\ldots, h_M; g ):= \frac{1}{n} \sum_{i=1}^n \|  g(x_i) -  \frac{1}{M}\sum_{m=1}^M h_m(x_i) \|_1 
\end{align}
which measures the output difference between the global model and the ensemble of local models.
The server distillation (Line \ref{algoline:server-update} in \cref{algo:fl}) essentially finds the global model $g$ with a small distillation distance $\Phi_{\mu_{\mathtt{aux}}, n_{\mathtt{aux}}}$, meaning that its outputs are close to the ensemble outputs of local models $f_1,\ldots, f_M$ on the out-of-domain distillation dataset $\sD_{\mathtt{aux}}$. 

For the generalization bounds of the global model,  we aim to show  $g$ can have good generalization bounds on $\mu$ with KD if it (1) distills knowledge accurately from teachers $\{f_m\}$ and (2) the teachers $\{f_m\}$ performs well on their local distributions $\{\mu_m\}$. 
To sketch the idea, by  \cref{lm:hsuphi}, we can upper bound error probabilities of $g$ with the expected distillation distances and errors of local models (i.e., teachers) on $\mu$: 
\begin{align}
&{\operatorname{Pr}_{(x, y)\sim\mu}\left[\underset{y^{\prime}}{\arg \max } g(x)_{y^{\prime}} \neq y\right]   = \mathbb{E}_{(x, y)\sim\mu}  \mathds{1} \left[\underset{y^{\prime}}{\arg \max } g(x)_{y^{\prime}} \neq y\right]  } \\
&\leq {2  \underbrace{\underset{x\sim \mu}{\mathbb{E}}\left\|g(x) -\frac{1}{M}\sum_{m=1}^M h_m(x) \right\|_1}_{\text{ensemble distillation distance}}  +2 \underbrace{ \underset{(x, y)\sim \mu} {\mathbb{E}}\left(1-  \frac{1}{M}\sum_{m=1}^M h_m(x)_y\right)}_{\text{errors of teacher models}}}
\end{align}
Then, we can relate the errors of local models $h_m$ on $\mu$ to $\mu_m$ with prior arts from domain adaptation~\cite{ben2010theory}.

To simply our notations, we define \textbf{ ``virtual hypothesis'' $h \in \gH: \gX \rightarrow [0, 1]^k$ , whose outputs are the ensemble outputs from all local models}:
$$h(x):=\frac{1}{M}\sum_{m=1}^M h_m (x).$$

\paragraph{Main Analysis}
Let us recall \cref{thm:main_distill}. 
\globalgen*
To prove the generalization bounds of the global model \cref{thm:main_distill}, we use \cref{lm:part_one} as a bridge.

\begin{lemma}
\label{lm:part_one}
Let classes of bounded functions $\gH$ and $\gG$ be given with  $h \in \gH: \gX \rightarrow [0, 1]^k$ and  $g \in \gG : \gX \rightarrow [0, 1]^k$. Suppose $\{x_i\}_{i=1}^{n_{\mathtt{aux}}}$ is sampled from a distribution $\mu_{\mathtt{aux}}$.
For every $h \in \gH$ and every  $g \in\gG $, with probability at least $1-\delta$,
\begin{align*}
\E_{(x,y)\sim \mu }   g(x)_y   & \leq    \E_{(x,y)\sim \mu }   h(x)_y +  \frac{1}{n_{\mathtt{aux}}} \sum_{i=1}^{n_{\mathtt{aux}}}\min \left\{1,\|g(x_i)-h(x_i)\|_1\right\} + 2\mathbb{TV}(\mu,\mu_{\mathtt{aux}}) + 3\sqrt{\frac{\log(2/\delta)}{2{n_{\mathtt{aux}}}}} \\
&  + 2 \sum_{y^{\prime}=1}^k  \left(\operatorname{Rad}_{n_{\mathtt{aux}}}\left(\left\{x \mapsto h(x)_{y^{\prime}}:  h \in \mathcal{H} \right\}\right)+ \operatorname{Rad}_{n_{\mathtt{aux}}}\left(\left\{x \mapsto g(x)_{y^{\prime}}: g \in \mathcal{G}\right\}\right)\right)
\end{align*}
\end{lemma}

\begin{proof}
To start, for any $h \in \gH, g \in \gG$, write 
\begin{align*}
	\E_{x,y}  g(x)_y = \E_{x,y} (g(x)-h(x))_y + \E_{x,y}  h(x)_y   
\end{align*}
For the first term, since $h : \gX \rightarrow [0, 1]^k$ and  $g : \gX \rightarrow [0, 1]^k$, by Holder’s inequality
\begin{align*}
\E_{x,y} (g(x)-h(x))_y &=\int \min \left\{1,(g(x)-h(x))_y\right\} \mathrm{d} \mu(x, y) \\
& \leq \int \min \left\{1,\|g(x)-h(x)\|_1\right\} \mathrm{d} \mu_{\mathcal{X}}(x) 
\end{align*}
Here we need $1$ in  $\min \left\{1,(g(x)-h(x))_y\right\} $ to make the upper bound tighter, since  $(g(x)-h(x))_y \leq 1$ always hold.

Note that for any two measures $\mu$ and $\nu$, and for any continuous function $f(x)$ in $[0,1]$,
\begin{align*}
    \int h(x) (\mathrm d\mu(x) - \mathrm d\nu(x)) &= \int_{x\in A} f(x) (\mathrm d\mu(x) - \mathrm d\nu(x)) + \int_{x\in B} f(x) (\mathrm d\mu(x) - \mathrm d\nu(x))\\
    &\leq |\mu(A) - \nu(A)| + |\mu(B) - \nu(B)|\\
    & \leq 2 \sup_{\text{measurable}\ S}|\mu(S) - \nu(S)| = 2\mathbb {TV}(\mu,\nu),
\end{align*}
where $A = \{x: \mathrm d\mu(x)\geq \mathrm d\nu(x)\}$ and $B = \{x: \mathrm d\mu(x)< \mathrm d\nu(x)\}$.

Once again invoking standard Rademacher complexity arguments~\cref{lm:rademacher}, with probability at least $1-\delta$, every mapping $x \mapsto \min \left\{1,\|g(x)-h(x)\|_1\right\}$  where $h \in \gH $ and  $g \in\gG $ satisfies

\begin{align*}
    \int \min \left\{1,\|g(x)-h(x)\|_1\right\} \mathrm{d} \mu_{\mathcal{X}}(x) & \leq \int \min\{1,\|g(x)-h(x)\|_1\} \mathrm d\mu_{\mathtt{aux}}(x) \\
    & \quad + \int \min\{1,\|g(x)-h(x)\|_1\} (\mathrm d\mu(x)-\mathrm d\mu_{\mathtt{aux}}(x))\\
    &\leq \int \min\{1,\|g(x)-h(x)\|_1\}\mathrm d\mu_{\mathtt{aux}}(x) + 2 \mathbb{TV}(\mu,\mu_{aux})\\
    &\leq \frac{1}{n_{\mathtt{aux}}} \sum_{i=1}^{n_{\mathtt{aux}}}\min \left\{1,\|g(x_i)-h(x_i)\|_1\right\} + 2\mathbb{TV}(\mu,\mu_{\mathtt{aux}}) + 3\sqrt{\frac{\log(2/\delta)}{2{n_{\mathtt{aux}}}}} \\
&  + 2 \operatorname{Rad}_{n_{\mathtt{aux}}}\left(\left\{ x \mapsto \min \left\{1,\|g(x)-h(x)\|_1\right\} : h \in \mathcal{H} , g \in \gG \right\}\right) 
\end{align*}

For the final Rademacher complexity estimate, first note $r \mapsto  \min \{ 1, r \}$ is $1$-Lipschitz and can be peeled off, and we use the definition of the empirical Rademacher complexity~(\cref{lm:rademacher}), thus
\begin{align*}
&\operatorname{Rad}_{n_{\mathtt{aux}}}\left(\left\{ x \mapsto \min \left\{1,\|g(x)-h(x)\|_1\right\} : h \in \mathcal{H} , g \in \gG \right\}\right)\\
&\leq  \operatorname{Rad}_{n_{\mathtt{aux}}}\left(\left\{ x \mapsto  \|g(x)-h(x)\|_1 : h \in \mathcal{H} , g \in \gG \right\}\right)\\
&= \mathbb{E}_\epsilon \sup _{\substack{h \in \mathcal{H} \\
g \in \mathcal{G}}} \frac{1}{n_{\mathtt{aux}}} \sum_{i=1}^{n_{\mathtt{aux}}} \epsilon_i\left\|g\left(x_i\right)-h\left(x_i\right)\right\|_1 \\
&\leq \sum_{y^{\prime}=1}^k \mathbb{E}_\epsilon \sup _{\substack{h \in \mathcal{H} \\
g \in \mathcal{G}}} \frac{1}{n_{\mathtt{aux}}} \sum_{i=1}^{n_{\mathtt{aux}}} \epsilon_i\left|g\left(x_i\right)-h\left(x_i\right)\right|_{y^{\prime}} \\
&=\sum_{y^{\prime}=1}^k  \operatorname{Rad}_{n_{\mathtt{aux}}}\left(\left\{x \mapsto|g(x)-h(x)|_{y^{\prime}}: h \in \mathcal{H}, g \in \mathcal{G}\right\}\right) .
\end{align*}

Since $h$ and $g$ have range $[0,1]^k$,
then $(h-g)_{y^{\prime}}$  has range $[ -1,1]$ for every $y^{\prime}$, 
and since $r \mapsto  |r| $ is $1$-Lipschitz over $[-1, 1]$, combining this with the Lipschitz composition rule for Rademacher complexity and also the fact that a Rademacher random vector $\epsilon \in \{\pm 1\}^n$ is distributionally equivalent to its coordinate-wise negation $-\epsilon$, then, for every $y^{\prime} \in [k]$ 
\begin{align*}
&\operatorname{Rad}_{n_{\mathtt{aux}}}\left(\left\{x \mapsto|g(x)-h(x)|_{y^{\prime}}: h \in \mathcal{H}, g \in \mathcal{G}\right\}\right) \\
&\leq  \operatorname{Rad}_{n_{\mathtt{aux}}}\left(\left\{x \mapsto(g(x)-h(x))_{y^{\prime}}: h \in \mathcal{H}, g \in \mathcal{G}\right\}\right) \\
&=\frac{1}{n_{\mathtt{aux}}} \mathbb{E}_\epsilon \sup _{h \in \mathcal{H}} \sup _{g \in \mathcal{G}} \sum_{i=1}^{n_{\mathtt{aux}}} \epsilon_i\left(h\left(x_i\right)-g\left(x_i\right)\right)_{y^{\prime}} \\
&=\frac{1}{n_{\mathtt{aux}}} \mathbb{E}_\epsilon \sup _{h \in \mathcal{H}} \sum_{i=1}^{n_{\mathtt{aux}}} \epsilon_i h\left(x_i\right)_{y^{\prime}}+\frac{1}{n_{\mathtt{aux}}} \mathbb{E}_\epsilon \sup _{g \in \mathcal{G}} \sum_{i=1}^{n_{\mathtt{aux}}}-\epsilon_i g\left(x_i\right)_{y^{\prime}} \\
&= \operatorname{Rad}_{n_{\mathtt{aux}}}\left(\left\{x \mapsto h(x)_{y^{\prime}}:  h \in \mathcal{H} \right\}\right)+ \operatorname{Rad}_{n_{\mathtt{aux}}}\left(\left\{x \mapsto g(x)_{y^{\prime}}: g \in \mathcal{G}\right\}\right)
\end{align*}
\end{proof}

Inspired by ~\cite{hsu2021generalization}, we introduce \cref{lm:hsuphi} to tackle the error probability 
$\operatorname{Pr}_{(x, y)\sim\mu}\left[\underset{y^{\prime}}{\arg \max } g(x)_{y^{\prime}} \neq y\right]$.

Let us define $\psi (v) = 1- v $.  According to \cref{lm:hsuphi}, we can derive the upper bound for $\operatorname{Pr}_{x, y}\left[\underset{y^{\prime}}{\arg \max } g(x)_{y^{\prime}} \neq y\right] $ in  \cref{thm:main_distill} as below 
\begin{align} \label{eq:psi-g-and-main}
\mathbb{E}_{x, y}  \psi(g(x))_y&=\mathbb{E}_{x, y} \left(1- g(x)_y\right) \nonumber \\
&\geq \frac{1}{2} \mathbb{E}_{x, y}\left[ \mathds{1}\left[\underset{y^{\prime}}{\arg \max }g(x)_{y^{\prime}} \neq y\right]\right]  \nonumber  \\
&= \frac{1}{2}\operatorname{Pr}_{x, y}\left[\underset{y^{\prime}}{\arg \max } g(x)_{y^{\prime}} \neq y\right]     
\end{align}

Then we will study the upper bound for $\mathbb{E}_{x, y}  \psi(g(x))_y$ in \cref{lm:psi-g}.

\begin{lemma}\label{lm:psi-g}
Let classes of bounded functions $\gH$ and $\gG$ be given with  $h \in \gH: \gX \rightarrow [0, 1]^k$ and  $g \in \gG : \gX \rightarrow [0, 1]^k$. Let classes of bounded functions $\gH_m$  be given with  $h_m \in \gH_m: \gX \rightarrow [0, 1]^k$, $\forall m \in [M]$. For every $h_m \in \gH_m , \forall m \in [M] $, and for every  $g \in\gG $, with probability at least $1-\delta$,
\begin{align*}
   \E_{(x,y)\sim \mu }  [1 - g(x)_y)] \leq& \E_{(x,y)\sim \mu }  [1 - h(x)_y] + \Phi_{\mu_{\mathtt{aux}},{n_{\mathtt{aux}}}} (h_1,\ldots, h_M; g )+ 2\mathbb TV( \mu, \mu_{\mathtt{aux}}) +3 \sqrt{\frac{\log(2/\delta)}{2{n_{\mathtt{aux}}}}}  \\
    & + \mathcal{O} \left({k^{3/2}}\left [ \max_{j} \left(\frac{1}{M} \sum_{m=1}^M \operatorname{Rad}_{n_{\mathtt{aux}}}(\mathcal{H}_m|_j)	\right)+ \max_{j} \operatorname{Rad}_{n_{\mathtt{aux}}}(\mathcal{G}|_j) \right]\right) 
\end{align*}
\end{lemma}
\begin{proof}
    We define two function classes 
$$\mathcal{Q}_{\mathcal{H}}:=\left\{(x, y) \mapsto \psi\left(h(x)_y\right):h \in \mathcal{H}\right\} \quad \text { and } \quad \mathcal{Q}_{\mathcal{G}}:=\left\{(x, y) \mapsto \psi\left(g(x)_y\right): g \in \mathcal{G}\right\} ,$$ 
and use the fact that:
$$\frac{1}{n_{\mathtt{aux}}} \sum_{i=1}^{n_{\mathtt{aux}}}\left\|\psi\left(g\left(x_i\right)\right)-\psi\left(h\left(x_i\right)\right)\right\|_1=\frac{1}{n_{\mathtt{aux}}} \sum_{i=1}^{n_{\mathtt{aux}}}\left\|1- g\left(x_i\right)-1+ h\left(x_i\right)\right\|_1=\Phi_{\mu_{\mathtt{aux}},{n_{\mathtt{aux}}}} (h_1,\ldots, h_M; g ).$$

We use $\mathcal{Q}_{\mathcal{H}}$ and $\mathcal{Q}_{\mathcal{G}}$ in  \cref{lm:part_one}, and  use \cref{lm:phi_rad} and \cref{lm:fm_rad} to estimate $\operatorname{Rad}_n\left( \mathcal{Q}_{\mathcal{H}} \right) $  and $\operatorname{Rad}_n\left( \mathcal{Q}_{\mathcal{G}} \right) $, with probability $1-\delta$, yielding

\begin{align*}
   & \E_{(x,y)\sim \mu }  [\psi(g(x))_y)]\\
   & \leq  \E_{(x,y)\sim \mu }  [\psi(h(x))_y)] +  \frac{1}{n_{\mathtt{aux}}} \sum_{i=1}^{n_{\mathtt{aux}}}\min \left\{1,\|\psi(g(x_i))-\psi(h(x_i))\|_1\right\} + 2\mathbb{TV}(\mu_{\mathtt{aux},\mu})  + 3\sqrt{\frac{\log(2/\delta)}{2{n_{\mathtt{aux}}}}} \\
& \quad +  2 \sum_{y^{\prime}=1}^k  \left(\operatorname{Rad}_{n_{\mathtt{aux}}}\left(\left\{x \mapsto \psi(h(x))_{y^{\prime}}:  h \in \mathcal{H} \right\}\right)+ \operatorname{Rad}_{n_{\mathtt{aux}}}\left(\left\{x \mapsto \psi(g(x))_{y^{\prime}}: g \in \mathcal{G}\right\}\right)\right)\\
& \leq  \E_{(x,y)\sim \mu }  [1 - h(x)_y)] + \Phi_{\mu_{\mathtt{aux}}, {n_{\mathtt{aux}}}} (h_1,\ldots, h_M; g ) + 2\mathbb {TV}(\mu_{\mathtt{aux}},\mu)+3 \sqrt{\frac{\log(2/\delta)}{2{n_{\mathtt{aux}}}}} \\
& \quad +    \mathcal{O} \left({k^{3/2}}\left [ \max_{j} \operatorname{Rad}_{n_{\mathtt{aux}}}(\mathcal{H}|_j)+ \max_{j} \operatorname{Rad}_{n_{\mathtt{aux}}}(\mathcal{G}|_j) \right]\right) \tag*{(Due to \cref{eq:def_phi} and \cref{lm:phi_rad})}\\
&= \E_{(x,y)\sim \mu }  [1 - h(x)_y] + \Phi_{\mu_{\mathtt{aux}}, {n_{\mathtt{aux}}}} (h_1,\ldots, h_M; g )+ 2\mathbb {TV}(\mu_{\mathtt{aux}},\mu) + 3 \sqrt{\frac{\log(2/\delta)}{2{n_{\mathtt{aux}}}}} \\
&\quad +    \mathcal{O} \left({k^{3/2}}\left [ \max_{j} \left(\frac{1}{M} \sum_{m=1}^M \operatorname{Rad}_{n_{\mathtt{aux}}}(\mathcal{H}_m|_j)	\right)+ \max_{j} \operatorname{Rad}_{n_{\mathtt{aux}}}(\mathcal{G}|_j) \right]\right) \tag*{(Due to \cref{lm:fm_rad})}\\
\end{align*}
\end{proof}

To show our generalization bounds in \cref{thm:main_distill}, it remains to bound $\E_{(x,y)\sim \mu }  [1 - h(x)_y]$ in \cref{lm:psi-g}. 
\begin{lemma}\label{lm:part_domain_adapt}
Let classes of bounded functions $\gH_m$  be given with  $h_m \in \gH_m: \gX \rightarrow [0, 1]^k$, $\forall m \in [M]$, and $d_m$ be the VC dimension of $\gH_m$. Then with probability at least $ 1- \delta$ over the draw of $\mathcal{D}'=\{(x_i,y_i)\}_{i=1}^n$ from distribution $\mu$, and $\mathcal{D}_m'$ from distribution $\mu_m$ with size $n$ , for every $h_m \in \gH_m , \forall m \in [M] $,

\begin{align*}
\E_{(x,y)\sim \mu }  \left  [ 1 - h(x)_y \right ]  \leq & \frac{1}{M} \sum_{m=1}^M  \left(  \E_{(x,y)\sim \mu_m } \left [1 -  {h_m(x)}_y  \right]  +  \frac{1}{2} {\hat d}_{\gH \Delta \gH} (\mathcal{D}_m', \mathcal{D}')  \right. \\
& \left.  \quad +  \lambda_m +  4\sqrt{\frac{2d_m \log (2n)+ \log(2M/\delta)}{n}} \right) 
\end{align*}
where $\lambda_m = \varepsilon_{\mu_m}(h^*) +  \varepsilon_{\mu}(h^*)  $ and  $h^* := \arg\min_{h \in\gH } \varepsilon_{\mu_m} (h) +  \varepsilon_{\mu} (h)$.
\end{lemma}
\begin{proof}
Since the predictions from different local models ${h_m}$ are independent,  we can expand $h(x)$ as below:
\begin{align*}
	\E_{(x,y)\sim \mu }  \left [ 1 - h(x)_y \right ] &= \E_{(x,y)\sim \mu }  \left [ 1 - \left (\frac{1}{M} \sum_{m=1}^M {h_m(x)}_y \right) \right ] = \frac{1}{M}  \sum_{m=1}^M  \E_{(x,y)\sim \mu } \left [1 -  {h_m(x)}_y  \right] 
\end{align*}

We apply \cref{lm:domainadapt} for the target distribution $\mu$ and each local distribution $\mu_m$. Concretely, with probability $1-\delta/M $, 

\begin{align*}
 &  \E_{(x,y)\sim \mu } \left [1- h_m(x)_y \right] \\
 =&\E_{(x,y)\sim \mu } |h_m(x)_y - h_{\mu}^*(x)_y|\tag{use the fact of labeling function that $ h_{\mu}^*(x)_y =1, (x,y)\sim \mu $ }\\
 =&  \varepsilon_{\mu}(h_m) \tag{use the labeling function as in \cref{def:vareps}}\\
 \leq& \varepsilon_{\mu_m}(h_m) +
 \frac{1}{2} {\hat d}_{\gH \Delta \gH} (\mathcal{D}_m', \mathcal{D}') + 4\sqrt{\frac{2d_m \log (2n)+ \log(2M/\delta)}{n}} + \lambda_m\\
=& \E_{(x,y)\sim \mu_m } |h_m(x)_y - h_{\mu_m}^*(x)_y| +
 \frac{1}{2} {\hat d}_{\gH \Delta \gH} (\mathcal{D}_m', \mathcal{D}') + 4\sqrt{\frac{2d_m \log (2n)+ \log(2M/\delta)}{n}} + \lambda_m\\
 =& \E_{(x,y)\sim \mu_m } \left [1- h_m(x)_y \right]  +
 \frac{1}{2} {\hat d}_{\gH \Delta \gH} (\mathcal{D}_m', \mathcal{D}') + 4\sqrt{\frac{2d_m \log (2n)+ \log(2M/\delta)}{n}} + \lambda_m\tag{use the fact of labeling functions that $h_{\mu_m}^*(x)_y =1 , (x,y)\sim \mu_m $ }\\
\end{align*}
where $\lambda_m = \varepsilon_{\mu_m}(h^*) +  \varepsilon_{\mu}(h^*)  $ and  $h^* := \arg\min_{h \in\gH } \varepsilon_{\mu_m} (h) +  \varepsilon_{\mu} (h)$.

Combing all $m\in [M]$ together, with with probability $1-\delta$,   we have 
\begin{align*}
&\E_{(x,y)\sim \mu }  \left [ 1 - h(x)_y \right ] \\
& =\frac{1}{M}  \sum_{m=1}^M  \E_{(x,y)\sim \mu } \left [1 -  {h_m(x)}_y  \right]  \\
& \leq  \frac{1}{M} \sum_{m=1}^M  \left(  \E_{(x,y)\sim \mu_m } \left [1 -  {h_m(x)}_y  \right]    +  \frac{1}{2} {\hat d}_{\gH \Delta \gH} (\mathcal{D}_m', \mathcal{D}') +  \lambda_m +  4\sqrt{\frac{2d_m \log (2n)+ \log(2M/\delta)}{n}} \right) 
\end{align*}
\end{proof}

\begin{lemma}\label{lm:empirical_risk_eachm_rademacher}
With probability at least $1-\delta$, we have w.r.t the draw of $\sD_{m} \sim \mu_{m}$  with $|\sD_m|=n$ that 
	\begin{equation}
     \E_{(x,y)\sim \mu_m } \left [1 -  {h_m(x)}_y  \right]  \leq \mathtt{ERR}(\sD_m, h_m) + 2 \operatorname{Rad}_{n}(\mathcal{H}_m)  + 3 \sqrt{\frac{\log(2/\delta)}{2n}}
	\end{equation}
\end{lemma}
where $\mathtt{ERR}(\sD_m, h_m) =  \frac{1}{n} \sum_{j=1}^{n} \left[1- h_m(x_{m,j})_{y_{m,j}}\right]$.
\begin{proof}
The proofs directly follow \cref{lm:rademacher} with $b =1, a=0$.
\end{proof}

Given \cref{lm:psi-g}  and \cref{lm:part_domain_adapt} with at least $1-\delta/3$  probability for each event, and \cref{lm:empirical_risk_eachm_rademacher} with at least $1-\delta/3M$ probability for each local model $m \in [M]$, 
we can bound  
$\mathbb{E}_{x, y}  \psi(g(x))_y$ in  \cref{eq:psi-g-and-main}, which proves the main result in \cref{thm:main_distill}. 

\subsection{Proof for Generalization Bounds of Personalized Models in \cref{thm:main_distill_personal}} \label{app:proof-per-model-generalization} 

\paragraph{Overview}
For the generalization bounds of the personalized models, we will upper bound error probabilities of $p_m$ with the expected prediction distances between the global model and personalized model on $\mu$ as well as  errors of the global model on $\mu$.

\paragraph{Main Analysis}
The proofs for \cref{thm:main_distill_personal} are similar to \cref{lm:part_one} and \cref{lm:psi-g}. 
We first introduce \cref{lm:per-general-partone} as below.

\begin{lemma}
\label{lm:per-general-partone}
Let classes of bounded functions $\gP_m$ and $\gG$ be given with  $p_m \in \gP_m: \gX \rightarrow [0, 1]^k$ and  $g \in \gG : \gX \rightarrow [0, 1]^k$. Suppose $\{x_i\}_{i=1}^n$ is sampled from a distribution $\mu$.
For every $p_m \in \gP_m$ and every  $g \in\gG $, with probability at least $1-\delta$,
\begin{align*}
\E_{(x,y)\sim \mu }   p_m(x)_y   & \leq    \E_{(x,y)\sim \mu }   g(x)_y +  \frac{1}{n} \sum_{i=1}^n\min \left\{1,\|p_m(x_i)-g(x_i)\|_1\right\}  + 3\sqrt{\frac{\log(2/\delta)}{2n}} \\
&  + 2 \sum_{y^{\prime}=1}^k  \left(\operatorname{Rad}_n\left(\left\{x \mapsto p_m(x)_{y^{\prime}}:  p_m \in \mathcal{P}_m \right\}\right)+ \operatorname{Rad}_n\left(\left\{x \mapsto g(x)_{y^{\prime}}: g \in \mathcal{G}\right\}\right)\right)
\end{align*}
\end{lemma}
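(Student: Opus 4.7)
The plan is to follow the same three-step template as in the proof of Lemma \ref{lm:part_one}, specialized to the single distribution $\mu$ (so no TV-divergence term appears) and with $h$ replaced by $p_m$ and $g$ playing a symmetric role. First I would write the decomposition
\begin{align*}
\E_{(x,y)\sim\mu}\, p_m(x)_y \;=\; \E_{(x,y)\sim\mu}\,(p_m(x)-g(x))_y \;+\; \E_{(x,y)\sim\mu}\, g(x)_y,
\end{align*}
and, since both $p_m$ and $g$ take values in $[0,1]^k$, bound the first summand by H\"older's inequality and the trivial bound $(p_m(x)-g(x))_y\le 1$, yielding
\begin{align*}
\E_{(x,y)\sim\mu}\,(p_m(x)-g(x))_y \;\le\; \int \min\{1,\|p_m(x)-g(x)\|_1\}\, d\mu_{\mathcal{X}}(x).
\end{align*}

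Next, because the samples $\{x_i\}_{i=1}^n$ are drawn from $\mu$ itself (no auxiliary distribution), I would invoke the standard uniform-convergence bound from Lemma \ref{lm:rademacher} directly, with no TV term, to obtain with probability at least $1-\delta$ over the $n$ samples,
\begin{align*}
\int \min\{1,\|p_m(x)-g(x)\|_1\}\, d\mu_{\mathcal{X}}(x) &\le \frac{1}{n}\sum_{i=1}^n \min\{1,\|p_m(x_i)-g(x_i)\|_1\} + 3\sqrt{\tfrac{\log(2/\delta)}{2n}} \\
&\quad + 2\,\operatorname{Rad}_n\!\big(\{x\mapsto \min\{1,\|p_m(x)-g(x)\|_1\}: p_m\in\mathcal{P}_m,\, g\in\mathcal{G}\}\big).
\end{align*}

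Finally, the Rademacher complexity is handled by exactly the peeling argument used in Lemma \ref{lm:part_one}: $r\mapsto \min\{1,r\}$ is $1$-Lipschitz and can be dropped by Talagrand's contraction, the $\ell_1$ norm is split coordinate-wise to produce a sum over $y'\in[k]$, and $r\mapsto |r|$ is $1$-Lipschitz on $[-1,1]$, which (together with sign-symmetry of Rademacher variables) separates $p_m$ and $g$ and gives
\begin{align*}
\operatorname{Rad}_n\!\big(\{x\mapsto \min\{1,\|p_m(x)-g(x)\|_1\}\}\big) \;\le\; \sum_{y'=1}^k \Big(\operatorname{Rad}_n(\mathcal{P}_m|_{y'}) + \operatorname{Rad}_n(\mathcal{G}|_{y'})\Big).
\end{align*}
Plugging this back completes the bound. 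I don't expect any genuine obstacle here; the statement is a direct specialization of Lemma \ref{lm:part_one} to a single distribution and a single local hypothesis class, so the only thing to be careful about is (i) making sure no $\mathbb{TV}$ term accidentally survives, and (ii) that the function class in the Rademacher term is written in terms of $\mathcal{P}_m$ and $\mathcal{G}$ rather than the ensemble $\mathcal{H}$, so that downstream (in Theorem \ref{thm:main_distill_personal}) we can quote it without an averaging-over-$m$ step.
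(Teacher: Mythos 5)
Your proposal is correct and follows essentially the same route as the paper's own proof: the identical decomposition into $\E\,(p_m-g)_y + \E\,g_y$, the H\"older/truncation bound, a direct application of the Rademacher uniform-convergence lemma over samples from $\mu$ (hence no $\mathbb{TV}$ term), and the same Lipschitz-peeling and coordinate-splitting argument to reduce the complexity of the difference class to $\sum_{y'}\bigl(\operatorname{Rad}_n(\mathcal{P}_m|_{y'})+\operatorname{Rad}_n(\mathcal{G}|_{y'})\bigr)$. Nothing is missing.
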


\begin{proof}
To start, for any $p_m \in \gP_m, g \in \gG$, write 
\begin{align*}
	\E_{x,y}  p_m(x)_y = \E_{x,y} (p_m(x)-g(x))_y + \E_{x,y}  g(x)_y   
\end{align*}
For the first term, since $p_m : \gX \rightarrow [0, 1]^k$ and  $g : \gX \rightarrow [0, 1]^k$, by Holder’s inequality
\begin{align*}
\E_{x,y} (p_m(x)-g(x))_y &=\int \min \left\{1,(p_m(x)-g(x))_y\right\} \mathrm{d} \mu(x, y)   \leq \int \min \left\{1,\|p_m(x)-g(x)\|_1\right\} \mathrm{d} \mu_{\mathcal{X}}(x) 
\end{align*}

Once again invoking standard Rademacher complexity arguments~\cref{lm:rademacher}, with probability at least $1-\delta$, every mapping $x \mapsto \min \left\{1,\|p_m(x)-g(x)\|_1\right\}$  where $p_m \in \gP_m $ and  $g \in\gG $ satisfies

\begin{align*}
    & \int \min \left\{1,\|p_m(x_i)-g(x_i)\|_1\right\} \mathrm{d} \mu_{\mathcal{X}}(x) \\
    & \leq \int \min\{1,\|p_m(x_i)-g(x_i)\|_1\} \mathrm d\mu(x) \\
    &\leq \frac{1}{n} \sum_{i=1}^n\min \left\{1,\|p_m(x_i)-g(x_i)\|_1\right\}  + 3\sqrt{\frac{\log(2/\delta)}{2n}} \\
    & + 2 \operatorname{Rad}_n\left(\left\{ x \mapsto \min \left\{1,\|p_m(x_i)-g(x_i)\|_1\right\} : p_m \in \mathcal{P}_m , g \in \gG \right\}\right) 
\end{align*}

For the final Rademacher complexity estimate, we follow the proofs in our previous ~\cref{lm:part_one} and have 
\begin{align*}
&\operatorname{Rad}_n\left(\left\{ x \mapsto \min \left\{1,\|p_m(x)-g(x)\|_1\right\} : p_m \in \mathcal{P}_m , g \in \gG \right\}\right)\\
&\leq \sum_{y^{\prime}=1}^k  \operatorname{Rad}_n\left(\left\{x \mapsto|p_m(x)-g(x)|_{y^{\prime}}: p_m \in \mathcal{P}_m, g \in \mathcal{G}\right\}\right) .
\end{align*}

Also following the proof steps in our \cref{lm:part_one}, we have for every $y^{\prime} \in [k]$ 
\begin{align*}
&\operatorname{Rad}_n\left(\left\{x \mapsto|p_m(x)-g(x)|_{y^{\prime}}: p_m \in \mathcal{P}_m, g \in \mathcal{G}\right\}\right) \\
&\leq \operatorname{Rad}_n\left(\left\{x \mapsto p_m(x)_{y^{\prime}}:  p_m \in \mathcal{P}_m \right\}\right)+ \operatorname{Rad}_n\left(\left\{x \mapsto g(x)_{y^{\prime}}: g \in \mathcal{G}\right\}\right)
\end{align*}
Combining the above results together, we complete the proof.
\end{proof}

Let us recall \cref{thm:main_distill_personal}. 
\pergen*
Then we prove \cref{thm:main_distill_personal} as below:
\begin{proof}[Proof for \cref{thm:main_distill_personal}]

Following the proofs in our previous~\cref{lm:psi-g}, we define  two function classes 
$$\mathcal{Q}_{\mathcal{P}_m}:=\left\{(x, y) \mapsto \psi\left(p_m(x)_y\right): p_m \in \mathcal{P}_m\right\} \quad \text { and } \quad \mathcal{Q}_{\mathcal{G}}:=\left\{(x, y) \mapsto \psi\left(g(x)_y\right): g \in \mathcal{G}\right\} ,$$ 
and use the fact that:
$$\frac{1}{n} \sum_{i=1}^n\left\|\psi\left(p_m\left(x_i\right)\right)-\psi\left(g\left(x_i\right)\right)\right\|_1=\frac{1}{n} \sum_{i=1}^n\left\|1- p_m\left(x_i\right)-1+ g\left(x_i\right)\right\|_1= \frac{1}{n} \sum_{i=1}^n\left\| p_m\left(x_i\right) - g\left(x_i\right)\right\|_1  $$

We use $\mathcal{Q}_{\mathcal{P}_m}$ and $\mathcal{Q}_{\mathcal{G}}$ in  \cref{lm:per-general-partone}, and  use \cref{lm:phi_rad} and \cref{lm:fm_rad} to estimate $\operatorname{Rad}_n\left( \mathcal{Q}_{\mathcal{P}_m} \right) $  and $\operatorname{Rad}_n\left( \mathcal{Q}_{\mathcal{G}} \right) $, with probability $1-\delta$, yielding

\begin{align*}
   &\E_{(x,y)\sim \mu }  [1- p_m(x)_y)] =  \E_{(x,y)\sim \mu }  [\psi(p_m(x))_y)] \\
   & \leq  \E_{(x,y)\sim \mu }  [\psi(g(x))_y)] +  \frac{1}{n} \sum_{i=1}^n\min \left\{1,\|\psi(p_m(x_i))-\psi(g(x_i))\|_1\right\}  + 3\sqrt{\frac{\log(2/\delta)}{2n}} \\
& \quad +  2 \sum_{y^{\prime}=1}^k  \left(\operatorname{Rad}_n\left(\left\{x \mapsto \psi(p_m(x))_{y^{\prime}}:  p_m \in \mathcal{P}_m \right\}\right)+ \operatorname{Rad}_n\left(\left\{x \mapsto \psi(g(x))_{y^{\prime}}: g \in \mathcal{G}\right\}\right)\right)\\
&\leq \E_{(x,y)\sim \mu }  [1 - g(x)_y] +  \frac{1}{n} \sum_{i=1}^n\min \left\{1,\|p_m(x_i)-g(x_i)\|_1\right\}  + 3 \sqrt{\frac{\log(2/\delta)}{2n}} \\
&\quad +    \mathcal{O} \left({k^{3/2}}\left [ \max_{j} \operatorname{Rad}_n(\mathcal{P}_m|_j)  + \max_{j} \operatorname{Rad}_n(\mathcal{G}|_j) \right]\right) \tag*{(Due to \cref{lm:phi_rad})}\\
\end{align*}

Finally, we use \cref{lm:hsuphi} to show that 
\begin{align} 
&\mathbb{E}_{x, y} \left(1- p_m(x)_y\right) \geq \frac{1}{2} \mathbb{E}_{x, y}\left[ \mathds{1}\left[\underset{y^{\prime}}{\arg \max }p_m(x)_{y^{\prime}} \neq y\right]\right]  \nonumber = \frac{1}{2}\operatorname{Pr}_{x, y}\left[\underset{y^{\prime}}{\arg \max } p_m(x)_{y^{\prime}} \neq y\right]     
\end{align}
Combining all results together, with probability at least $1-\delta$, we have, 
\begin{align*} 
 \operatorname{Pr}_{x, y}\left[\underset{y^{\prime}}{\arg \max } p_m(x)_{y^{\prime}} \neq y\right]    &\leq  2    \E_{(x,y)\sim \mu }  [1 - g(x)_y] + 2   \frac{1}{n} \sum_{i=1}^n\min \left\{1,\|p_m(x_i)-g(x_i)\|_1\right\}  + 6 \sqrt{\frac{\log(2/\delta)}{2n}} \\
&\quad +    \mathcal{O} \left({k^{3/2}}\left [ \max_{j} \operatorname{Rad}_n(\mathcal{P}_m|_j)  + \max_{j} \operatorname{Rad}_n(\mathcal{G}|_j) \right]\right)
\end{align*}
This completes the proof.

\end{proof}

\newpage

\section{Convergence Analysis}\label{app:convergence-proof}
In this section, we present the discussions and analysis for our convergence guarantees. The outline of this section is as follows:
\begin{itemize}[noitemsep,leftmargin=*]
    \item \cref{app:convergence-add-results} provides more discussions and additional convergence results. 
    \item \cref{app:proofs-g-model-converge} provides the proofs for the global model convergence guarantee in \cref{thm:convergence1}. 
    \item \cref{app:proofs-per-model-converge} provides the proofs for the personalized model convergence guarantee in \cref{thm:convergence2}. 
\end{itemize}

\subsection{Additional Discussions and Theoretical Results}\label{app:convergence-add-results}

\paragraph{Discussions on distillation gradient}
For simplicity, we denote  $\overline{f(\theta, x)}= \frac{1}{M} \sum_{m=1}^Mf({\theta_m}, x)$.
The closed-form expression of $\nabla_w \mathcal{R}$ can be expressed as:
\begin{align}
& \left\| \nabla_w \mathcal{R}\left(\left\{\theta_1, \ldots, \theta_M\right\}, w\right) \right\| \nonumber \\
& =\left\| \mathbb{E}_{x \sim \mu_{\text {aux }}} 
\sum_{i=1}^k \nabla_w \left[ \sigma(\overline{f(\theta, x)})_i \ln \left( \frac{\sigma(\overline{f(\theta, x)})_i}{\sigma(f(w, x))_i }  \right) \right] \right\|  \tag{KL divergence loss} \nonumber\\
& =\left\|  \mathbb{E}_{x \sim \mu_{\text {aux }}} \sum_{i=1}^k-\frac{\sigma(\overline{f(\theta, x)})_i}{\sigma(f(w, x))_i} \nabla_w \sigma(f(w, x))_i \right\| \nonumber \\
& =\left\|  \mathbb{E}_{x \sim \mu_{\text {aux }}} \sum_{i=1}^k\frac{\sigma(\overline{f(\theta, x)})_i}{\sigma(f(w, x))_i} \nabla_w \sigma(f(w, x))_i \right\| \label{eq:nabla_r_closed-form}
\end{align}
where $k$ is the number of classes and $i$ denotes the $i$-th class. 
Here we note that when the averaged logits from local models are qual to the logits of global model, i.e.,  $\sigma(\overline{f(\theta, x)})_i= \sigma(f(w, x))_i$

\begin{align}
 & \left\| \nabla_w \mathcal{R}\left(\left\{\theta_1, \ldots, \theta_M\right\}, w\right) \right\|  =\left\|  \mathbb{E}_{x \sim \mu_{\text {aux }}} \sum_{i=1}^k \nabla_w \sigma(f(w, x))_i \right\| = 0
\end{align}
because $\sum_{i=1}^k \sigma(f(w, x))_i =1$ (which leads to $ \nabla_w \sum_{i=1}^k \sigma(f(w, x))_i =0$ ) .
Therefore, the norm of distillation gradient can be small when  the averaged logits from local models are close to the logits of global model.

\paragraph{Discussions for Assumptions.}
\cref{asp:smooth} on Lipschitz smooth and  \cref{asp:stograd} on the bounded variance for gradients due to stochastic sampling noise are standard for smooth and non-convex optimization.
\cref{asp:boundeddiv} quantifies the diversity of FL clients' data distribution, which is  widely used in FL optimization~\cite{karimireddy2020scaffold,fallah2020personalized,reddi2021adaptive,Li2020On,ozkara2021quped}.
We follow \cite{reddi2021adaptive,ozkara2021quped,fallah2020personalized} to assume bounded gradient for non-convex FL optimization in~\cref{asp:boundedgrad}.

\paragraph{Convergence of \name personalized models.}
 \begin{restatable}[Convergence of \name personalized model]{theorem}{perconv}
\label{thm:convergence2}
\textcolor{black}{ 
When $\eta_p = \frac{1}{(L+\lambda)\sqrt T}$, $ \eta_l = \frac{1}{EL\sqrt{T}} $, $\eta_g = \frac{1}{L_R RT}$, then the algorithm satisfies:
    \begin{equation*}
        \frac{1}{TS} \sum_{t=0}^{T-1} \sum_{s=0}^{S-1} \E \|\nabla_v P_m(v_m^{t,s}, w^t) \|^2 \leq \gO\Big(\frac{(L+\lambda)\Delta_{P_m}+\phi_2}{\sqrt T S}   + \frac{G_P(L+\lambda)(L\Delta_\mathcal L + \psi_1)^{1/2}}{T^{1/4}L\sqrt {E}S} +   \frac{G_P(L+\lambda)\sqrt{\psi_2}}{T^{3/4} L_R ES} + \frac{G_P(L+\lambda)\sigma}{LES}\Big)
    \end{equation*} 
    where $\Delta_{P_m} =P_m(v_m^0, w_m^0) - P_m(v_m^t, w^t)$, $\phi_1 = 64(3\bar \gamma  + \frac{2\sigma^2}{E})$, $\phi_2 = S\sigma^2 + \frac{\sqrt{\phi_1}G_P(L+\lambda)}{LE} +  \sqrt{\psi_2} G_P (L+\lambda)   + \frac{G_P\bar \gamma (L+\lambda)}{L\sqrt E}$. $\psi_1,\psi_2$ are defined the same as in \cref{thm:convergence1}.}
\end{restatable}

\begin{remark}
\looseness=-1
\chulin{
(1) \textbf{Local steps}: a larger local step $S$ can reduce number of rounds $T$ for convergence. 
(2) \textbf{Connection to global model}: The terms associated with $\bar\gamma, \psi_1, \psi_2$ are related to the convergence rate of the global model, which is indicated in \cref{thm:convergence1}. For example, a large $E$ can also reduce the number of communication rounds $T$ for the personalized model to convergence. 
We obtain a convergence rate of $\gO(1/{T}^{1/4})$ for personalized models. 
It is worth noting that previous studies have shown that in strongly convex settings, personalized models converge at the same rate as the global model~\cite{li2021ditto}. However, in  strongly convex settings, the minimizers are ensured to be unique, which can simplify the establishment of connections between global and personalized models by considering their distances to the corresponding minimizers. Here, we present the results in the more general non-convex setting and additionally analyze the effect of the global model's ensemble distillation on personalized models.
}
\end{remark}

\subsection{Proofs for the Global Model Convergence Guarantee in \cref{thm:convergence1}} \label{app:proofs-g-model-converge}

 \paragraph{Additional notations}
Recall the parameter-averaged model is $ \bar {\theta}^{t+1}  = \frac{1}{M}\sum_{m=1}^M {\theta}_m^{t+1} $, which is used to initialize the server global model at round $t$ before the KD training. Let 
\begin{align}
\bar \eta_g = \eta_g R , \quad \bar\eta_l = \eta_l E
\end{align}
Based on the update rules, we define  $g^t$ and $g_m^t$ as below, which capture the update of global model during server training, and the update of local model during client training, respectively.
\begin{align} \label{eq:update_rule_simple}
 w^{t+1} =  \bar{\theta}^{t+1} -  \bar \eta_g   g^t  , \quad \theta_m^{t+1} =  w^{t} - \bar\eta_l  g_m^t 
\end{align}
That is: 
\begin{align} \label{eq:def_gt}
	&g^t :=  - \frac{1}{\eta_g R }( w^{t+1}- \bar{\theta}^{t+1})=  \frac{1}{R}\sum_{r=0}^{R-1} \widetilde{\nabla}_w \gR(\{\theta^{t+1}_{m}\},  w^{t,r}),   \nonumber \\
	&g_m^t  := - \frac{1}{\eta_lE}( \theta_m^{t+1}- w^{t})=  \frac{1}{E}\sum_{e=0}^{E-1} \widetilde{\nabla} \gL_m( {\theta}_m^{t,e}) 
\end{align}
According to server update rule $w^{t+1} -w^{t} =\bar {\theta}^{t+1} - w^{t}  - \bar \eta_g g^t $.
Note that $\bar {\theta}^{t+1} - w^{t} =  \frac{1}{M}\sum_{m=1}^M  {\theta}_m^{t+1} - w^{t} = - 
\frac{1}{M}\sum_{m=1}^M   \bar \eta_l g_m^t$ based on \cref{eq:def_gt}.
Then we define, 
\begin{equation}   \label{eq:def_delta_w}
\delta_w^t :=  \frac{1}{M}\sum_{m=1}^M g_m^t   +    \frac{\bar \eta_g}{\bar \eta_l}  g^t, \text{\quad which indicates $w^{t+1} -w^{t}= - \bar \eta_l \delta_w^t $} 
\end{equation}
According to client update rule $\theta_m^{t+1} -\theta_m^{t} = (w^{t} -  \bar \eta_l g_m^t)- (w^{t-1} -  \bar \eta_l g_m^{t-1})$. Note that  $w^{t}-w^{t-1}=   -  \bar \eta_l \frac{1}{M}\sum_{m=1}^M    g_m^t -  \bar \eta_g g^t  $  based on \cref{eq:def_gt}. Then we define, 
\begin{equation}  \label{eq:def_delta_theta_m}
\delta_{\theta_m}^t :=  \frac{\bar \eta_g}{\bar \eta_l}   g^{t-1}  +  \frac{1}{M}\sum_{i=1}^M   g_i^{t-1} - g_m^{t-1} + g_m^{t},  \text{\quad which indicates $ \theta_m^{t+1} -\theta_m^{t}  =    - \bar \eta_l \delta_{\theta_m}^t $}
 \end{equation}  
In our analysis, we define one virtual sequence $\bar w^{t,e}$, motivated by \cite{Li2020On},
\begin{equation}
   \bar w^{t,e}=\frac{1}{M}\sum_{m=1}^M{\theta_m^{t,e}} 
\end{equation}
In particular, $\bar{w}^{t+1,0}=w^{t}$ and  $\bar{w}^{t+1,E-1}=\bar\theta^{t+1}$.

 \paragraph{Proof Outline}
Recall the generic FL objective, which is to minimize the average loss measured on all clients' data:  
\begin{align} 
 \add{ \gL(w) := \frac{1}{M}  \sum_{m=1}^M \gL_m\left(w\right)}
\end{align}
The goal is to bound the gradients of  the global model w.r.t the $\gL(w)$, which is used to show that the trained models can converge to the stationary points: 
\begin{align}
     \sum_{t=0}^{T-1}\sum_{e=0}^{E-1} \frac{1}{ET} \mathbb E \|\nabla \gL(\bar w^{t,e})\|^2
\end{align}

\paragraph{Challenges}
The challenges of convergence analysis include:
(1) Bi-level optimization between server distillation for $w^{t}$ and client training for $\{\theta_m^{t}\}$, which incorporates two objectives (i.e., minimizing distillation loss and local loss respectively), as shown in \cref{eq:def_gt}. 
(2) Mutual initializations. At each round, the global model is initialized by averaged local models before distillation, and local models are initialized by the global model before local training. Such mutual initializations intervene in the model updating trajectories of $w^{t}$ and $\{\theta_m^{t}\}$  w.r.t their training objective. In particular,
the server optimization of $w$ will be influenced by the drift of client optimization of $\theta_m$, as shown in \cref{eq:def_delta_w} (i.e., additional deviation with the term $ \frac{1}{M}\sum_{m=1}^M g_m^t  $).
Moreover, client optimization is also influenced by the drift of server optimization, as shown in  \cref{eq:def_delta_theta_m} (i.e., additional deviation with the terms $ \frac{\bar \eta_g}{\bar \eta_l}   g^{t-1}  +  \frac{1}{M}\sum_{i=1}^M   g_i^{t-1} - g_m^{t-1} $).

To overcome the aforementioned challenges, we regard $\{\theta_m^{t}\}$ as the intermediate models to update $w^{t+1}$, and quantify the effects of local client updates and server distillation updates on reducing $\gL(w^{t+1})$.

\paragraph{Supporting lemmas}
Before we start, we introduce a useful existing result by Jensen's inequality in \cref{prop:jensen-ineq}:
\begin{lemma}[Jensen's inequality]\label{prop:jensen-ineq}
    For any vector $x_i \in \mathbb{R}^d, i=1, \ldots, M$, by Jensen's inequality, we have
\begin{align}
\left\|\sum_{i=1}^M x_i\right\|^2 \leq M \sum_{i=1}^M\left\|x_i\right\|^2
\end{align}
\end{lemma}

We also introduce the following supporting lemmas:
\begin{lemma}[Bounded local client drift error] \label{lm:bound_local_client_drift}
If $\bar\eta_l \leq \frac{1}{2L}\Leftrightarrow  \eta_l \leq  \frac{1}{2LE}   $, we have
\begin{align}
& \E \left[ \left\|\widetilde{\nabla} \gL_m( {\theta}_m^{t,e}) -  \nabla \gL_m (w^t)   \right\|^2\right] \leq 2  \sigma^2 +  16 L^2  \bar\eta_l^2 \left( 3 \E \left[ \left\|   \nabla \gL_m(w^t) \right\|^2\right]    + \frac{2 \sigma^2}{E} \right).
\end{align}
Moreover, the averaged drift error over $E$ local steps and $M$ clients is:
\begin{align}
&\frac{1}{ME}\sum_{m,e}^{M, E}   \E \left[ \left\|  \widetilde{\nabla} \gL_m( {\theta}_m^{t,e}) -  \nabla \gL_m (w^t)   \right\|^2\right] \leq 2  \sigma^2 +  16 L^2  \bar\eta_l^2 \left( 3 \bar\gamma +  3 \E \left[ \left\|   \nabla \gL(w^t) \right\|^2\right]    + \frac{2 \sigma^2}{E} \right).
\end{align}	
\end{lemma}
\begin{proof}
\begin{align*}
& \E \left[ \left\|  \widetilde{\nabla} \gL_m( {\theta}_m^{t,e}) -  \nabla \gL_m (w^t)   \right\|^2\right]\\
&=\E \left[ \left\|  \widetilde{\nabla} \gL_m( {\theta}_m^{t,e}) -  \nabla \gL_m( {\theta}_m^{t,e}) +   \nabla \gL_m( {\theta}_m^{t,e})  -  \nabla \gL_m (w^t)   \right\|^2\right]\\
& \leq 2 \E \left[ \left\|  \widetilde{\nabla} \gL_m( {\theta}_m^{t,e}) -  \nabla \gL_m( {\theta}_m^{t,e})  \right\|^2\right] +  2 \E \left[ \left\|  \nabla \gL_m( {\theta}_m^{t,e})  -  \nabla \gL_m (w^t)   \right\|^2\right]\\
& \leq 2  \sigma^2 +  2 \E \left[ \left\|  \nabla \gL_m( {\theta}_m^{t,e})  -  \nabla \gL_m (w^t)   \right\|^2\right] \tag{\cref{asp:stograd}}\\
& \leq 2  \sigma^2 +  2 L^2 \E \left[ \left\|  {\theta}_m^{t,e} - w^t   \right\|^2\right] \tag{\cref{asp:smooth}}\\
\end{align*}	
If $\bar\eta_l \leq \frac{1}{2L}\Leftrightarrow  \eta_l \leq  \frac{1}{2LE}   $, we have
\begin{align*}
&\E \left[ \left\|  {\theta}_m^{t,e} - w^t   \right\|^2\right] = \E \left[ \left\|  {\theta}_m^{t,e-1} - w^t   - \eta_l \widetilde{\nabla} \gL_m( {\theta}_m^{t,e-1})  \right\|^2\right]  \\
&=\E \left[ \left\|  {\theta}_m^{t,e-1} - w^t -   \eta_l \nabla \gL_m(w^t)  + \eta_l \nabla \gL_m(w^t)     - \eta_l \widetilde{\nabla} \gL_m( {\theta}_m^{t,e-1})  \right\|^2\right]  \\
&\leq 2 \E \left[ \left\|  {\theta}_m^{t,e-1} - w^t -   \eta_l \nabla \gL_m(w^t) \right\|^2\right]  +  2  \eta_l^2  \E \left[ \left\| \nabla \gL_m(w^t)     -  \widetilde{\nabla} \gL_m( {\theta}_m^{t,e-1}) \right\|^2\right]  \\
&\leq 2 \E \left[ \left\|  {\theta}_m^{t,e-1} - w^t  \right\|^2\right]   + 2 \E \left[ \left\|    \eta_l \nabla \gL_m(w^t) \right\|^2\right]   - 2 \E \left[ \left\langle   {\theta}_m^{t,e-1} - w^t  ,  \eta_l \nabla \gL_m(w^t) \right \rangle  \right]    \\
& \quad + 2  \eta_l^2  \E \left[ \left\| \nabla \gL_m(w^t) - \nabla \gL_m({\theta}_m^{t,e-1}) +  \nabla \gL_m({\theta}_m^{t,e-1})     -  \widetilde{\nabla} \gL_m( {\theta}_m^{t,e-1}) \right\|^2\right]  \\
&\leq 2 \left(1+\frac{1}{2E}\right) \E \left[ \left\|  {\theta}_m^{t,e-1} - w^t  \right\|^2\right]   + 2 \eta_l^2 \left(1+{2E}\right) \E \left[ \left\|   \nabla \gL_m(w^t) \right\|^2\right]   \\ 
& \quad + 4  \eta_l^2 L^2  \E \left[  \left\| {\theta}_m^{t,e-1}- w^t \right\|^2\right]  + 4 \eta_l^2 \sigma^2  \\
&= 2 \left(1+\frac{1}{2E} +  2  \eta_l^2 L^2 \right) \E \left[ \left\|  {\theta}_m^{t,e-1} - w^t  \right\|^2\right]   + 2 \eta_l^2 \left(1+{2E}\right) \E \left[ \left\|   \nabla \gL_m(w^t) \right\|^2\right]    + 4 \eta_l^2 \sigma^2  \\
& \overset{(a)}{\leq}  2 \left(1+\frac{1}{E}\right) \E \left[ \left\|  {\theta}_m^{t,e-1} - w^t  \right\|^2\right]   +  \frac{6 \bar\eta_l^2}{E} \E \left[ \left\|   \nabla \gL_m(w^t) \right\|^2\right]    +  \frac{4\bar\eta_l^2\sigma^2}{E^2}  \\
& \leq  2 \sum_{e=0}^{E-1} \left(1+\frac{1}{E}\right)^e   \left( \frac{6 \bar\eta_l^2}{E} \E \left[ \left\|   \nabla \gL_m(w^t) \right\|^2\right]    + \frac{4\bar\eta_l^2\sigma^2}{E^2} \right) \\
&\overset{(b)}{\leq} 8 \bar\eta_l^2 \left( 3 \E \left[ \left\|   \nabla \gL_m(w^t) \right\|^2\right]    + \frac{2 \sigma^2}{E} \right)
\end{align*}	
Here (a) is because:  $\eta_l= \frac{\bar\eta_l}{E}$ and when \add{$\bar\eta_l^2 \leq \frac{1}{4L^2}$}, we have  $  2  \eta_l^2 L^2 =\frac{2 \bar \eta_l^2 L^2}{E^2} \leq \frac{1}{2E^2} \leq \frac{1}{2E}$ for all $E\geq 1$. 
Moreover, 
$2 \eta_l^2 \left(1+{2E}\right) = 2 \left(1+{2E}\right)  \frac{\bar\eta_l^2}{E^2} \leq   \frac{6 \bar\eta_l^2}{E}  $ because $\frac{1+2E}{E} \leq 3$ for $E\geq 1$.

(b) is because:
$\sum_{e=0}^{E-1} (1+ 1/E)^e = \frac{(1+ 1/E)^E-1}{1/E} \leq \frac{\operatorname{e}-1}{1/E} \leq 2E$
by using the fact 
$\sum_{i=0}^{n-1} x^i=\frac{x^n-1}{x-1} \text { and }\left(1+\frac{x}{n}\right)^n \leq e^x \text { for any } x \in \mathbb{R}, n \in \mathbb{N}$. 

Combining the above results together,  we have
\begin{align*}
& \E \left[ \left\| \widetilde{\nabla} \gL_m( {\theta}_m^{t,e}) -  \nabla \gL_m (w^t)   \right\|^2\right] \leq 2  \sigma^2 +  16 L^2  \bar\eta_l^2 \left( 3 \E \left[ \left\|   \nabla \gL_m(w^t) \right\|^2\right]    + \frac{2 \sigma^2}{E} \right)
\end{align*}

By the expectation $E[\|X\|^2]= E[\|X- E[X]\|^2] + E[\|X\|]^2$, we have the averaged error over $M$ clients: 
\begin{align*}
 \frac{1}{M}\sum_{m=1}^M \E \left[ \left\|   \nabla \gL_m(w^t) \right\|^2\right] 
&= \frac{1}{M}\sum_{m=1}^M \E \left[ \left\|   \nabla \gL_m(w^t)  -  \nabla \gL(w^t)    \right\|^2 \right] + \E \left[ \left\|   \nabla \gL(w^t) \right\|^2\right] \\
&\leq \bar\gamma +  \E \left[ \left\|   \nabla \gL(w^t) \right\|^2\right]  \tag{\cref{asp:boundeddiv}} 
\end{align*}

Moreover, the averaged error over $M$ clients and $E$ local steps is:

\begin{align*}
 \frac{1}{ME}\sum_{m, e}^{M,E} \E \left[ \left\|   \nabla \gL_m(w^t) \right\|^2\right] 
&\leq  2  \sigma^2 +  16 L^2  \bar\eta_l^2 \left( 3 \bar\gamma +  3\E \left[ \left\|   \nabla \gL(w^t) \right\|^2\right]       + \frac{2 \sigma^2}{E} \right)
\end{align*}
Thus, proved.
\end{proof}

\begin{lemma}[Bounded distillation drift error] \label{lm:bound_distill_drift}
If $\bar\eta_g \leq \frac{1}{2L_R} \Leftrightarrow \eta_g \leq \frac{1}{2 R L_R} $, we have 
\begin{align}
&\E \left[ \left\|  \widetilde{\nabla}_w \gR(\{\theta^{t+1}_{m}\},  w^{t,r}) -  \nabla_w \gR(\{\theta^{t+1}_{m}\},  w^{t})     \right\|^2\right]\\
&\leq 2  \sigma_R^2 +   {16 L_R^2  \bar\eta_g^2} \left( 3 \E \left[ \left\|  \nabla_w \gR(\{\theta^{t+1}_{m}\},  w^{t})\right\|^2\right]    + \frac{2 \sigma_R^2}{R} \right).
\end{align}	
\end{lemma}

\begin{proof}
[Proof of \cref{lm:bound_distill_drift}]	
\begin{align*}
&\E \left[ \left\|   \widetilde{\nabla}_w \gR(\{\theta^{t+1}_{m}\},  w^{t,r}) -  \nabla_w \gR(\{\theta^{t+1}_{m}\},  w^{t})     \right\|^2\right] \\
&= \E \left[ \left\|   \widetilde{\nabla}_w \gR(\{\theta^{t+1}_{m}\},  w^{t,r})  -  \nabla_w \gR(\{\theta^{t+1}_{m}\},  w^{t,r})    +   \nabla_w \gR(\{\theta^{t+1}_{m}\},  w^{t,r})     -  \nabla_w \gR(\{\theta^{t+1}_{m}\},  w^{t})     \right\|^2\right] \\
&\leq 2 \E \left[ \left\|   \widetilde{\nabla}_w \gR(\{\theta^{t+1}_{m}\},  w^{t,r})  -  \nabla_w \gR(\{\theta^{t+1}_{m}\},  w^{t,r})   \right\|^2\right]   +  2 \E \left[ \left\| \nabla_w \gR(\{\theta^{t+1}_{m}\},  w^{t,r})     -  \nabla_w \gR(\{\theta^{t+1}_{m}\},  w^{t})     \right\|^2\right] \\
& \leq 2  \sigma_R^2 +  2 L_R^2 \E \left[\left\| w^{t,r}-  w^{t}   \right\|^2\right] \tag{\cref{asp:stograd}, \cref{asp:smooth}}\\
\end{align*}
If $\bar\eta_g \leq \frac{1}{2L_R} \Leftrightarrow \eta_g \leq \frac{1}{2 R L_R} $, we have 
\begin{align*}
& \E \left[\left\| w^{t,r}-  w^{t}   \right\|^2\right]  = \E \left[\left\| w^{t,r-1}  -  w^{t}  - \eta_g \widetilde{\nabla}_w \gR(\{\theta^{t+1}_{m}\},  w^{t,r-1})   \right\|^2\right]  \\
&= \E \left[\left\| w^{t,r-1}  -  w^{t}  - \eta_g \nabla_w \gR(\{\theta^{t+1}_{m}\},  w^{t})+ - \eta_g \nabla_w \gR(\{\theta^{t+1}_{m}\},  w^{t})-  \eta_g \widetilde{\nabla}_w \gR(\{\theta^{t+1}_{m}\},  w^{t,r-1} )   \right\|^2\right]  \\
&\leq 2 \E \left[\left\| w^{t,r-1}  -  w^{t}  - \eta_g \nabla_w \gR(\{\theta^{t+1}_{m}\},  w^{t})  \right\|^2\right]  + 2 \E \left[\left\| \eta_g \nabla_w \gR(\{\theta^{t+1}_{m}\},  w^{t})-  \eta_g \widetilde{\nabla}_w \gR(\{\theta^{t+1}_{m}\},  w^{t,r-1})   \right\|^2\right]  \\
&\leq 2 \left(1 + \frac{1}{2R}\right) \E \left[\left\| w^{t,r-1}  -  w^{t}  \right\|^2\right]  +   2 \eta_g^2 \left(1 + {2R}\right)  \E \left[\left\| \nabla_w \gR(\{\theta^{t+1}_{m}\},  w^{t})  \right\|^2\right]  \\
& \quad + 4 \eta_g^2 L_R^2 \E \left[\left\|     w^{t,r-1} - w^{t} \right\|^2\right]  + 4\eta_g^2\sigma_R^2  \\
&= 2 \left(1+\frac{1}{2R} +  2  \eta_g^2 L_R^2 \right) \E \left[ \left\|    w^{t,r-1} - w^{t}   \right\|^2\right]   + 2 \eta_g^2 \left(1+{2R}\right) \E \left[ \left\|  \nabla_w \gR(\{\theta^{t+1}_{m}\},  w^{t}) \right\|^2\right]    + 4 \eta_g^2 \sigma_R^2  \\
& \overset{(a)}{\leq} 2 \left(1+\frac{1}{R}\right) \E \left[ \left\|    w^{t,r-1} - w^{t}  \right\|^2\right]   +  \frac{6 \bar\eta_g^2}{ R} \E \left[ \left\|  \nabla_w \gR(\{\theta^{t+1}_{m}\},  w^{t}) \right\|^2\right]    +  \frac{4\bar\eta_g^2\sigma_R^2}{ R^2}  \\
&\leq 2 \sum_{r=0}^{R-1} \left(1+\frac{1}{R}\right)^r   \left( \frac{6 \bar\eta_g^2}{ R} \E \left[ \left\|   \nabla_w \gR(\{\theta^{t+1}_{m}\},  w^{t}) \right\|^2\right]    + \frac{4\bar\eta_g^2\sigma_R^2}{ R^2} \right) \\
& \overset{(b)}{\leq} {8 \bar\eta_g^2} \left( 3 \E \left[ \left\|  \nabla_w \gR(\{\theta^{t+1}_{m}\},  w^{t})\right\|^2\right]    + \frac{2 \sigma_R^2}{R} \right)
\end{align*}
Here (a) is because: $\eta_g= \frac{\bar\eta_g}{R}$ and when {$\bar\eta_g^2 \leq \frac{1}{4L_R^2}$}, we have  $  2  \eta_g^2 L_R^2 =\frac{2 \bar \eta_g^2 L_R^2}{R^2} \leq \frac{1}{2R^2} \leq \frac{1}{2R}$ for all $R\geq 1$. 
Moreover, 
$2 \eta_g^2 \left(1+{2R}\right) = 2 \left(1+{2R}\right)  \frac{\bar\eta_g^2}{R^2 } \leq   \frac{6 \bar\eta_g^2}{ R}  $ because $\frac{1+2R}{R} \leq 3$ for $R\geq 1$.

(b) is because:
$\sum_{e=0}^{R-1} (1+ 1/R)^e = \frac{(1+ 1/R)^R-1}{1/R} \leq \frac{\operatorname{e}-1}{1/R} \leq 2R$
by using the fact 
$\sum_{i=0}^{n-1} x^i=\frac{x^n-1}{x-1} \text { and }\left(1+\frac{x}{n}\right)^n \leq e^x \text { for any } x \in \mathbb{R}, n \in \mathbb{N}$. 

Combining the above results, we have 
\begin{align*}
&\E \left[ \left\|   \widetilde{\nabla}_w \gR(\{\theta^{t+1}_{m}\},  w^{t,r}) -  \nabla_w \gR(\{\theta^{t+1}_{m}\},  w^{t})     \right\|^2\right] \leq 2  \sigma_R^2 +   {16 L_R^2  \bar\eta_g^2} \left( 3 \E \left[ \left\|  \nabla_w \gR(\{\theta^{t+1}_{m}\},  w^{t}) \right\|^2\right]    + \frac{2 \sigma_R^2}{R} \right) 
\end{align*}
\end{proof}

Recall \cref{eq:update_rule_simple}, we have
\begin{equation}
    \bar \theta^{t+1} = \frac{1}{M}\sum_{m=1}^M \theta_m^{t+1} = \frac{1}{M} (\sum_{m=1}^M w^t - \bar\eta_l g_m^t).
\end{equation}
and we define
\begin{equation}
    \bar w^{t,e} = \frac{1}{M} \sum_{m=1}^M \theta_m^{t,e}.
\end{equation}

\begin{lemma}
\label{lemma:distill_step}
\begin{equation}
    \mathbb E[\mathcal L(w^{t+1}) - L(\bar \theta^{t+1})] \leq \frac{\bar \eta_g}{2} (G^2+ \psi_2) + \frac{\bar \eta_g^2 L}{2}\psi_2,
\end{equation}
where $\psi_2 = 4 \sigma_R^2 + 32 L_R^2 \bar \eta_g^2 (3 G_R^2 + \frac{2\sigma_R^2}{R}) + 2G_R^2$.
\end{lemma}
\begin{proof}
    \begin{align}
        \mathbb E[\mathcal L(w^{t+1}) - L(\bar \theta^{t+1})] & \leq \mathbb E[\langle \nabla \mathcal L(\bar\theta^{t+1}), -\bar\eta_g g^t\rangle] + \frac{\bar\eta_g^2 L}{2}\mathbb E\|g^t\|^2\\
        &\leq \frac{\bar \eta_g}{2} \mathbb E\|\nabla \mathcal L(\bar\theta^{t+1})\|^2 + \frac{\bar\eta_g}{2} \mathbb E\|g^t\|^2 + \frac{\bar\eta_g^2 L}{2}\mathbb E\|g^t\|^2\\
        &= \frac{\bar \eta_g}{2} \mathbb E\|\frac{1}{M} \sum_{m=1}^M \nabla \mathcal L_m(\bar\theta^{t+1})\|^2 + (\frac{\bar\eta_g}{2} + \frac{\bar\eta_g^2 L}{2} ) \mathbb E\|g^t\|^2 \tag{Based on $\mathcal L = (\theta) \frac{1}{M} \sum_{m=1}^M \nabla \mathcal L_m(\theta)$}.\\
         &\leq \frac{\bar \eta_g}{2} \frac{1}{M} \sum_{m=1}^M  \mathbb E\|\nabla \mathcal L_m(\bar\theta^{t+1})\|^2 + (\frac{\bar\eta_g}{2} + \frac{\bar\eta_g^2 L}{2} ) \mathbb E\|g^t\|^2 \tag{\cref{prop:jensen-ineq}}\\
          &\leq \frac{\bar \eta_g}{2}  G^2  + (\frac{\bar\eta_g}{2} + \frac{\bar\eta_g^2 L}{2} ) \mathbb E\|g^t\|^2 \tag{\cref{asp:boundedgrad}}.
    \end{align}

    Note that
    \begin{align}
        \mathbb E\|g^t\|^2 &= \mathbb E\|\frac{1}{R} \sum_{r=0}^{R-1} \tilde \nabla_W R(\{\theta_m^t\}, w^{t,r})\|^2\\
        &= \mathbb E\|\frac{1}{R} \sum_{r=0}^{R-1} \left( \widetilde{\nabla}_w \gR(\{\theta^{t+1}_{m}\},  w^{t,r}) -  \nabla_w \gR(\{\theta^{t+1}_{m}\},  w^{t})    \right)  +  \nabla_w \gR(\{\theta^{t+1}_{m}\},  w^{t})\|^2\\
        &\leq 2 \mathbb E\|\frac{1}{R} \sum_{r=0}^{R-1} \left( \widetilde{\nabla}_w \gR(\{\theta^{t+1}_{m}\},  w^{t,r}) -  \nabla_w \gR(\{\theta^{t+1}_{m}\},  w^{t})    \right) \|^2 + 2\mathbb E\| \nabla_w \gR(\{\theta^{t+1}_{m}\},  w^{t})\|^2 \tag{from \cref{lm:bound_distill_drift}}\\
          &\leq 2 \frac{1}{R} \sum_{r=0}^{R-1} \mathbb E\| \left( \widetilde{\nabla}_w \gR(\{\theta^{t+1}_{m}\},  w^{t,r}) -  \nabla_w \gR(\{\theta^{t+1}_{m}\},  w^{t})    \right) \|^2 + 2\mathbb E\| \nabla_w \gR(\{\theta^{t+1}_{m}\},  w^{t})\|^2 \tag{\cref{prop:jensen-ineq}}\\
        &\leq 4 \sigma_R^2 + 32 L_R^2 \bar \eta_g^2 (3 G_R^2 + \frac{2\sigma_R^2}{R}) + 2G_R^2 = \psi_2. \label{eq:define_psi_2}
    \end{align}
    Therefore, 
    \begin{equation}
         \mathbb E[\gL(w^{t+1}) -\gL(\bar \theta^{t+1})] \leq \frac{\bar\eta_g}{2}G^2 + (\frac{\bar\eta_g}{2} + \frac{\bar\eta_g^2L}{2})\psi_2.
    \end{equation}

\end{proof}

\begin{lemma}[From \cite{li2019communication}] 
\label{lemma:local_step}
    \begin{equation}
  \frac{1}{E}    \mathbb E   [\gL (\bar \theta^{t+1}) - \gL(w^t)] \leq \frac{1}{E} \sum_{e=0}^{E-1} -\frac{\eta_l}{2} \|\nabla \gL(\bar w^{t,e})\|^2 + \frac{\eta_l^2  L \sigma^2}{2M} + 8 \eta_l^3 E^2 L^2 \bar \gamma^2.
    \end{equation}
\end{lemma}
\begin{proof}
    We leverage the results from Equation (33) of \cite{li2019communication} with $A_T = 0$ and $B_T = 1$\footnote{$A_T$ and $B_T$ are defined in \cite{li2019communication}.}, which are implied by Theorem 2 in \cite{li2019communication}. 
\end{proof}

\paragraph{Completing the proof of \cref{thm:convergence1}}

Recall our main theorem
\globalconv*

\begin{proof}
Combining \cref{lemma:distill_step} and \cref{lemma:local_step},
    \begin{align}
        \mathbb E[\gL (w^{t+1}) - \gL (w^t)] &= \mathbb E[\gL (w^{t+1}) - \gL(\bar\theta^t) + \gL (\bar \theta^t) - \gL(w^t)]\\
        &\leq \frac{\bar \eta_g}{2}(G^2+\psi_2) + \frac{\bar\eta_g^2 L}{2}\psi_2 +\sum_{e=0}^{E-1} - \frac{\eta_l}{2} \|\nabla \mathcal L(\bar w^{t,e})\|^2 + \frac{E\eta_l^2 L \sigma^2}{2M} + 8\eta_l^3 E^3 L^2 \bar \gamma^2.
    \end{align}

    Rearrage the inequality and take $\frac{1}{T}\sum_{t=0}^{T-1}$ on both side. We get
    \begin{equation}
        \frac{1}{ET} \sum_{t=0}^{T-1}\sum_{e=0}^{E-1} \|\nabla \gL(\bar w^{t,e})\|^2 \leq \frac{2}{\eta_l ET} \big(\gL(w^0) - \gL(w^{T-1})\big) +  \frac{\eta_l L \sigma^2}{M} + 16\eta_l^2 E^2 L^2 \bar \gamma^2.+ \frac{\bar \eta_g (G^2+ \psi_2) + \bar \eta_g^2 L\psi_2}{E\eta_l}
    \end{equation}
    Let $\eta_l = \frac{1}{LE\sqrt {T}}$ and $\eta_g = \frac{1}{L_R RT}$. Then,
    \begin{equation}
        \frac{1}{ET} \sum_{t=0}^{T-1}\sum_{e=0}^{E-1} \|\nabla \gL(\bar w^{t,e})\|^2 \leq \frac{2L}{\sqrt {T}} \big(\gL(w^0) - \gL(w^{T-1})\big) + 
        \frac{\sigma^2}{EM\sqrt T} + 16 \frac{\bar\gamma^2}{T} + \frac{L (G^2+ \psi_2) + L^2 \psi_2/L_R T}{EL_R \sqrt T}
    \end{equation}
\end{proof}

\subsection{Proofs for Personalized Model Convergence Guarantee in \cref{thm:convergence2}}
\label{app:proofs-per-model-converge}

\paragraph{Additional notations}

Let 
\begin{align}
\bar \eta_p  = S \eta_p
\end{align}
Based on the update rules, we define  $\delta_{v_m}^t$  as below, which capture the update of personalized model during client training.
\begin{align}
    v_m^{t+1}- v^{t} = - \bar \eta_p \delta_{v_m}^t 
\end{align}
That is: 
\begin{equation} \label{eq:def_delta_v_m}
	 \delta_{v_m}^t := - \frac{1}{\eta_p S}( v_m^{t+1}- v^{t})=    \frac{1}{S}\sum_{s=0}^{S-1}  \widetilde{\nabla} P_m({v}_m^{t,s}, w^t  ) =   \frac{1}{S}\sum_{s=0}^{S-1} \left(\widetilde{\nabla} \gL_m( {v}_m^{t,s})  + \lambda \left( {v}_m^{t,s} - w^t\right)\right) 
\end{equation}

\paragraph{Proof Outline}

 The goal is to bound the gradients of personalized models  w.r.t the (\ref{eq:per-obj}), which is used to show that the trained models can converge to the stationary points: 
\begin{equation}
      \frac{1}{TS} \sum_{t=0}^{T-1} \sum_{s=0}^{S-1} \E\|\nabla_v P_m(v_m^{t,s}, w^t) \|^2
\end{equation}

\paragraph{Supporting lemmas}

We first introduce some supporting lemmas:

\begin{lemma}
When $\eta_p \leq \frac{1}{L+\lambda}$,
\label{lm:personalized}
    \begin{equation}
        \frac{1}{TS} \sum_{t=0}^{T-1} \sum_{s=0}^{S-1} \|\nabla_v P_m(v_m^{t,s}, w^t) \|^2 \leq \frac{2(P_m(v^0, w^0) - P_m(v^T, w^T))}{\eta_p TS} + (L+\lambda)\eta_p \delta^2 + \frac{1}{TS}\sum_{t=0}^{T-1}\frac{G_P  \E \|w^{t+1} - w^{t}\|_2}{\eta_p}
    \end{equation}
\end{lemma}
\begin{proof}
Let $\eta_p \leq \frac{1}{L+\lambda}$.
    \begin{align}
       \E [  P_m(v_m^{t,s+1}, w^t) - P_m(v_m^{t,s}, w^t)] & \leq \E \langle   \nabla_v P_m(v_m^{t,s}, w^t), -\eta_p \tilde \nabla_v P_m(v_m^{t,s}, w^t)  \rangle + \frac{(L+\lambda)\eta_p^2}{2}  \E \|\tilde \nabla_v P_m(v_m^{(t,s)}, w^t)\|^2 \tag{\cref{asp:smooth}} \\
        &\leq -\eta_p \|\nabla_v P_m(v_m^{t,s}, w^t)\|^2 + \frac{(L+\lambda)\eta_p^2}{2}(\sigma^2 + \|\nabla_v P_m(v_m^{t,s} - w^t)\|^2)  \\
        &\leq -\frac{\eta_p}{2} \|\nabla_vP_m(v_m^{t,s}, w^t)\|^2 + \frac{(L+\lambda) \eta_p^2 \sigma^2}{2} \tag{By $\eta_p \leq \frac{1}{L+\lambda}$ }.
    \end{align}

    Taking a summation $\sum_{s=0}^{S-1}$, 
    \begin{equation}
        \E [ P(v_m^{t+1}, w^{t}) - P(v_m^{t}, w^t)] \leq -\frac{\eta_p}{2}\sum_{s=0}^{S-1} \|\nabla_v P_m(v_m^{t,s},w^t)\|^2 + \frac{(L+\lambda)S\eta_p^2 \sigma^2}{2}.
    \end{equation}

    We next bound $P_m(v_m^{t+1},w^{t+1}) - P_m(v_m^{t+1},w^t)$. Since bounded gradient implies Lipschitz function,  $P_m$ is $G_P$ Lipschitz in terms of $w$, i.e., 
    \begin{equation}
       \E[  P_m(v_m^{t+1},w^{t+1}) - P_m(v_m^{t+1},w^t) ] \leq G_p  \E \|w^{t+1} - w^t\|_2. \tag{\cref{asp:boundedgrad}}
    \end{equation}

    Combine the two statements, rearrange, and take the sum $\sum_{t=0}^{T-1}$ on both side:
    \begin{equation}
         \frac{1}{TS} \sum_{t=0}^{T-1} \sum_{s=0}^{S-1}  \E \|\nabla_v P_m(v_m^{t,s}, w^t) \|^2 \leq \frac{2(P_m(v^0, w^0) - P_m(v^T, w^T))}{\eta_p TS} + (L+\lambda)\eta_p \delta^2 + \frac{1}{TS}\sum_{t=0}^{T-1}\frac{2G_P  \E\|w^{t+1} - w^{t}\|_2}{\eta_p}
    \end{equation}
\end{proof}

\begin{lemma}
\label{lm:wt}
When $\eta_l = \frac{1}{EL \sqrt T}$,
    \begin{equation}
         \E \|w^{t+1} - w^t\|^2 \leq 8 \eta_l^2 \sigma^2 + \frac{\eta_l^2\phi_1}{T} + 4\eta_l^2\mathbb E\|\nabla \gL(w^t)\|^2 + 2\eta_g^2 R^2 G_R^2. 
    \end{equation}
    where $\phi_1 = 64(3\bar \gamma  + \frac{2\sigma^2}{E})$.
\end{lemma}
\begin{proof}
By definition
    \begin{align}
       \E \| w^{t+1} - w^t\|^2 &= \E  \|\eta_l E \frac{1}{M}\sum_{m=1}^M g_m^t + \eta_g Rg^t \|^2 \nonumber\\
        &\leq 2\eta_l^2 E^2 \E  \| \frac{1}{M}\sum_{m=1}^M g_m^t\|^2   +  2\eta_g^2 R^2 \E  \| g^t \|^2 
    \end{align}
For the first term,
    \begin{align*}
        \mathbb E\|\frac{1}{M}\sum_{m} g_m^t\|^2 &= \mathbb E \|\frac{1}{M} \sum_m \Big(\frac{1}{E} \sum_{e=0}^{E-1} \tilde \nabla \gL_m(\theta_m^{t,e}) - \nabla \gL_m(w^t)\Big) + \nabla \gL (w^t)\|^2\\
        &\leq \frac{2}{EM} \sum_{m=1}^{M}\sum_{e=0}^{E-1} \mathbb E \|\tilde \nabla \gL_m(\theta_m^{t,e}) - \nabla \gL_m(w^t) \|^2 + 2\mathbb E\|\nabla \gL(w^t)\|^2\\
&\leq 4\sigma^2 + 64\eta_l^2 E^2 L^2 (3\bar \gamma + 3\E  \left\|   \nabla \gL(w^t) \right\|^2 + \frac{2\sigma^2}{E}) + 2\mathbb E\|\nabla \gL(w^t)\|^2 
        \tag{\cref{lm:bound_local_client_drift}}\\
    &= 4\sigma^2 +  64\frac{1}{T} (3\bar \gamma + 3\E  \left\|   \nabla \gL(w^t) \right\|^2 + \frac{2\sigma^2}{E}) + 2\mathbb E\|\nabla \gL(w^t)\|^2 \\
      &= 4\sigma^2 +  \frac{1}{T}\phi_1  +  (2+ \frac{ 192}{T} ) \mathbb E\|\nabla \gL(w^t)\|^2 
    \end{align*}
For the second term, recall \cref{eq:define_psi_2}, then we have
        \begin{equation}
             \mathbb E\|g^t\|^2 \leq \psi_2
        \end{equation}
    
Therefore,
\begin{align*}
    \mathbb E   \|w^{t+1} - w^t\|^2 &\leq 2\eta_l^2 E^2 \E\|\frac{1}{M} \sum_{m=1}^M g_m^t\|^2 + 2\eta_g^2 R^2 (\sigma_R^2 + G_R^2)\\
    &= 8 \eta_l^2 \sigma^2 + \frac{\eta_l^2\phi_1}{T} + 4\eta_l^2 (1+\frac{96}{T})\mathbb E\|\nabla \gL(w^t)\|^2 + 2\eta_g^2 R^2 \psi_2. 
\end{align*}
\end{proof}

\paragraph{Completing the proof of \cref{thm:convergence2}}

Recall \cref{thm:convergence2}:
\perconv*
Next, we combine \cref{lm:personalized} and \cref{lm:wt} to prove the above theorem.
\begin{proof}
From \cref{lm:personalized}, 
\begin{equation}
         \frac{1}{TS} \sum_{t=0}^{T-1} \sum_{s=0}^{S-1} \E\|\nabla_v P_m(v_m^{t,s}, w^t) \|^2 \leq \frac{2\Delta_{P_m}}{\eta_p TS} + (L+\lambda)\eta_p \sigma^2 + \frac{L+\lambda}{\sqrt TS}\sum_{t=0}^{T-1} 2G_P \E \|w^{t+1} - w^{t}\|_2
\end{equation}
Expand the last term according to \cref{lm:wt}.
    \begin{align*}
    \frac{1}{\sqrt T}\sum_{t=0}^{T-1} \E\|w^{t+1}- w^t\| &\leq \mathbb E \sqrt{\sum_{t=0}^{T-1} \|w^{t+1} - w^t\|^2} \tag{Taking square root for \cref{prop:jensen-ineq}}\\
    &\leq \sqrt{\sum_{t=0}^{T-1} \E\|w^{t+1} - w^t\|^2} \tag{Jensen's inequality  $\E [f(x)] \leq f(\E [x]) $ for the concave function $f(x)$ }\\
    &= \sqrt{8\sigma^2 \eta_l^2 T +  \eta_l^2\phi_1 + 4\sum_{t=0}^{T-1}  \eta_l^2\mathbb E\|\nabla \gL(w^t)\|^2 + 2\eta_g^2 R^2 \psi_2 T}\\
    &\leq \gO(\sigma \sqrt T\eta_l) + \eta_l \sqrt{\phi_1} + 2\eta_l \cdot \sqrt{\sum_{t=0}^T \sum_{e=0}^{E-1} \mathbb E\|\nabla \gL (\bar w^{t,e})\|^2} + \gO(\eta_g R \sqrt{\psi_2} \sqrt T) \tag{$\sqrt{\sum_{i=1}^n x_i^2} \leq \sum_{i=1}^n x_i$ for $x_i \geq 0$} \\
    &\leq \gO(\frac{\sigma}{LE}) + \frac{\sqrt{\phi_1}}{LE\sqrt T} + \frac{2}{L\sqrt {E}} \gO\Big(T^{-1/4}(L\Delta_{\gL}+\psi_1)^{1/2} + \frac{\bar \gamma}{\sqrt T}+T^{-3/4}\frac{L \sqrt {\psi_2}}{L_R \sqrt E}\Big) + \frac{\sqrt{\psi_2}}{\sqrt T}.
\end{align*}

\begin{align*}
 &\frac{1}{TS} \sum_{t=0}^{T-1} \sum_{s=0}^{S-1} \E\|\nabla_v P_m(v_m^{t,s}, w^t) \|^2 \\
 & \leq \frac{2\Delta_{P_m} (L+\lambda)}{ \sqrt TS} + \frac{\sigma^2}{ \sqrt{T} }   \\
 & + \frac{(L+\lambda) 2G_P }{S} \left (\gO(\frac{\sigma}{LE}) + \gO\left( (\frac{\sqrt{\phi_1}}{LE } + \sqrt{\psi_2}  + \frac{2 \bar\gamma}{L\sqrt {E}} ) \frac{1}{\sqrt T} \right)   + \frac{2}{L\sqrt {E}} \gO\Big(T^{-1/4}(L\Delta_{\gL}+\psi_1)^{1/2} +T^{-3/4}\frac{L \sqrt {\psi_2}}{L_R \sqrt E}\Big)  \right) \\
  & = \gO( \frac{(L+\lambda)\Delta_{P_m}+\phi_2}{\sqrt T S}) +  \frac{(L+\lambda) 2G_P }{S} \left (\gO(\frac{\sigma}{LE})   + \frac{2}{L\sqrt {E}} \gO\Big(T^{-1/4}(L\Delta_{\gL}+\psi_1)^{1/2} +T^{-3/4}\frac{L \sqrt {\psi_2}}{L_R \sqrt E}\Big)  \right)
\end{align*}

\end{proof}

\end{document}